\newtheorem{theorem}{Theorem}
\newtheorem{lemma}[theorem]{Lemma}
\newtheorem{proposition}[theorem]{Proposition}
\newtheorem{definition}{Definition}
\newtheorem{example}{Example}
\title{WeightedSHAP: analyzing and improving Shapley based feature attributions} 
\author{
  Yongchan Kwon\\
  Columbia University \\
  New York, NY, 10027\\
  \texttt{yk3012@columbia.edu}
  \and
  James Zou\\
  Stanford University and Amazon AWS\\
  Stanford, CA, 94305\\
  \texttt{jamesz@stanford.edu}
}
\date{\today}
\begin{document}

\maketitle

\begin{abstract}
Shapley value is a popular approach for measuring the influence of individual features. While Shapley feature attribution is built upon desiderata from game theory, some of its constraints may be less natural in certain machine learning settings, leading to unintuitive model interpretation. In particular, the Shapley value uses the same weight for all marginal contributions---i.e. it gives the same importance when a large number of other features are given versus when a small number of other features are given. This property can be problematic if larger feature sets are more or less informative than smaller feature sets. Our work performs a rigorous analysis of the potential limitations of Shapley feature attribution. We identify simple settings where the Shapley value is mathematically suboptimal by assigning larger attributions for less influential features. Motivated by this observation, we propose WeightedSHAP, which generalizes the Shapley value and learns which marginal contributions to focus directly from data. On several real-world datasets, we demonstrate that the influential features identified by WeightedSHAP are better able to recapitulate the model's predictions compared to the features identified by the Shapley value.
\end{abstract}

\section{Introduction}
\label{sec:intro}
Explaining how a feature impacts a model prediction is a crucial question in machine learning (ML) as it provides a deeper understanding of how the model behaves and what insights have been extracted from data. In many real-world applications, it has been increasingly common to deploy complicated models such as a deep neural network model or a random forest to achieve high predictability. However, it often comes with a cost of unintuitive interpretations, and it naturally calls for a principled and practical attribution method. The goal of this work is to quantify the contribution of individual features to a particular prediction, also known as the attribution problem. 

\citet{lundberg2017unified} proposed a model-agnostic attribution method, SHapley Additive exPlanations (SHAP), based on the Shapley value from economics. Supported by theoretical properties that the Shapley value satisfies, SHAP has been a popular method in the attribution literature \citep{janzing2020feature, sundararajan2020many}. For instance, \citet{frye2020shapley} and \citet{aas2021explaining} developed the SHAP algorithms for dependent features, and \citet{heskes2020causal} and \citet{wang2021shapley} proposed a rigorous framework in causal inference settings. One of the practical problems to use SHAP is the heavy computational costs, and there have been works on improving computational efficiency \citep{covert2021improving, lundberg2020local, jethani2021fastshap}. In addition, SHAP has been deployed to various applied scientific research \citep{lundberg2018explainable, janizek2021uncovering, qiu2022interpretable}.

The Shapley value, a mathematical basis of SHAP, is a simple average of the marginal contributions that quantify the average change in a coalition function when a feature of interest is added from a subset of features with a given coalition size. There are different version of the marginal contributions by the coalition size, and the Shapley value takes a uniform weight to summarize the influence of a feature. This uniform weight arises due to the efficiency axiom of the Shapley value, which requires the sum of attributions to equal the original model prediction, but it is often problematic because some may be more informative than others. As we will show in Section~\ref{sec:analysis_of_marginal_contrib}, the Shapley value is not optimal to sort features in order of influence on a model prediction.

\paragraph{Our contributions.} While SHAP is widely used for feature attribution, its limitations are still not rigorously understood. We first show the suboptimality of the Shapley value through an analysis of the marginal contributions. We identify a key limitation of the Shapley value in that it assigns uniform weights to marginal contributions in computing the attribution score. We show that this can lead to attribution mistakes when different marginal contributions have different signal and noise. Motivated by this analysis, we propose WeightedSHAP, a generalization of the Shapley value which is more flexible. WeightedSHAP uses a weighted average of marginal contributions where the weights can be learned from the data. On several real-world datasets, our experiments demonstrate that WeightedSHAP is better able to identify influential features that recapitulate a model's prediction compared to a standard SHAP. WeightedSHAP is a simple modification of SHAP and is easy to adapt from existing SHAP implementations.

\subsection{Related works}
\paragraph{Model interpretation}
There are mainly two types of model interpretation depending on the quantity to be accounted for; global and local interpretations. The global interpretation is to explain the impact of a feature on a prediction model across the entire dataset \citep{lipovetsky2001analysis, breiman2001random, owen2014sobol, broto2020variance, zhao2021causal, benard2022shaff}. For instance, for a decision tree model, \citet{breiman2017classification} measures the total decrease of node impurity at node split by a feature of interest as an impact. In contrast, the local interpretation is to explain the impact of a feature on a particular prediction value \citep{lundberg2017unified, chen2018learning, janzing2020feature, lundberg2020local, heskes2020causal, jethani2021fastshap}. For a deep neural network model, a gradient-based method uses the gradient evaluated at a particular input sample as an impact \citep{simonyan2013deep, sundararajan2017axiomatic, ancona2017towards, selvaraju2017grad, adebayo2018sanity}.
Our work studies the local interpretation problem with a focus on a marginal contribution-based method, which we review in Section~\ref{sec:preliminaries}. The marginal contribution-based method is potentially advantageous over a gradient-based method as it does not require the differentiability of a prediction model.

\paragraph{Shapley value and its extension}
The Shapley value, introduced as a fair division method from economics \citep{shapley1953}, has been deployed in various ML problems. One leading application is data valuation, where the main goal is to quantify the impact of individual data points in model training. 
\citet{ghorbani2019data} and \citet{jia2019} propose to use the Shapley value for measuring the individual data value, and this concept has been extended to handle the randomness of data \citep{ghorbani2020distributional, kwon2021efficient}. As for the other applications of the Shapley value, model explainability \citep{ghorbani2020neuron}, model valuation \citep{rozemberczki2021shapley}, federated learning \citep{liu2021gtg}, and multi-agent reinforcement learning \citep{li2021shapley} have been studied. We refer to \citet{rozemberczki2022shapley} for a complementary literature review of ML applications of the Shapley value. 

The relaxation of the Shapley axioms has been one of the central topics in cooperative game theory \citep{shapley1953additive, banzhaf1964weighted, kalai1987weighted, weber1988probabilistic}. Recently, \citet{kwon2021beta} propose to relax the efficiency axiom in the data valuation problem, showing promising results in the low-quality data detection task. Given that Shapley axioms are often not readily applicable to ML problems, relaxing them has the potential to capture a better notion of significance. In this work, we explore the benefits of relaxation of the efficiency axiom on the attribution problem.

\section{Preliminaries}
\label{sec:preliminaries}
We review the marginal contribution and the Shapley value in the context of an attribution problem. We first introduce some notations. For $d \in \mathbb{N}$, let $\mathcal{X} \subseteq \mathbb{R}^d$ and $\mathcal{Y} \subseteq \mathbb{R}$ be an input space and an output space, respectively. We use a capital letter $X=(X_1, \dots, X_d)$ for an input random variable defined on $\mathcal{X}$, and a lower case letter $x=(x_1, \dots, x_d)$ for its realized value. We denote a prediction model by $\hat{f}:\mathcal{X} \to \mathcal{Y}$. For $j \in \mathbb{N}$, we set $[j]:=\{1, \dots, j\}$ and denote a power set of $[j]$ by $2^{[j]}$. For a vector $u \in \mathbb{R}^d$ and a subset $S=(j_1, \dots, j_{|S|}) \subseteq [d]$, we denote a subvector by $u_S := (u_{j_1}, \dots, u_{j_{|S|}})$. We assume that $X$ has a joint distribution $p(X)$ such that a conditional distribution $p(X_{[d]\backslash S} \mid X_S)$ is well-defined for any subset $S\subsetneq[d]$. With the notations, a conditional coalition function $v_{x, \hat{f}} ^{\mathrm{(cond)}}: 2 ^{[d]} \to \mathbb{R}$ is defined as follows \citep{lundberg2017unified}.
\begin{align}
    v_{x, \hat{f}} ^{\mathrm{(cond)}} (S) := \mathbb{E} [ \hat{f}(x_S, X_{[d] \backslash S} ) \mid X_S =x_S ] - \mathbb{E} [\hat{f}(X)], 
    \label{eqn:conditional_coalition}
\end{align}
where the first expectation is taken with a conditional distribution $p(X_{[d] \backslash S} \mid X_S=x_S)$ and the second expectation is taken with a joint distribution $p(X)$. Here, we use a slight abuse of notation for $\hat{f}(x_S, X_{[d] \backslash S} )$ to describe $f(u)$ where $u_i = x_i$ if $i \in S$, and $u_i = X_i$ otherwise. By convention, we set $v_{x, \hat{f}} ^{\mathrm{(cond)}} ([d]) := \hat{f}(x) -\mathbb{E} [\hat{f}(X)]$ and $v_{x, \hat{f}} ^{\mathrm{(cond)}} (\emptyset) := 0$.
A conditional coalition function defined in Equation~\eqref{eqn:conditional_coalition} is a prediction recovered after observing partial information $x_S$ compared to the null information. For instance, if $S=\emptyset$, the first term becomes the marginal expectation $\mathbb{E} [\hat{f}(X)]$ and nothing is recovered by $S=\emptyset$. For ease of notation, we write $v_{x, \hat{f}} ^{\mathrm{(cond)}} (S) = v ^{\mathrm{(cond)}} (S)$ for remaining part of the paper.

\paragraph{Marginal contribution-based attribution methods}
Given that the goal of the attribution problem is to assign the significance of an individual feature $x_i$ on the prediction $\hat{f}(x)$, its primary challenge is how to measure the influence of the feature $x_i$. A leading approach is to quantify the difference in the conditional coalition function values $v^{\mathrm{(cond)}}$ after adding one feature of interest. We formalize this concept below.
\begin{definition}[Marginal contribution]
For $i, j \in [d]$, we define the marginal contribution of the $i$-th feature $x_i$ with respect to $j-1$ features as follows.
\begin{align}
    \Delta_{j}( x_i ) &:= \frac{1}{\binom{d-1}{j-1}}\sum_{S \subseteq [d] \backslash\{i\}, |S|=j-1 } v ^{\mathrm{(cond)}}(S\cup \{i\}) - v ^{\mathrm{(cond)}}(S).
    \label{eqn:marginal_contribution}
\end{align}
\label{def:marginal_contribution}
\end{definition}
The marginal contribution $\Delta_{j}(x_i)$ considers every possible subset $S \subseteq [d]\backslash \{i\}$ with the coalition size $|S|=j-1$ and takes a simple average of the difference $v ^{\mathrm{(cond)}}(S\cup \{i\}) - v ^{\mathrm{(cond)}}(S)$. That is, it measures the average contribution of the $i$-th feature $x_i$ when it is added to a subset $S$.

Different marginal contributions $\Delta_{j}( x_i )$ have been studied depending on the coalition size $j$ in the literature. \citet{zintgraf2017visualizing} considered $j=d$ and measured the leave-one-out marginal contribution $\Delta_{d}(x_i) = \hat{f}(x) - \mathbb{E} [ \hat{f}(x_{[d] \backslash \{i\}}, X_{i} ) \mid X_{[d] \backslash \{i\}} =x_{[d] \backslash \{i\}} ]$ as an influence of a feature. \citet{guyon2003introduction} considered $j=1$ and measured the coefficient of determination as an influence of a feature. Although they did not use an individual prediction, their idea is essentially similar to using $\Delta_{1}(x_i) = \mathbb{E} [ \hat{f}(x_i, X_{[d] \backslash \{i\}} ) \mid X_i = x_i ] - \mathbb{E} [\hat{f}(X)]$.

Another widely used marginal contribution-based method is the Shapley value \citep{lundberg2017unified, covert2021improving}. It summarizes the impact of one feature by taking a simple average across all marginal contributions. To be more specific, the Shapley value is defined as follows.
\begin{align}
    \phi_{\mathrm{shap}}(x_i) := \frac{1}{d} \sum_{j=1} ^{d} \Delta_{j}(x_i).
    \label{eqn:shapley_value}
\end{align}
The Shapley value in \eqref{eqn:shapley_value} is known as the unique function that satisfies the four axioms of a fair division in cooperative game theory \citep{shapley1953}. The four axioms and the uniqueness of the Shapley value are discussed in more detail in Appendix.

Although the Shapley value provides a principled framework in game theory, one critical issue is that the economic notion of the Shapley axioms is not intuitively applicable to the attribution problem \citep{kumar2020problems, rozemberczki2022shapley}. In particular, the efficiency axiom, which requires the sum of the attributions to be equal to $v^{\mathrm{cond}}([d])$, is not necessarily essential because an order of attributions is invariant to the constant multiplication. For instance, for any positive constant $C>0$, an attribution  $\phi_C(x_i) := C \times \phi_{\mathrm{shap}} (x_i)$ will have the same order as the Shapley value $\phi_{\mathrm{shap}}$, but the efficiency axiom is not required for $\phi_C$. In Section~\ref{sec:proposed}, we will revisit this point and introduce a new attribution method that relaxes the efficiency axiom. 

\paragraph{Evaluation metrics for the attribution problem.}
In the literature, different notions of goodness have been proposed, for instance, the complete axiom \citep{sundararajan2017axiomatic, shrikumar2017learning}, the local Lipschitzness \citep{alvarez2018robustness}, and the explanation infidelity \citep{yeh2019fidelity} with a focus on the total sum of attributions or the sensitivity of attributions.
Recently, \citet{jethani2021fastshap} suggested using the \textit{Inclusion AUC} to assess the goodness of an order of attributions. Specifically, the Inclusion AUC is measured as follows: Given an attribution method, features are first ranked based on their attribution values. Then the area under the receiver operating characteristic curve (AUC) is iteratively evaluated by adding features one by one from the most influential to the least influential. This procedure generates a AUC curve as a function of the number features added, and the Inclusion AUC is defined as the area under this curve. Similar evaluation metrics have been used in \citet{petsiuk2018rise} and \citet{lundberg2020local}. Following the literature, we consider the area under the prediction recovery error curve (AUP) defined as follows.

\begin{definition}[Area under the prediction recovery error curve]
For a given attribution method $\phi$, an input $x \in \mathcal{X}$, and $k\in[d]$, let $\mathcal{I}(k; \phi, x) \subseteq [d]$ be a set of $k$ integers that indicates $k$ most influential features based on their absolute value $|\phi(x_j)|$. For a prediction model $\hat{f}$, we define the area under the prediction recovery error curve at $x$ as follows.
\begin{align}
    \mathrm{AUP}(\phi; x, \hat{f}) := \sum_{k=1} ^d \left| \hat{f}(x)-\mathbb{E}[\hat{f}(X) \mid X_{\mathcal{I}(k; \phi, x)}=x_{\mathcal{I}(k; \phi, x)} ] \right|.
    \label{eqn:prediction_recovery}
\end{align}
\end{definition}
AUP is defined as the sum of the absolute differences between the original prediction $\hat{f}(x)$ and its conditional expectation $\mathbb{E}[\hat{f}(X) \mid X_{\mathcal{I}(k; \phi, x)}=x_{\mathcal{I}(k; \phi, x)} ]$ when the $k$ most influential features are given. Each term in Equation~\eqref{eqn:prediction_recovery} measures the amount of a prediction that is not recovered by the $k$ most influential features, and thus this prediction recovery error is expected to decrease as $k$ increases. The prediction recovery error can be described as a function of $k$, and the AUP measures the area under this function as in the Inclusion AUC. 

\section{The Shapley value is suboptimal}
\label{sec:analysis_of_marginal_contrib}
In this section, we show that the suboptimality of the Shapley value through a rigorous analysis of the marginal contribution. We first derive a useful closed-form expression of the marginal contribution when $\hat{f}$ is linear and $p(X)$ is Gaussian (Section~\ref{sec:closed_form}). 
With this theoretical result, we present two simulation experiments where the Shapley value incorrectly reflects the influence of features, resulting in a suboptimal order of attributions (Section~\ref{sec:motivational_examples}).

\subsection{A closed-form expression of the marginal contribution}
\label{sec:closed_form}
To this end, we assume that a prediction model $\hat{f}$ is linear and an input distribution $p(X)$ follows a Gaussian distribution with zero mean and a block diagonal covariance matrix. To be more specific, we define some notations. For $B \in \mathbb{N}$, we set a vector $\mathbf{d}=(d_1, \dots, d_B) \in \mathbb{N}^B$ such that $\sum_{b=1} ^B d_b = d$ and a vector $\pmb{\rho} := (\rho_1, \dots, \rho_B) \in [0,1)^{B}$. We denote a $d \times d$ block diagonal covariance matrix by $\Sigma_{\pmb{\rho}, \mathbf{d}} ^{\mathrm{(block)}} = \mathrm{diag}\left( \Sigma_{(\rho_1, d_1)}, \dots, \Sigma_{(\rho_B, d_B)} \right)$ where  $\Sigma_{(\rho_b,d_b)}=(1-\rho_b) I_{d_b} + \rho_b \mathds{1}_{d_b} \mathds{1}_{d_b} ^T$. Here, for $j \in \mathbb{N}$, we denote the $j \times j$ identity matrix by $I_j$, the $j$-dimensional vector of ones by $\mathds{1}_j := (1, \dots, 1)^T \in \mathbb{R}^{j}$ and $\mathbf{0}_j := 0 \times \mathds{1}_j$. Lastly, we denote a Gaussian distribution with a mean vector $\mu$ and a covariance matrix $\Sigma$ by $\mathcal{N}(\mu, \Sigma)$. With the notations, we assume $X \sim \mathcal{N}(\mathbf{0}_d, \Sigma_{\pmb{\rho}, \mathbf{d}} ^{\mathrm{(block)}})$. That is, every feature is normalized to have a unit variance and is included in one of $B$ independent clusters. For $j\in[B]$, the size of the $j$-th cluster is $d_j$, and features are equally correlated to each other within a cluster. The correlation levels can vary from cluster to cluster.

In general, the marginal contribution in Equation~\eqref{eqn:marginal_contribution} does not have a closed-form expression, and it makes a rigorous analysis of the Shapley value difficult. In the following theorem, we derive a closed-form expression of the marginal contribution when $\hat{f}$ is linear and $X \sim \mathcal{N}(\mathbf{0}_d, \Sigma_{\pmb{\rho}, \mathbf{d}} ^{\mathrm{(block)}})$.
\begin{theorem}[A closed-form expression for the marginal contribution]
Suppose $\hat{f}(x) = \hat{\beta}_0 + x^T \hat{\beta}$ for some $(\hat{\beta}_0, \hat{\beta}) \in \mathbb{R} \times \mathbb{R}^d$ and $X \sim \mathcal{N}(\mathbf{0}_d, \Sigma_{\pmb{\rho}, \mathbf{d}} ^{\mathrm{(block)}})$. Then, for $i, j \in [d]$, the marginal contribution of the $i$-th feature $x_i$ with respect to $j-1$ samples is expressed as
\begin{align*}
    \Delta_{j}(x_i) = x^T H(i, j) \hat{\beta},
\end{align*}
for some explicit matrix $H(i,j) \in \mathbb{R}^{d \times d}$.
\label{thm:marginal_contribution}
\end{theorem}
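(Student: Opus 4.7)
The plan is to exploit linearity twice: once in the model $\hat f$, and once in the Gaussian conditional expectation. Since $\hat f(x)=\hat\beta_0 + x^T\hat\beta$ and $\mathbb{E}[\hat f(X)]=\hat\beta_0$, the coalition function simplifies to
\begin{align*}
v^{\mathrm{(cond)}}(S) = x_S^T \hat\beta_S + \mathbb{E}[X_{[d]\setminus S}\mid X_S = x_S]^T \hat\beta_{[d]\setminus S}.
\end{align*}
By the standard Gaussian conditioning formula, $\mathbb{E}[X_{[d]\setminus S}\mid X_S = x_S] = \Sigma_{[d]\setminus S,S}\, \Sigma_{S,S}^{-1}\, x_S$, so $v^{\mathrm{(cond)}}(S)$ is a bilinear function of $(x,\hat\beta)$, i.e.\ there exists a matrix $M(S)\in\mathbb{R}^{d\times d}$ with $v^{\mathrm{(cond)}}(S)=x^T M(S)\hat\beta$, depending only on $S$, $\pmb\rho$, $\mathbf d$. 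The first step I would carry out is to write $M(S)$ explicitly in block form: its $(k,\ell)$ entry equals $\mathds 1[k=\ell\in S]$ plus, when $k\in S$ and $\ell\notin S$, the $(k,\ell)$ entry of $\Sigma_{S,S}^{-1}\Sigma_{S,[d]\setminus S}$.

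Next I would exploit the block-diagonal covariance to make these entries truly explicit. Since features in different blocks are independent, $\mathbb{E}[X_\ell\mid X_S=x_S]=\mathbb{E}[X_\ell\mid X_{S_b}=x_{S_b}]$ whenever $\ell$ lies in block $b$, where $S_b := S\cap B_b$ and $B_b$ is the index set of the $b$-th block. Within block $b$, the covariance equals $(1-\rho_b)I_{|S_b|}+\rho_b\mathds 1\mathds 1^T$, which Sherman--Morrison inverts in closed form, yielding
\begin{align*}
\mathbb{E}[X_\ell \mid X_{S_b}=x_{S_b}] = \frac{\rho_b}{1+(|S_b|-1)\rho_b}\sum_{k\in S_b} x_k
\end{align*}
for $\ell\in B_b\setminus S_b$. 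Substituting this into the expression for $M(S)$ gives a block-wise formula whose only dependence on $S$ is through the cluster sizes $|S_b|$ and the indicator that $k$ and $\ell$ belong to the same block.

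With $M(S)$ in hand, the claim is immediate by linearity in the definition of $\Delta_j(x_i)$: setting
\begin{align*}
H(i,j) := \frac{1}{\binom{d-1}{j-1}}\sum_{\substack{S\subseteq [d]\setminus\{i\} \\ |S|=j-1}}\bigl(M(S\cup\{i\}) - M(S)\bigr),
\end{align*}
we obtain $\Delta_j(x_i) = x^T H(i,j)\hat\beta$. Moreover, $H(i,j)$ is ``explicit'' in the sense required by the theorem: each entry is a finite weighted sum of the block-wise quantities $\rho_b/(1+(|S_b|-1)\rho_b)$ derived above.

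The main obstacle is purely combinatorial bookkeeping: collecting the contributions of the many subsets $S$ with $|S|=j-1$ into a single matrix with a readable closed form. Because features inside a cluster are symmetric and blocks are independent, the sum over $S$ decouples across blocks and reduces to enumerating $|S_b|$ within each cluster, which is a hypergeometric-type count. I would therefore organize the final expression according to (i) whether $k$ and $\ell$ lie in the same block as $i$, and (ii) the cluster size $|S_b|$, so that $H(i,j)$ becomes a block matrix whose entries are simple rational functions of $\pmb\rho$ and $\mathbf d$ weighted by binomial coefficients.
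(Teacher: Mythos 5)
Your proposal is correct and follows essentially the same route as the paper's proof: Gaussian conditioning plus Sherman--Morrison gives $v^{\mathrm{(cond)}}(S)$ as a bilinear form in $(x,\hat\beta)$ with the same coefficient $\rho_b/(1+\rho_b(|S_b|-1))$, independence across blocks reduces everything to the cluster containing $i$, and the average over subsets of fixed size collapses via symmetry to hypergeometric/binomial counts (the paper organizes this as a single-block lemma followed by a Vandermonde-identity argument, which is exactly the bookkeeping you defer). The only substantive difference is that the paper carries the combinatorics through to a sum-free closed form for $H(i,j)$, whereas you stop at the (still explicit) finite sum and a sketch of how it decouples.
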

A proof and the explicit term for $H(i,j)$ are provided in Appendix. Theorem~\ref{thm:marginal_contribution} shows that the marginal contribution is a bilinear function of an input $x$ and the estimated regression coefficient $\hat{\beta}$. One direct consequence is that the Shapley value also has a bilinear form $\phi_{\mathrm{shap}}(x_i)= x^T H(i) \hat{\beta}$ for $H(i) := \sum_{j=1} ^d H(i,j)/d$. We emphasize that this bilinear form greatly improves computational efficiency. Specifically, for all $i,j\in[d]$, since the term $H(i,j)\hat{\beta}$ is not a function of an input $x$, we only need to compute the one-time in multiple attribution computations. Moreover, it also leads to a memory efficient algorithm as there is no need to store the $d\times d$ matrix $H(i,j)$.

\begin{figure}[t]
    \centering
    \subfigure[Illustrations of the suboptimality of Shapley-based feature attributions on the four different situations when $d=2$.]{
      \centering
      \label{fig:motivation_gaussian_two_features}
      \includegraphics[width=\textwidth]{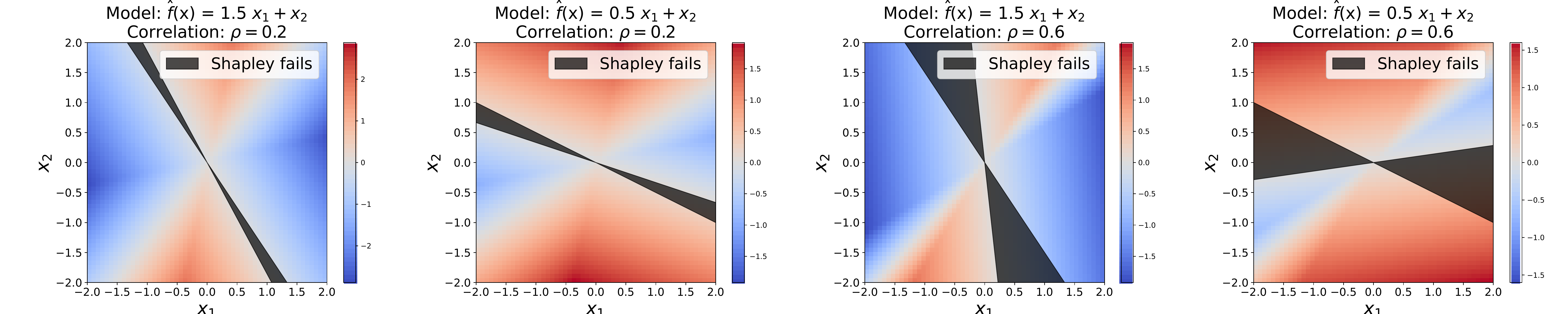}
    }
    \subfigure[Illustrations of a prediction recovery error curve and AUP comparison when $d=100$.]{
      \centering
      \label{fig:motivation_gaussian_more_than_two_features}
      \includegraphics[width=0.49\textwidth]{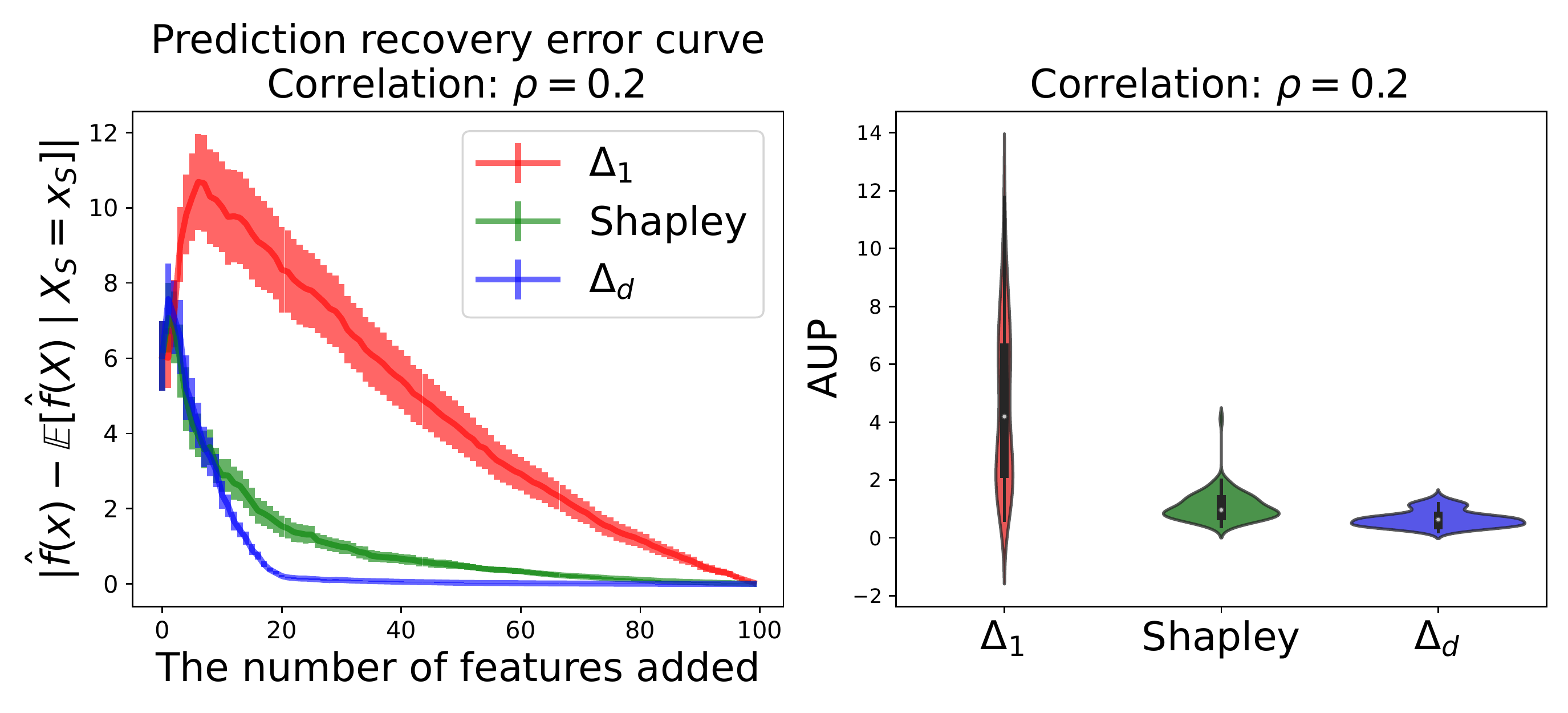}
      \includegraphics[width=0.49\textwidth]{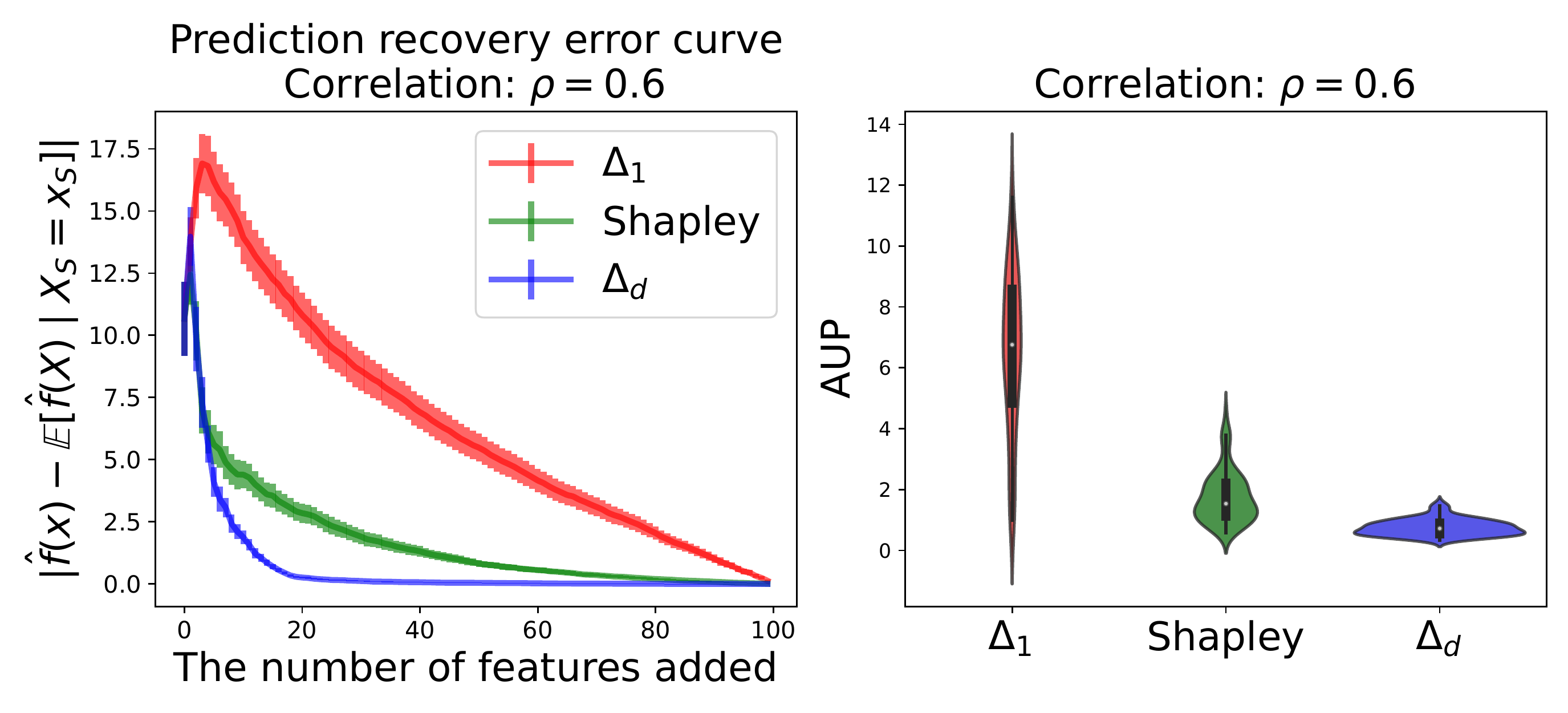}
    }
    \caption{The Shapley value is suboptimal. (Top) a region described in black denotes the area the Shapley value fails to select the more influential feature. We encode $\mathcal{E}(1; x, \hat{f})-\mathcal{E}(2; x, \hat{f})$ as the background color to visualize which feature is more influential. Blue color describes a region where the first feature $x_1$ is more influential, \textit{i.e.}, $\mathcal{E}(1; x, \hat{f}) < \mathcal{E}(2; x, \hat{f})$, and red describes a region where the second feature $x_2$ is more influential, \textit{i.e.}, $\mathcal{E}(1; x, \hat{f}) > \mathcal{E}(2; x, \hat{f})$. The intensity for $\mathcal{E}(1; x, \hat{f})-\mathcal{E}(2; x, \hat{f})$ is described in a color bar. (Bottom) we compare the three attribution methods $\Delta_1, \phi_{\mathrm{shap}},$ and $\Delta_d$ on the two different situations by varying correlation $\rho \in \{0.2, 0.6\}$. As for the prediction recovery error curve, we denote a 95\% confidence band based on 100 samples. The lower AUP is, the better. In both settings, the Shapley value is suboptimal according to AUP.}
\end{figure}

\subsection{Motivational examples}
\label{sec:motivational_examples}
With the theoretical result introduced in the previous subsection, we show that the Shapley-based feature attribution is suboptimal and fails to assign larger values to more influential features.

\paragraph{When there are two features.}
When $d=2$, there are only two possible values for AUP.
For any attribution method $\phi$, 
\begin{align*}
    \mathrm{AUP}(\phi; x, \hat{f})=\begin{cases}
    \mathcal{E}(1; x, \hat{f}) & \text{if  } \mathcal{I}(1; \phi, x)=\{1\}\\
    \mathcal{E}(2; x, \hat{f})  & \text{otherwise}
\end{cases},
\end{align*}
where $\mathcal{E}(k; x, \hat{f}) := \left| \hat{f}(x_1, x_2) - \mathbb{E}[\hat{f}(X_1, X_2) \mid X_k =x_k] \right|$ for $k \in \{1,2\}$. Therefore, the optimal order based on AUP is fully determined by $\mathcal{E}(1; x, \hat{f})$ and $\mathcal{E}(2; x, \hat{f})$, for instance, the first feature $x_1$ is more influential than the second one $x_2$ if $\mathcal{E}(1; x, \hat{f}) < \mathcal{E}(2; x, \hat{f})$. It is intuitively sensible because $\mathcal{E}(1; x, \hat{f}) < \mathcal{E}(2; x, \hat{f})$ means that the original prediction $\hat{f}(x)$ is more accurately recovered by the first feature $x_1$ than the second one $x_2$.

Using the optimal order, we demonstrate that the Shapley value does not necessarily assign a large attribution to a more influential feature. We consider the four different scenarios with two different prediction models $\hat{f}(x) \in \{1.5 x_1+x_2, 0.5 x_1+x_2\}$ and two different Gaussian distributions, $X \sim \mathcal{N} \left( \mathbf{0}_2, \Sigma_{(\rho, 2)} \right)$ for $\rho \in \{0.2, 0.6\}$. In these four scenarios, the terms $\mathcal{E}(1; x, \hat{f})$ and $\mathcal{E}(2; x, \hat{f})$ have a closed-form expression, and thus the optimal order is explicitly obtained. Moreover, due to Theorem~\ref{thm:marginal_contribution}, a more influential feature according to the Shapley value is explicitly obtained. 

Figure~\ref{fig:motivation_gaussian_two_features} illustrates the suboptimality of the Shapley value on the four different situations. In any situation, there is a non-negligible region (described in black) where the Shapley value fails to select a more influential feature. In addition, this suboptimal area increases as the correlation gets larger, showing that the Shapley value-based explainability becomes poor when features are highly correlated.

\paragraph{When there are more than two features}
When $d>2$, it is infeasible to find the exact optimal order because there are $2^{d-1}$ possible AUPs. For this reason, we compare the Shapley value with the two commonly used marginal contribution-based methods $\Delta_1$ and $\Delta_d$, showing the Shapley value is not optimal in terms of AUP. We assume the following setting: a trained model is linear $\hat{f}(x) = \hat{\beta}_0 + x^T \hat{\beta}$ for some $(\hat{\beta}_0, \hat{\beta}) \in \mathbb{R} \times \mathbb{R}^d$ and an input vector $X=(X_1, \dots, X_d)$ follows a Gaussian distribution $\mathcal{N} \left( \mathbf{0}_d, \Sigma_{(\rho, d)} \right)$. That is, there are $d$ features and they are equally correlated to each other with the correlation $\rho$. We set $d=100$ and consider two different situations by varying $\rho \in \{0.2, 0.6\}$. Similar to the previous analysis, due to Theorem~\ref{thm:marginal_contribution}, the three attribution methods are explicitly obtained. We evaluate the prediction recovery error and the AUP on the 100 held-out test samples randomly drawn from the distribution $\mathcal{N} \left( \mathbf{0}_d, \Sigma_{(\rho, d)} \right)$. Detailed information is provided in Appendix.

Figure~\ref{fig:motivation_gaussian_more_than_two_features} illustrates the suboptimality of the Shapley value when $d=100$ and $\rho\in\{0.2, 0.6\}$. In any situation, the prediction recovery curves for the Shapley value (described in green) have a steeper slope than $\Delta_1$ (described in red), but is not optimal as the $\Delta_d$ (described in blue) approaches to zero faster. When $\rho=0.6$, the suboptimality becomes more severe in that the gap between $\Delta_d$ and the Shapley value gets larger.

\section{Proposed method: WeightedSHAP}
\label{sec:proposed}

\begin{wrapfigure}[21]{R}{0.41\textwidth} 
    \vspace{-11pt}
    \centering
    \includegraphics[width=0.4\textwidth]{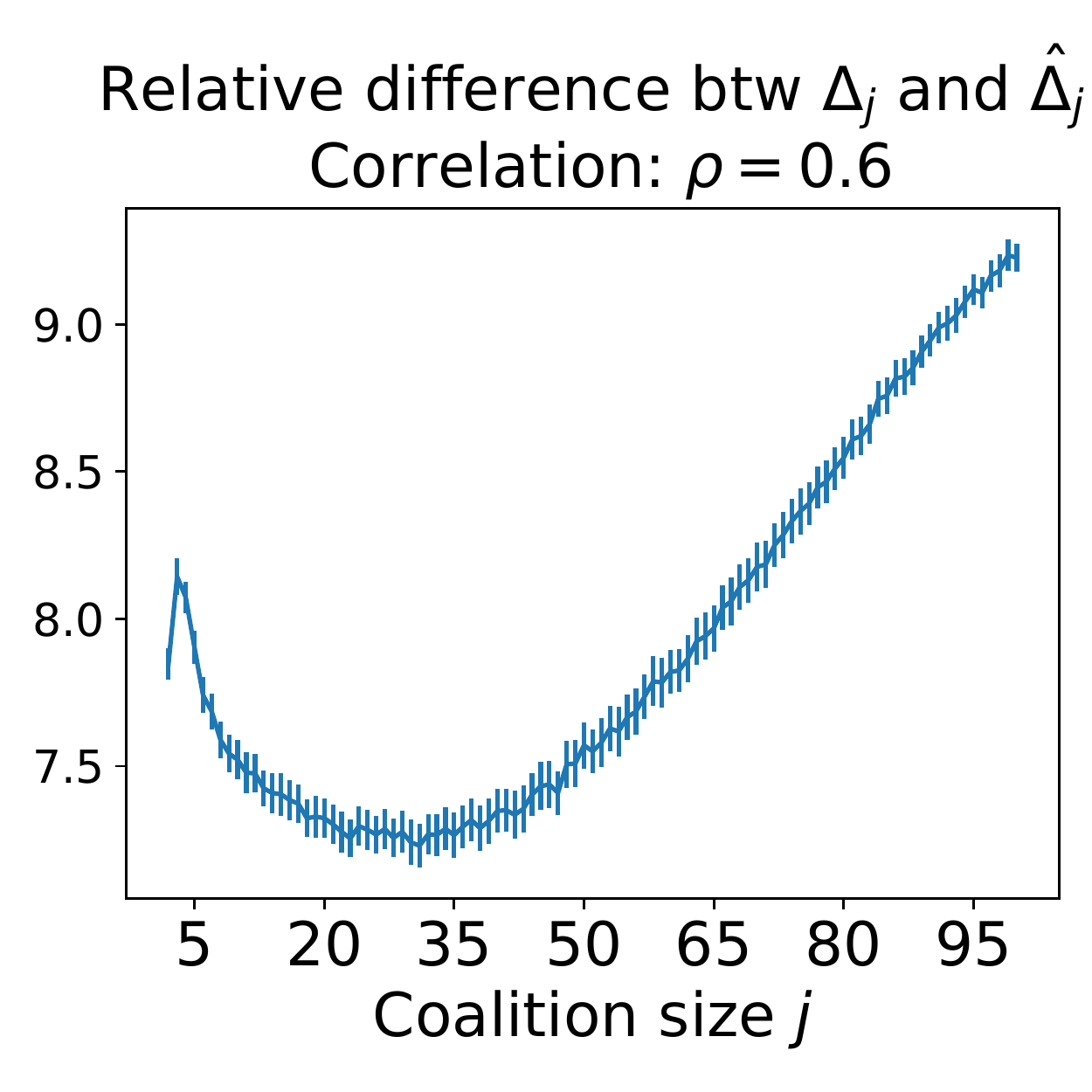}
    \caption{Illustrations of the relative difference between the true marginal contribution $\Delta_j$ and its estimate $\hat{\Delta}_j$ as a function of the coalition size $j \in [d]$. We consider the same setting used in Figure~\ref{fig:motivation_gaussian_more_than_two_features}. The $\Delta_d$ is shown to have the largest relative difference.}
    \label{fig:estimation_error_of_the_marginal_contributions}
\end{wrapfigure}
Our motivational examples in the previous section suggest that the Shapley value does not necessarily assign larger attributions for more influential features, leading to a suboptimal order of features. In fact, the last marginal contribution $\Delta_d$ outperforms other attribution methods in Figure~\ref{fig:motivation_gaussian_more_than_two_features}. 
Although the use of $\Delta_d$ is promising, we show that focusing on one marginal contribution might lead to an unstable attribution method, suggesting a weighted mean of the marginal contributions. To be more concrete, we first examine the estimation error of the marginal contribution in the following.

\paragraph{Analysis of estimation error}
In practice, the Shapley value needs to be estimated, resulting in an estimation error. Given the mathematical form of the Shapley value in \eqref{eqn:shapley_value}, this estimation error arises from the estimation error of the marginal contribution. In this reason, we investigate the estimation error of the marginal contribution. We consider the same setting used in Figure~\ref{fig:motivation_gaussian_more_than_two_features} with $\rho=0.6$. As for the estimation of the marginal contribution, we follow a standard algorithm to estimate a conditional coalition function $v^{\mathrm{(cond)}}$ used in \citet{jethani2021fastshap} and a sampling-based algorithm to approximate $\Delta_j$. A detailed explanation for the estimation procedure and the additional result for $\rho=0.2$ are provided in Appendix.

Figure~\ref{fig:estimation_error_of_the_marginal_contributions} shows the relative difference between the true marginal contribution $\Delta_j$ obtained by Theorem~\ref{thm:marginal_contribution} and its estimate $\hat{\Delta}_j$ as a function of the coalition size $j \in [d]$. Here, we use the relative difference between $A$ and $B$ defined as $|A-B|/\max(|A|,|B|)$ to avoid numerical instability that can be occurred by too small marginal contribution values. It shows the $\Delta_d$, the most informative marginal contribution in Figure~\ref{fig:motivation_gaussian_more_than_two_features}, has the largest relative difference from the true value. 
In other words, $\Delta_d$ has the largest signal to explain a model prediction, but at the same time, it is the most unstable in terms of the estimation error. This finding motivates us to consider a weighted mean of the marginal contributions that can reduce the estimation error while maintaining signals.

\paragraph{Proposed method}
For a weighted vector $\mathbf{w}=(w_1, \dots, w_d)$ such that $\sum_{j=1} ^d w_j =1$ and $w_j \geq 0$ for all $j\in[d]$, we consider a weighted mean of the marginal contributions
\begin{align}
    \phi_\mathbf{w} (x_i) := \sum_{j=1} ^{d} w_j \Delta_{j}(x_i).
    \label{eqn:semivalue}
\end{align}
A weighted mean $\phi_\mathbf{w} (x_i)$ is expected to capture the influence of features better than the Shapely value \eqref{eqn:shapley_value} by assigning a large weight to important marginal contributions. As for the game theoretic interpretation, a mathematical form of Equation~\eqref{eqn:semivalue} is known as a semivalue in cooperative game theory. It satisfies all the Shapley axioms but the efficiency axiom, which is not crucial in the attribution as we discussed in Section~\ref{sec:preliminaries} \citep{dubey1977probabilistic, ridaoui2018axiomatisation}. Due to the relaxation of the efficiency axiom, a semivalue is not uniquely determined, but it is known that the semivalue is \textit{almost} unique up to a weighted mean operation. A detailed explanation for the semivalue is provided in Appendix. 

One natural and practical question that arises when using a weighted mean $\phi_\mathbf{w} (x_i)$ is how to select the weight vector $\mathbf{w}$. Since a weight vector to be selected is desired to have a certain good property, this question can be rephrased as to which concept of goodness should be optimized. However, it is difficult to have one universal desideratum by the intricate nature of model interpretations. There are different types of goodness and they often represent independent characteristics, as we discussed in Section~\ref{sec:preliminaries}. In other words, a good attribution essentially depends on a practitioner's downstream task. To reflect this, we propose to learn a weight vector that optimizes a user-defined utility. 
To be more specific, we let $\mathcal{W} \subseteq \{w \in \mathbb{R}^d : \sum_{j=1} ^d w_j =1, w_j \geq 0 \}$ be a parametrized family of weights and $\mathcal{T}$ be a user-defined utility function that takes as input an attribution method and outputs its utility. Without loss of generality, we assume that the larger $\mathcal{T}$ is, the better it is (\textit{e.g.}, the negative value of AUP). Given $\mathcal{T}$ and $\mathcal{W}$, we propose WeightedSHAP as follows.
\begin{align}
    \phi_{\mathrm{WeightedSHAP}}(\mathcal{T}, \mathcal{W}) := \phi_{\mathbf{w}^* (\mathcal{T}, \mathcal{W})},
    \label{eqn:weightedSHAP}
\end{align} 
where $\mathbf{w}^* (\mathcal{T}, \mathcal{W}) := \mathrm{argmax}_{\mathbf{w} \in \mathcal{W}} \mathcal{T} (\phi_{\mathbf{w}})$. That is, we learn the optimal weight by optimizing a user-defined utility. 

When $\mathcal{W}$ includes the uniform weight $(1/d, \dots, 1/d) \in \mathbb{R}^d$, then by its construction, we can guarantee that WeightedSHAP is always better than or equal to the Shapley value according to the utility $\mathcal{T}$. For instance, when the negative value of AUP is used for the utility $\mathcal{T}$, AUP of WeightedSHAP is less than that of the Shapley value, \textit{i.e.}, $\mathrm{AUP}(\phi_{\mathrm{WeightedSHAP}}) \leq \mathrm{AUP}(\phi_{\mathrm{shap}})$. 
Moreover, the more weight vectors are in $\mathcal{W}$, the better the quality of $\phi_{\mathrm{WeightedSHAP}}$ is guaranteed. WeightedSHAP $\phi_{\mathrm{WeightedSHAP}}$ depends on a set $\mathcal{W}$. In our experiments, we parameterize an element $\mathbf{w} \in \mathcal{W}$ by the Beta distribution inspired by mathematical properties of the semivalue in \citet[Theorem 11]{monderer2002variations}. Detailed information is provided in Appendix. 

\begin{example}[WeightedSHAP and the Shapley value $\phi_{\mathrm{shap}}$ on AUP]
We revisit the motivational example introduced in Section~\ref{sec:motivational_examples}. With the negative AUP for $\mathcal{T}$ and some $\mathcal{W} \supseteq \{\Delta_d, \phi_{\mathrm{shap}}\}$, WeightedSHAP achieves significantly lower AUP than both $\Delta_d$ and $\phi_{\mathrm{shap}}$. Specifically, when $(d,\rho)=(100,0.6)$, the AUPs of ($\Delta_d$, $\phi_{\mathrm{shap}}$, $\phi_{\mathrm{WeightedSHAP}}$) are ($1.49\pm0.06, 1.65\pm0.08, 0.77\pm0.03$), respectively, where the numbers denote ``mean $\pm$ standard error'' based on the 100 held-out test samples. Meanwhile, the estimation errors of ($\Delta_d$, $\phi_{\mathrm{shap}}$, $\phi_{\mathrm{WeightedSHAP}}$) are ($9.23\pm0.02, 6.16\pm0.04, 7.90\pm0.11$), respectively. In short, WeightedSHAP achieves a significantly lower estimation error than $\Delta_d$ while achieving the lowest AUP. Although $\phi_{\mathrm{shap}}$ achieves the lowest estimation error, its AUP is significantly greater than both $\Delta_d$ and $\phi_{\mathrm{WeightedSHAP}}$. The uniform weight in $\phi_{\mathrm{shap}}$ helps reduce the estimation error, but it loses signals too much. In contrast, WeightedSHAP well balances the signal and the estimation error, \textit{i.e.}, reducing the estimation error while taking more signals.
\end{example}

\textbf{Implementation algorithm for WeightedSHAP} Given a finite set $\mathcal{W}$ and an easy-to-compute utility function $\mathcal{T}$, the optimal weight $\mathbf{w}^*$ can be achieved by iteratively evaluating the utility $\mathcal{T}$ for each attribution method $\phi_{\mathbf{w}}$ with $\mathbf{w} \in \mathcal{W}$. In addition, $\phi_{\mathbf{w}}$ is readily obtained as long as there are the marginal contribution estimates. Therefore, the key part of the implementation algorithm is to estimate a set of marginal contributions. The estimation of the marginal contributions consists of two parts, estimation of a conditional coalition function $v^{\mathrm{(cond)}}$ and approximation of the marginal contribution $\Delta_j$. As for the first part, we train a surrogate model that takes as input a subset of input features and outputs a conditional expectation of a prediction value given the same subset \citep{frye2020shapley, jethani2021fastshap, jethani2021have}. It is known that this surrogate model unbiasedly estimates a conditional expectation of a prediction value given a subset of features under mild conditions \citep{frye2020shapley, covert2020understanding}. Regarding the second part, a weighted mean is approximated by a sampling-based algorithm \citep{ghorbani2019data, kwon2021beta}. We provide a pseudo algorithm in Appendix. 
In terms of the computational cost, our algorithm is comparable to a standard the Shapley value estimation algorithm because both algorithms need to estimate the marginal contributions as a primary part \citep{lundberg2017unified, frye2020shapley}. For instance, with the classification dataset \texttt{fraud}, the marginal contribution estimation part takes $20.7$ seconds per sample on average but the weight optimization part only takes $0.18$ seconds, \textit{i.e.}, the weight optimization part is only $0.86 \%$ of the total compute.

\begin{figure}[t]
    \centering
    \subfigure[Illustrations of the prediction recovery error curve on the four regression datasets. The lower, the better.]{
    \includegraphics[width=0.22\textwidth]{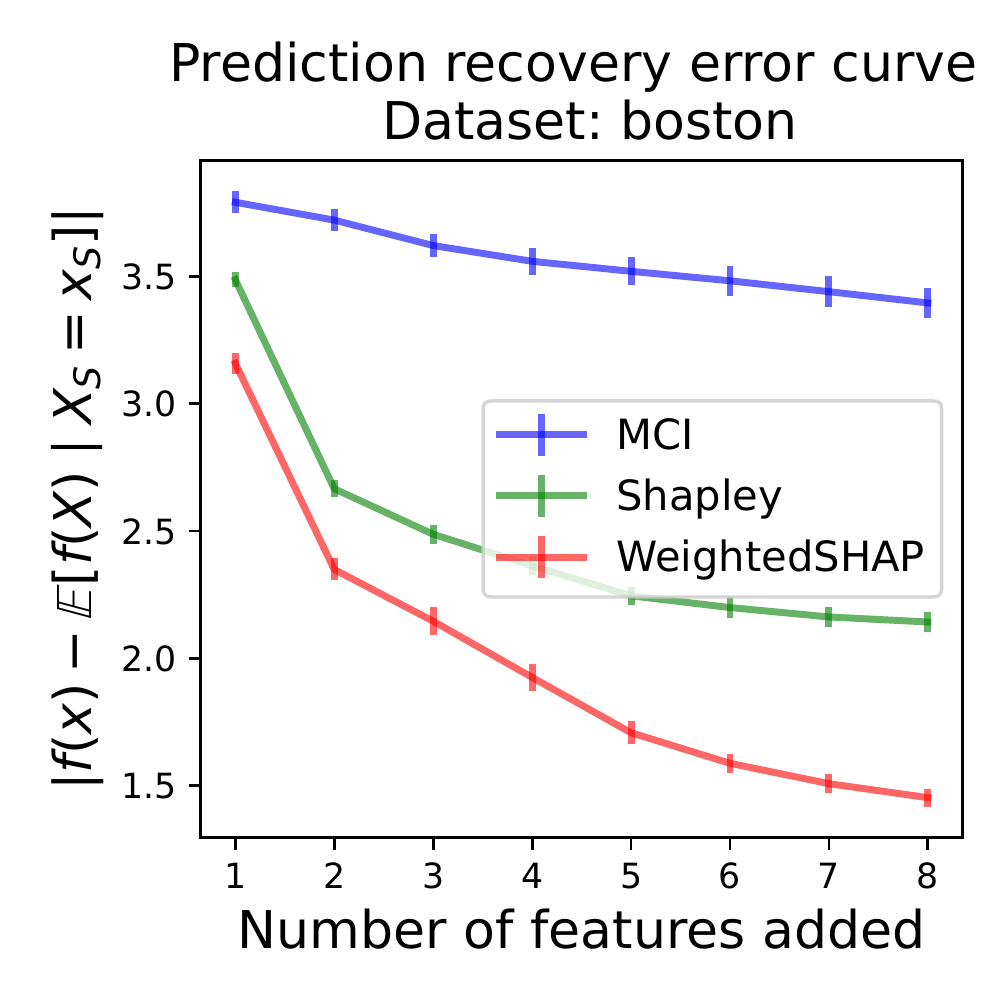}
    \includegraphics[width=0.22\textwidth]{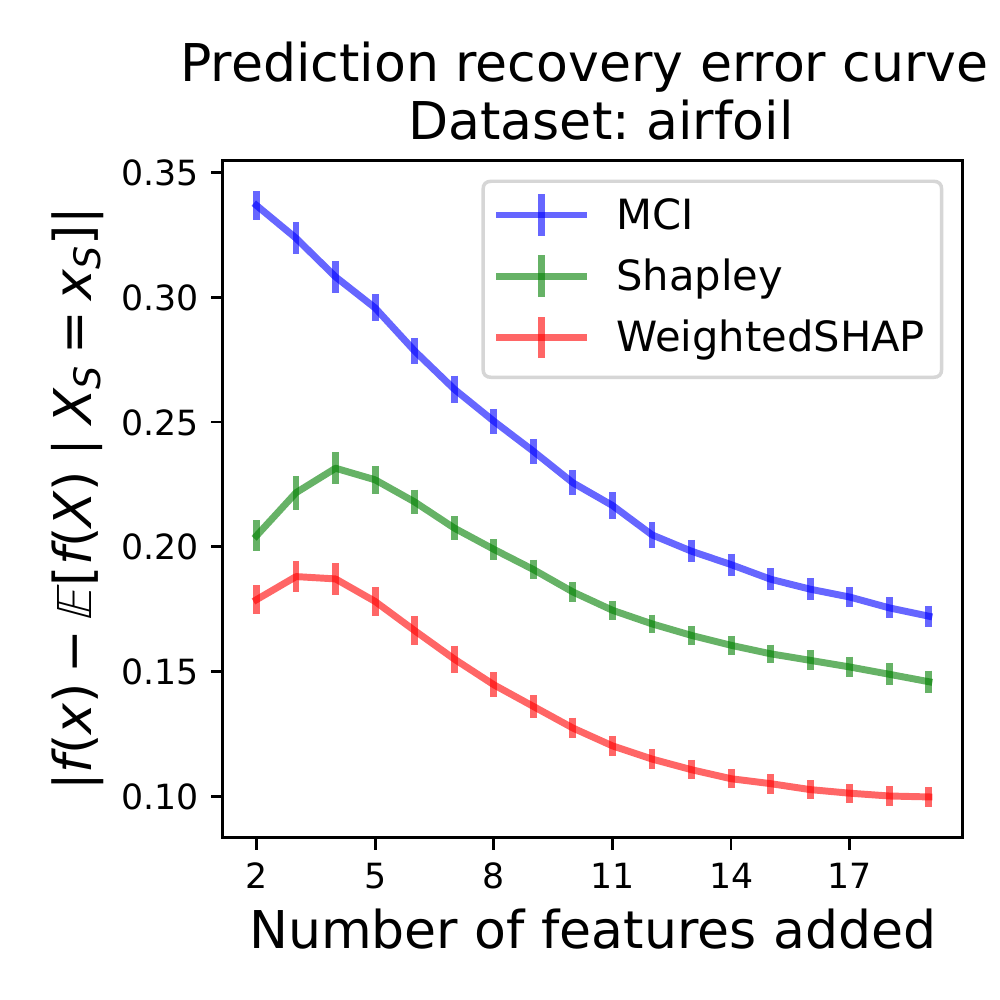}
    \includegraphics[width=0.22\textwidth]{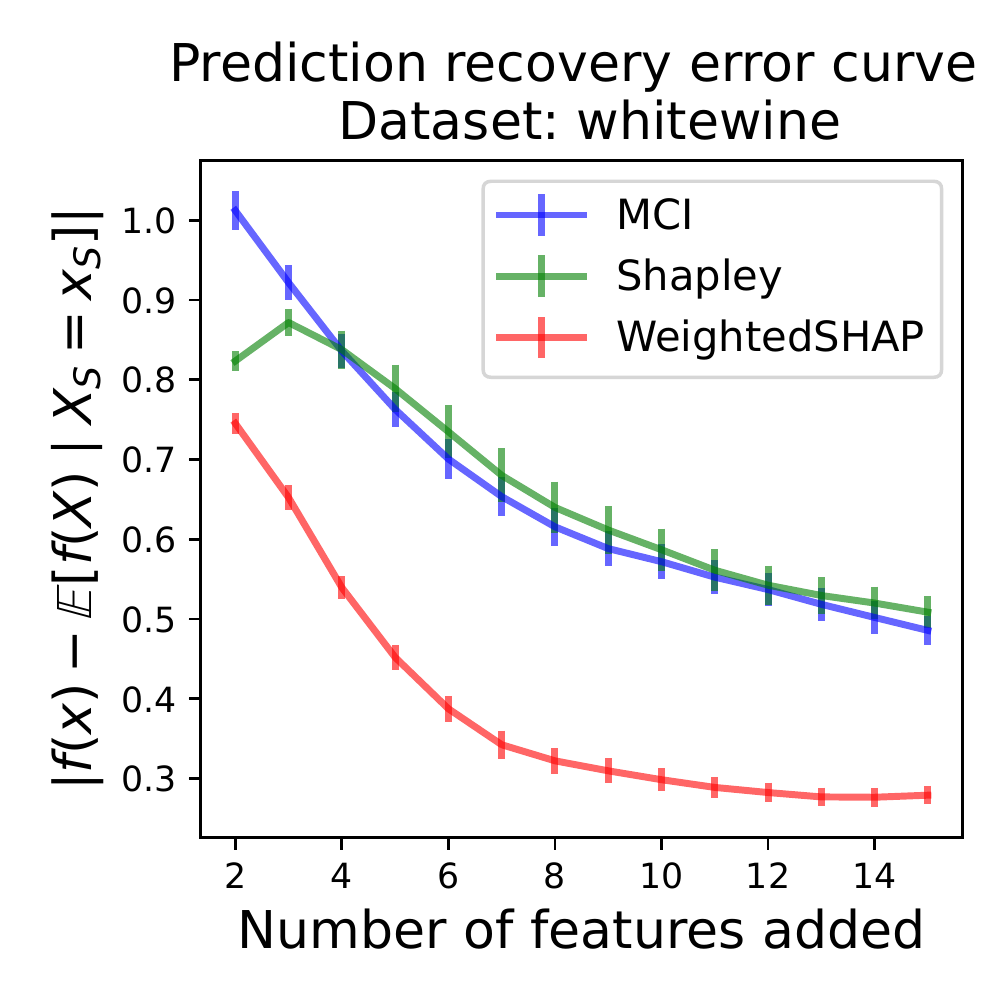}
    \includegraphics[width=0.22\textwidth]{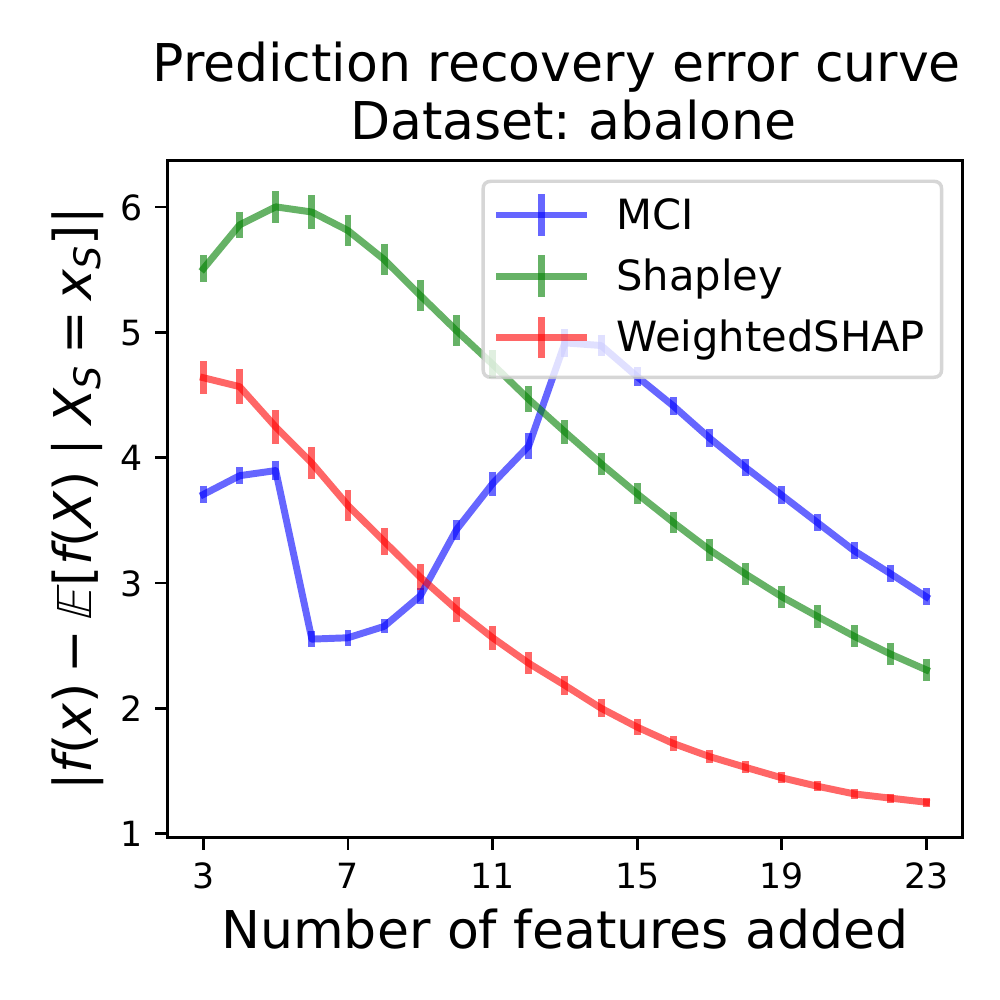}
    \label{fig:prediction_error_regression_boosting}
    }
    \subfigure[Illustrations of the Inclusion MSE curve on the four regression datasets. The lower, the better.]{
    \includegraphics[width=0.22\textwidth]{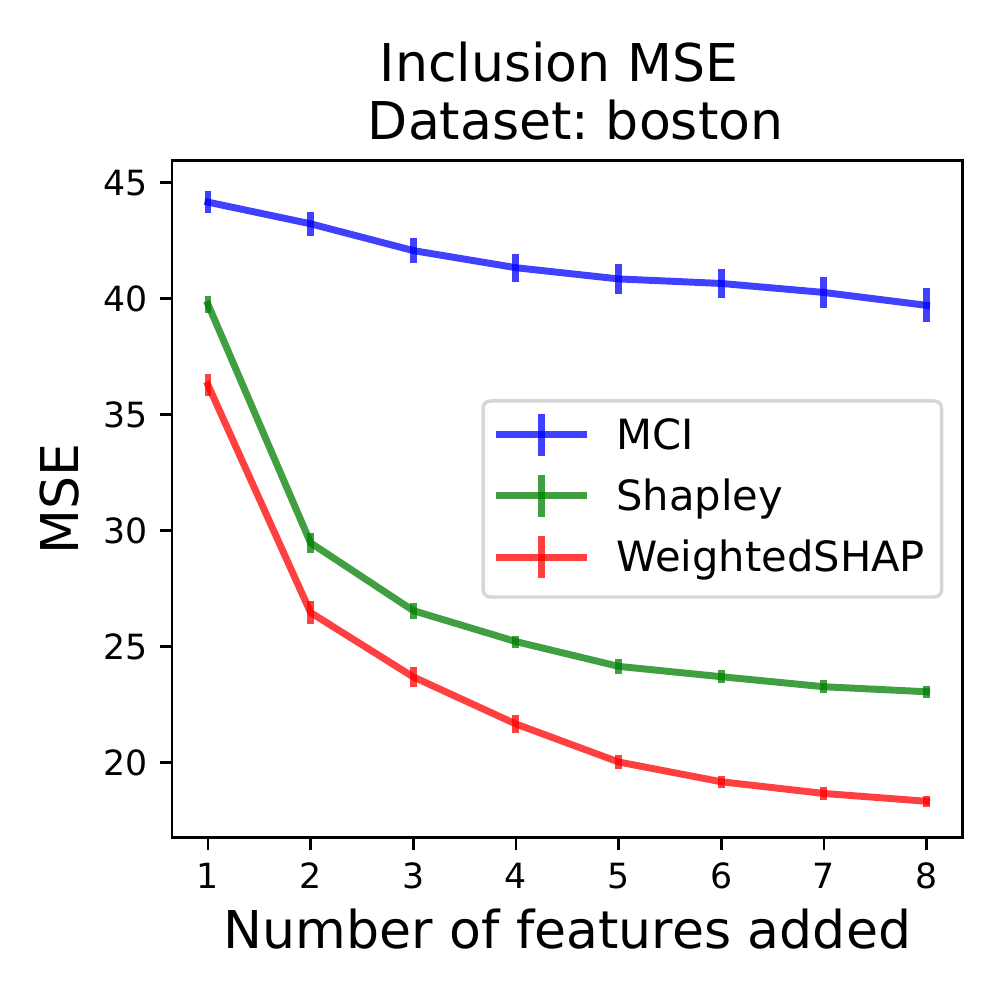}
    \includegraphics[width=0.22\textwidth]{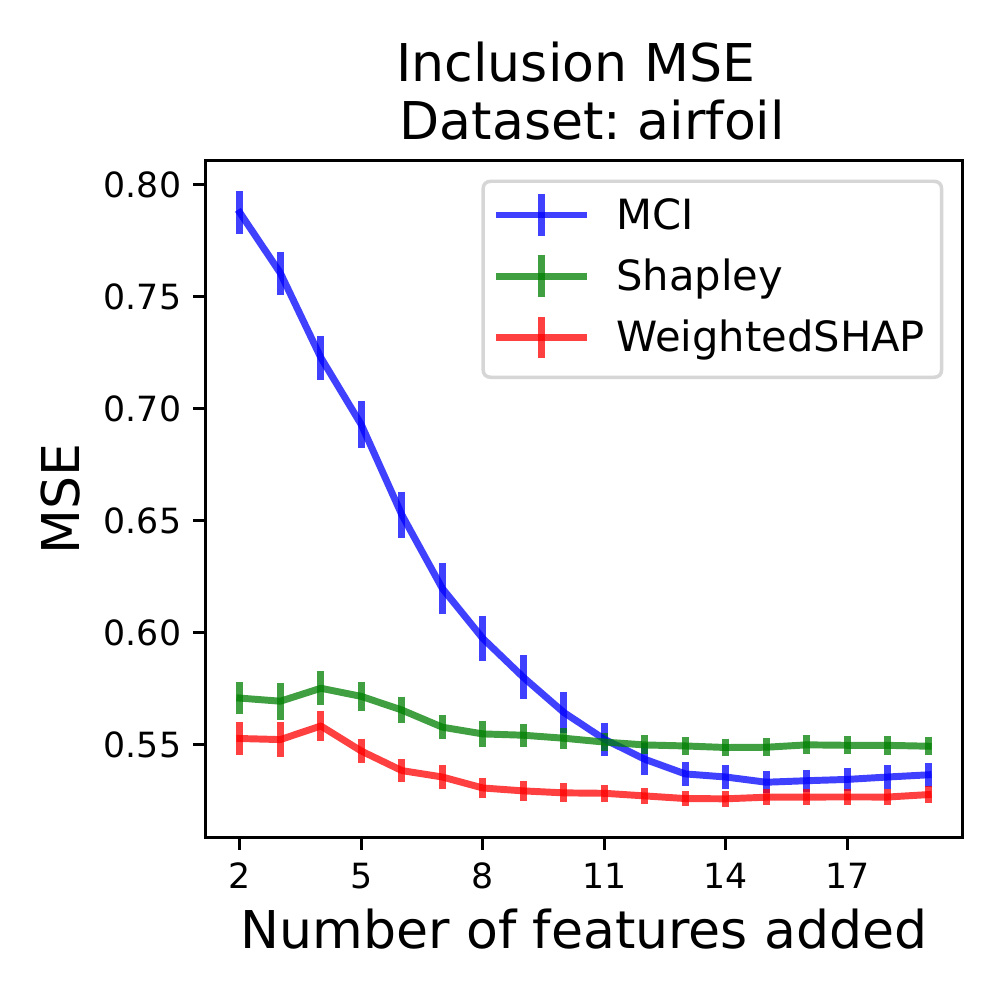}
    \includegraphics[width=0.22\textwidth]{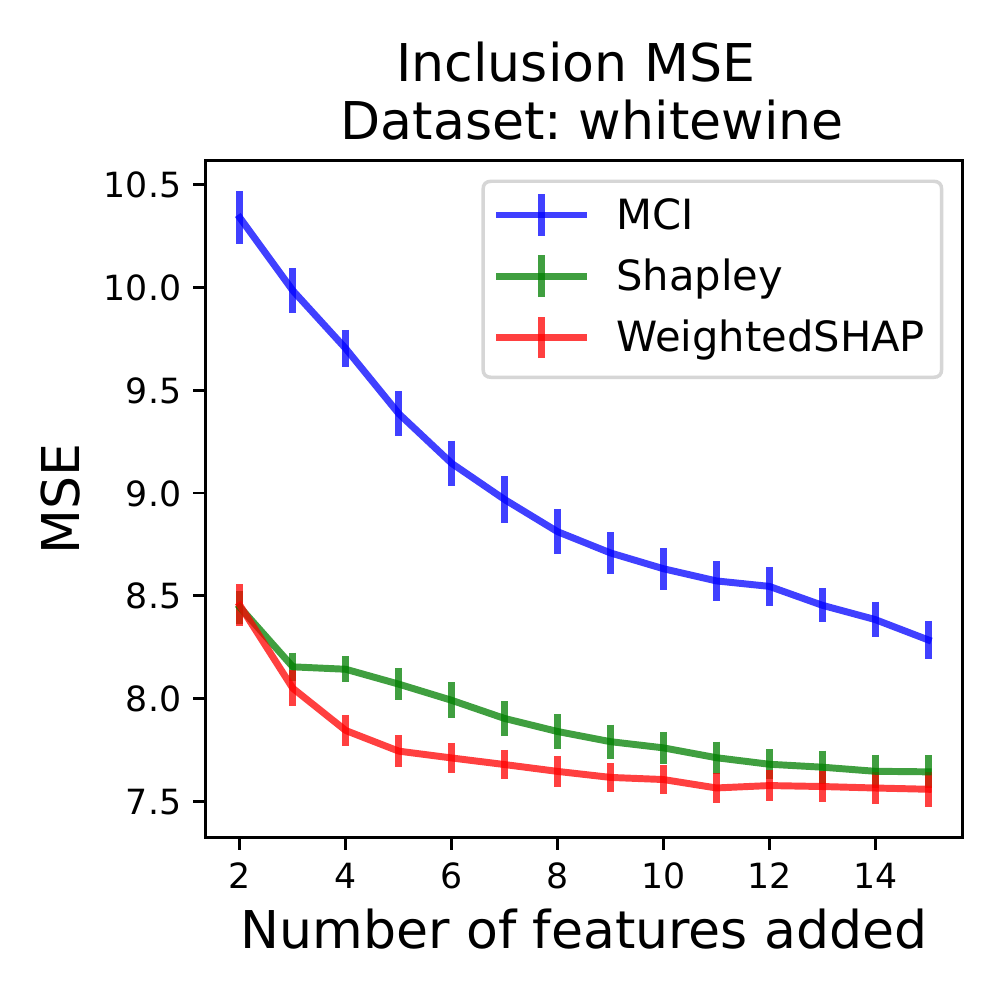}
    \includegraphics[width=0.22\textwidth]{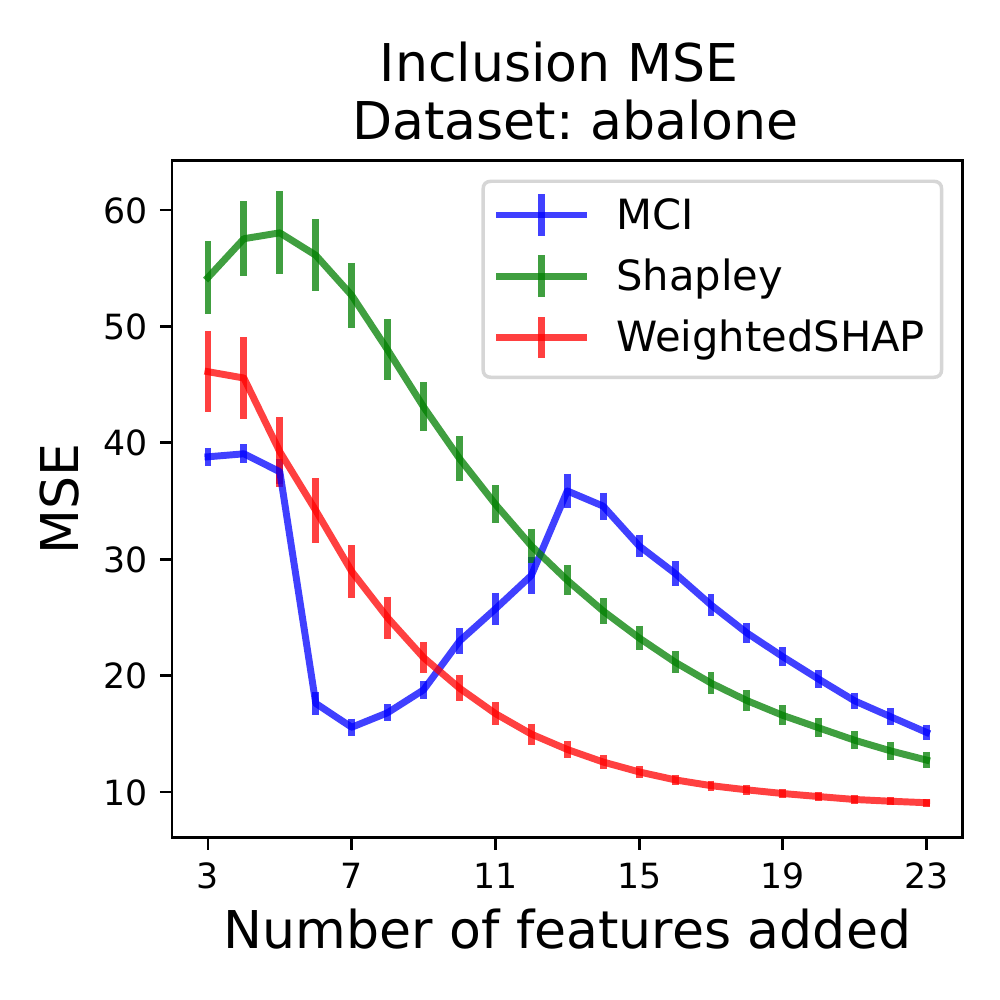}
    \label{fig:inclusion_auc_regression_boosting}
    }
    \caption{\textbf{Regression tasks.} Illustrations of the prediction recovery error curve and the Inclusion MSE curve as a function of the number of features added. We add features from most influential to the least influential. We denote a 95\% confidence interval based on 50 independent runs. WeightedSHAP achieves a significantly smaller MSE with fewer features than the MCI and the Shapley value.}
    \label{fig:regression_boosting_all}
\end{figure}

\begin{figure}[t]
    \centering
    \subfigure[Illustrations of the prediction recovery error curve on the four binary classification datasets. The lower, the better.]{
    \includegraphics[width=0.22\textwidth]{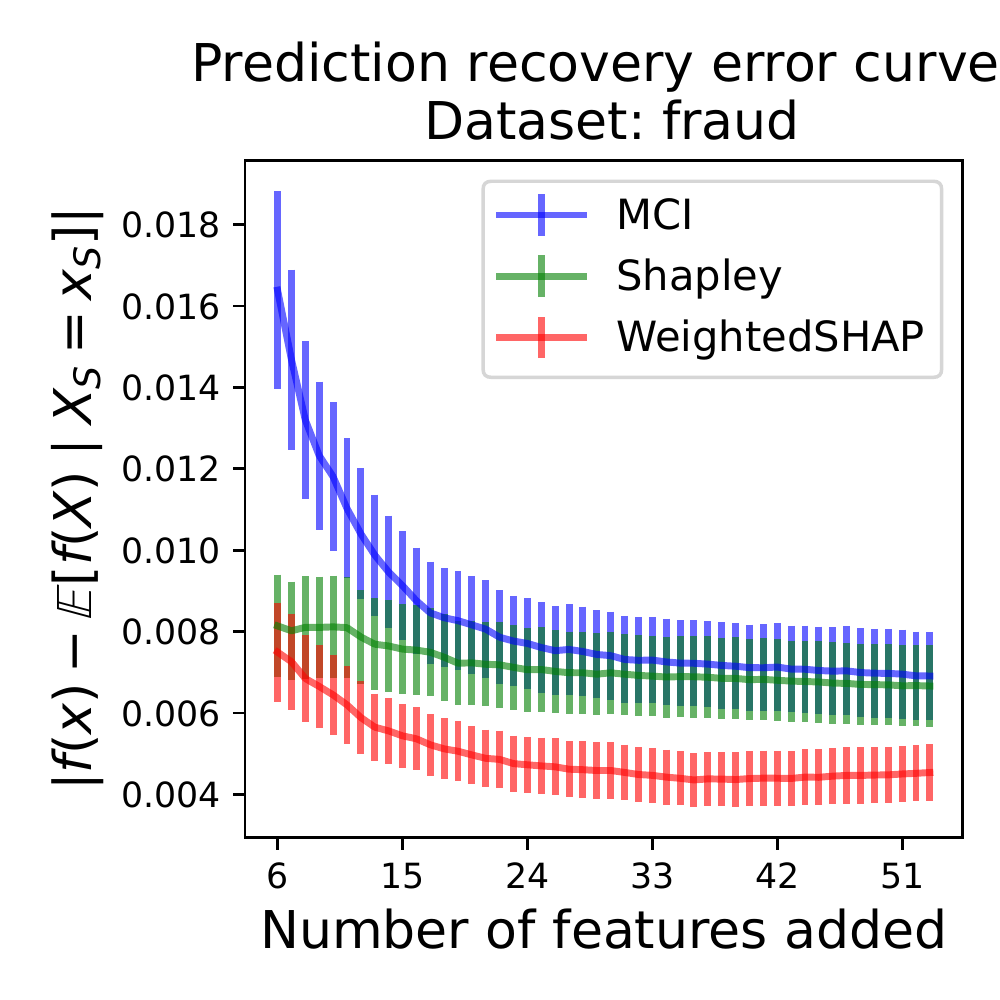}
    \includegraphics[width=0.22\textwidth]{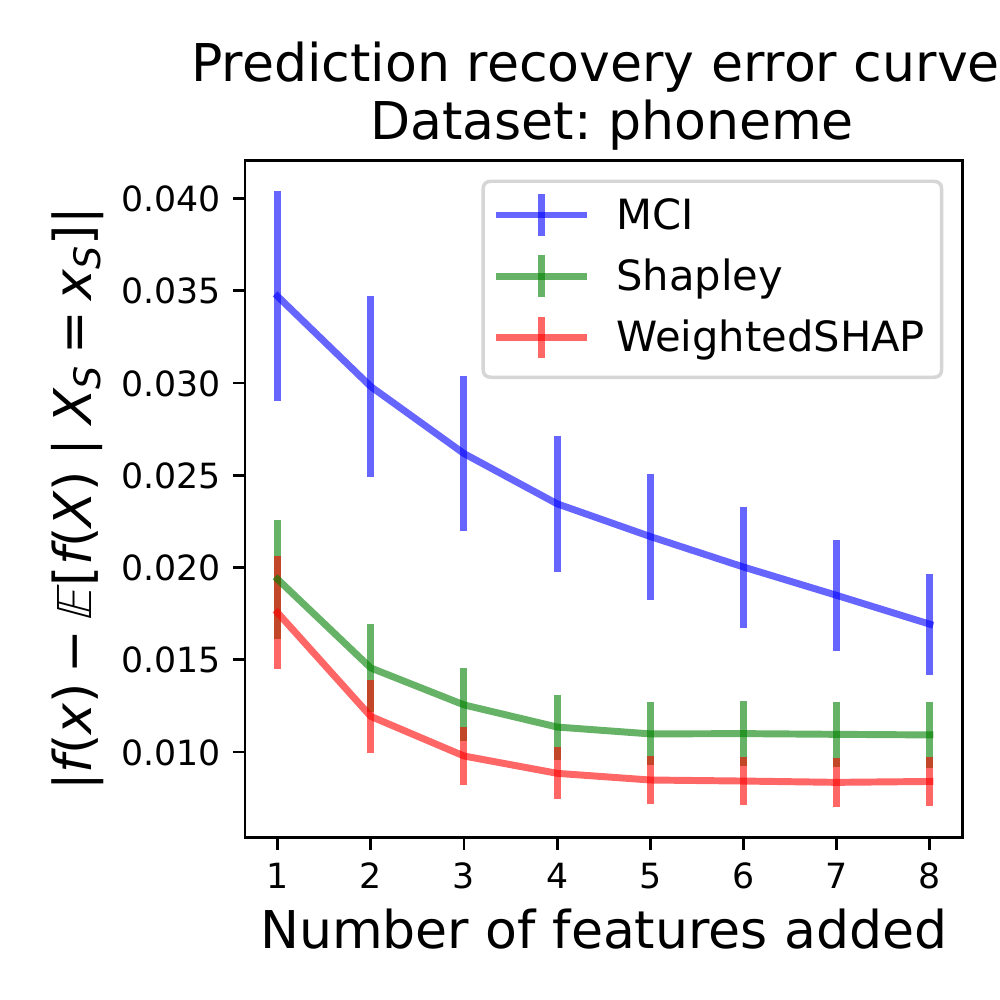}
    \includegraphics[width=0.22\textwidth]{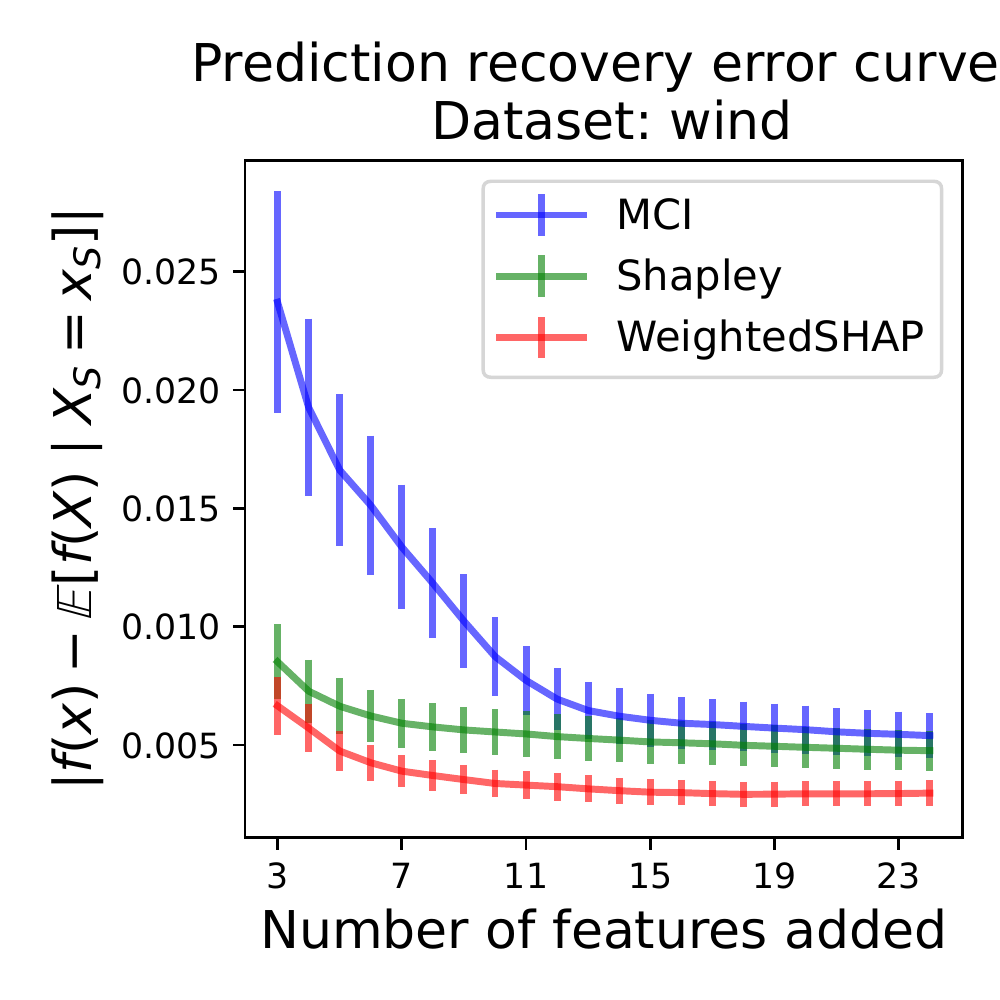}
    \includegraphics[width=0.22\textwidth]{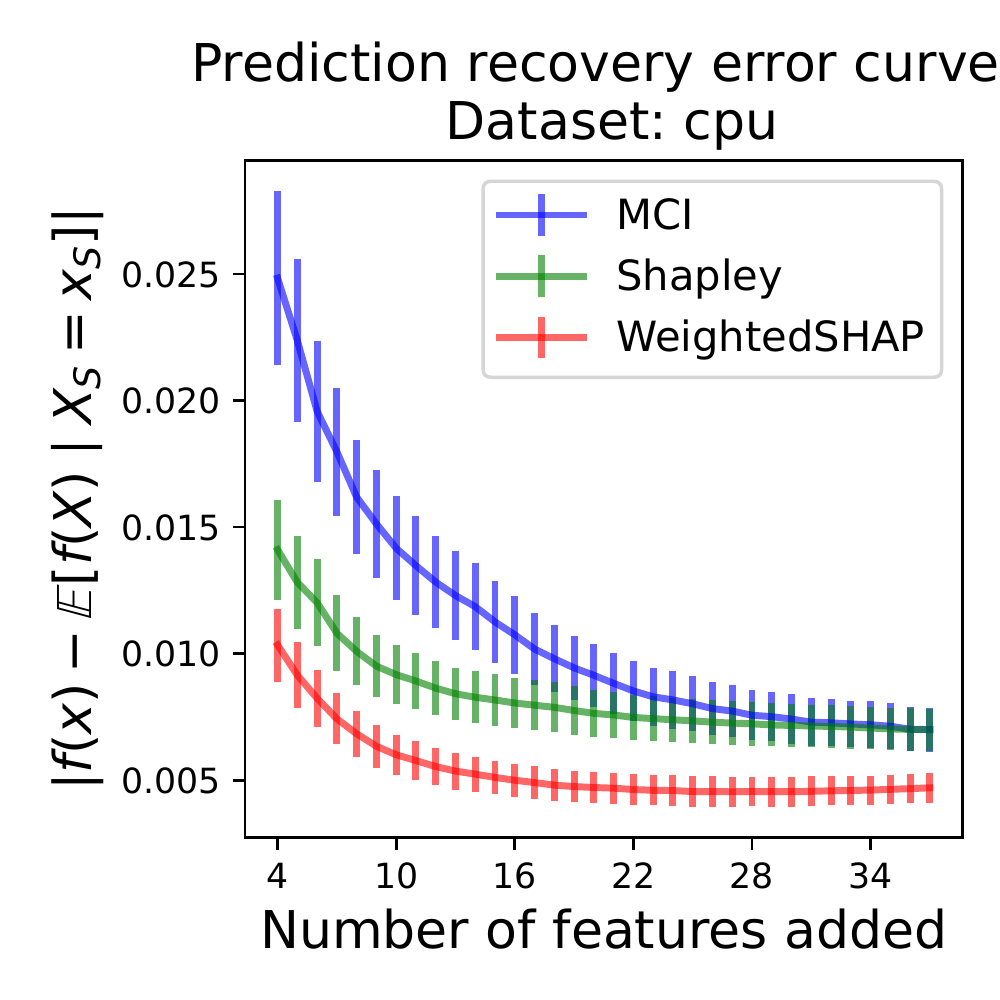}
    \label{fig:prediction_error_classification_boosting}
    }
    \subfigure[Illustrations of the Inclusion AUC curve on the four binary classification datasets. The higher, the better.]{
    \includegraphics[width=0.22\textwidth]{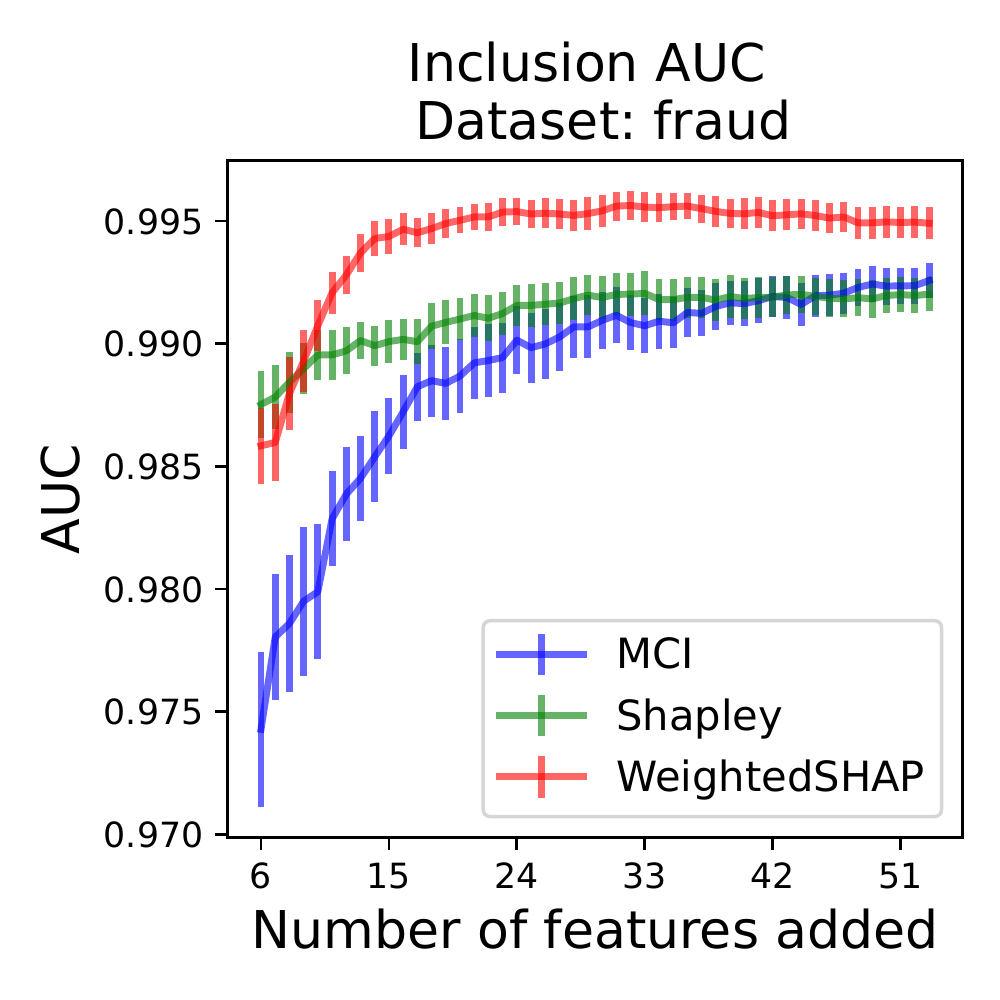}
    \includegraphics[width=0.22\textwidth]{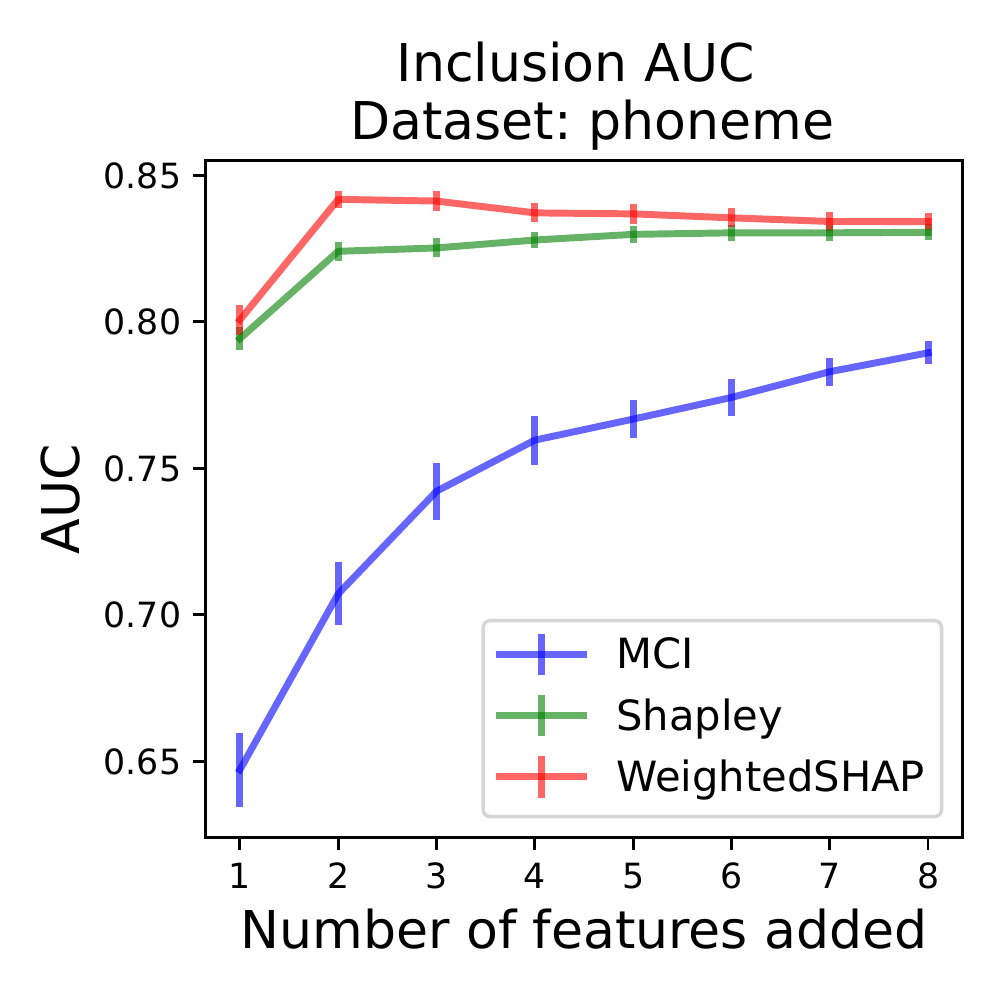}
    \includegraphics[width=0.22\textwidth]{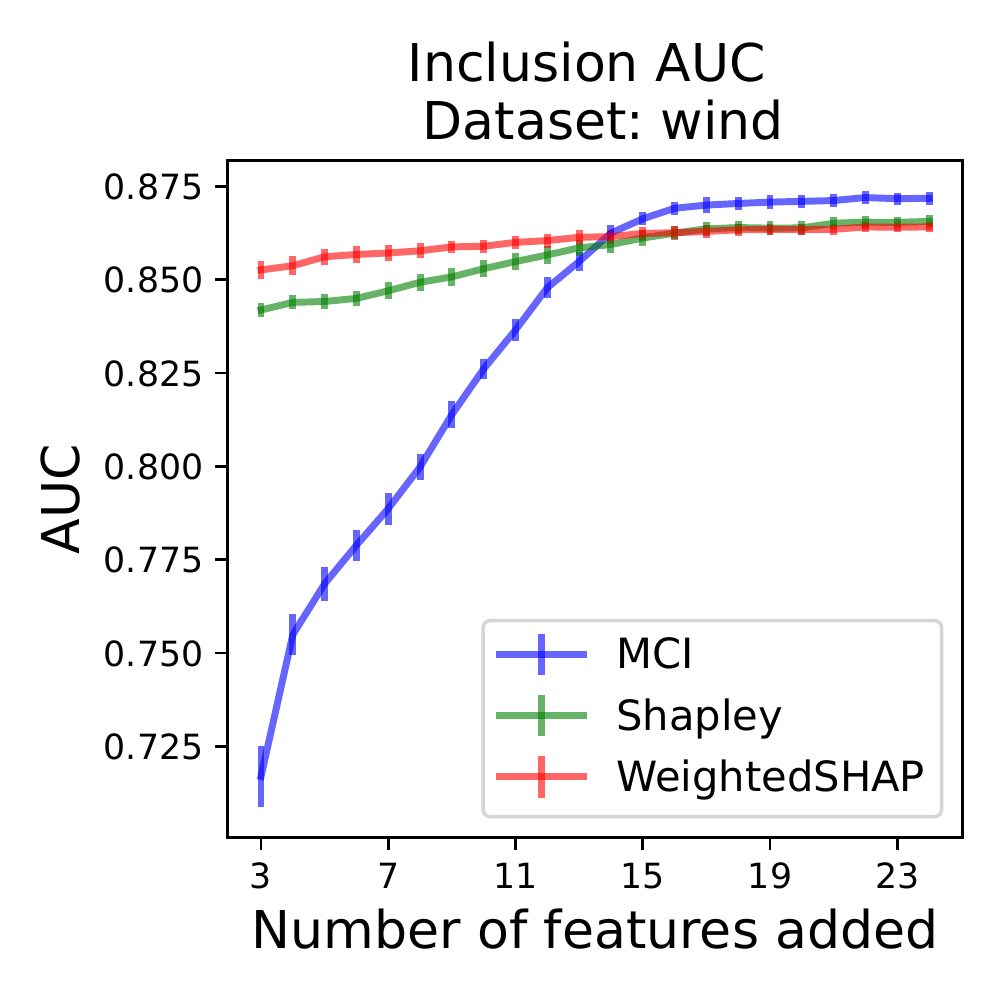}
    \includegraphics[width=0.22\textwidth]{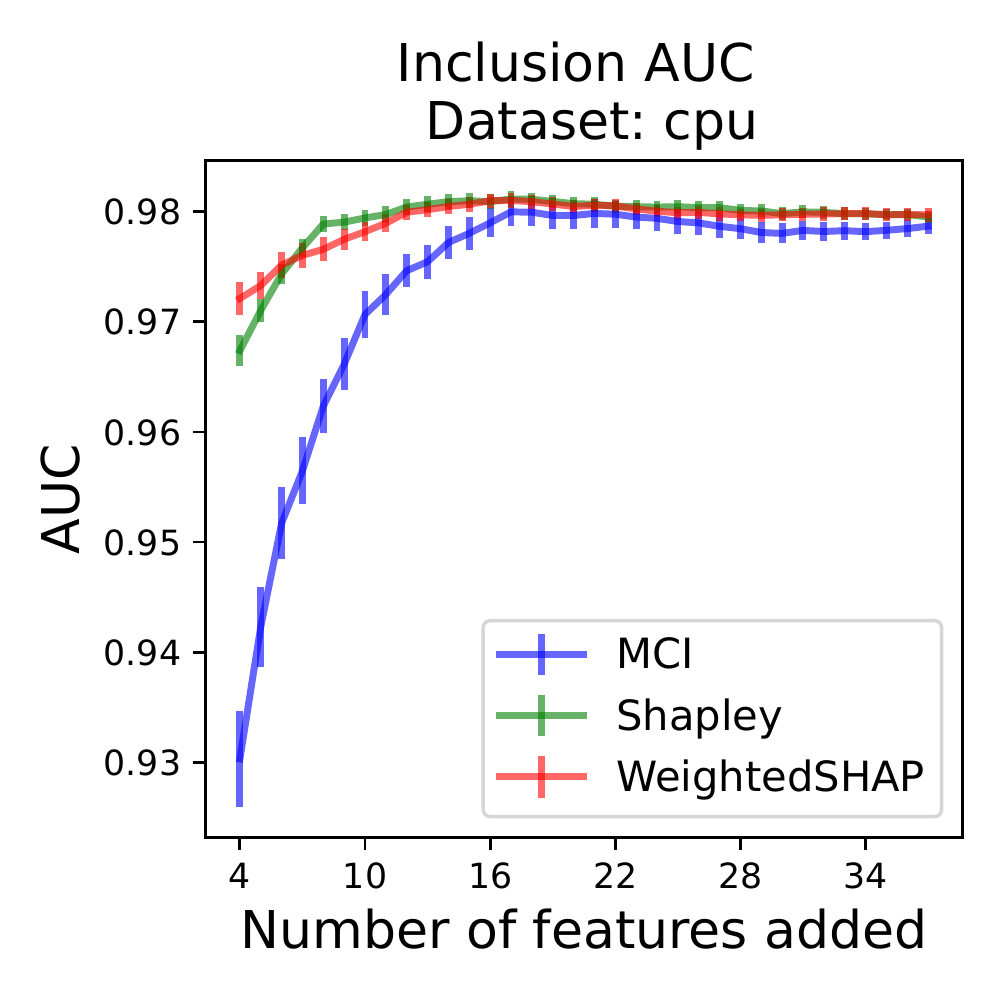}
    \label{fig:inclusion_auc_classification_boosting}
    }
    \caption{\textbf{Classification tasks.} Illustrations of the prediction recovery error curve and the Inclusion AUC curve as a function of the number of features added. Details are provided in Figure~\ref{fig:regression_boosting_all}. WeightedSHAP achieves a significantly higher AUC with fewer features than the MCI and the Shapley value.}
\end{figure}

\section{Experimental results}
\label{sec:experiment}

We demonstrate the practical efficacy of WeightedSHAP on various regression and classification datasets. We compare WeightedSHAP with the marginal contribution feature importance (MCI) proposed by \citet{catav21marginal} and the Shapley value on the prediction recovery error task and the inclusion performance task. Each task assesses the goodness of an attribution order by iteratively measuring how much the original model prediction or its performance is recovered with a given number of features. As for the model performance, we evaluate mean squared error (MSE) and AUC for regression and classification problems, respectively.
We consider a gradient boosting model for a prediction model $\hat{f}$. As for the surrogate model in coalition function estimation $v^{\mathrm{(cond)}}$, we use a multilayer perceptron model with the two hidden layers, and each layer has 128 neurons and the ELU activation function \citep{clevert2015fast}. As for the WeightedSHAP in \eqref{eqn:weightedSHAP}, we use the negative value of the AUP for $\mathcal{T}$ and a set $\mathcal{W}$ that has 13 different weights including $\Delta_1$, $\phi_{\mathrm{shap}}$, and $\Delta_d$. All the missing details about numerical experiments are provided in Appendix, and our Python-based implementations are available at \url{https://github.com/ykwon0407/WeightedSHAP}.

Figure~\ref{fig:prediction_error_regression_boosting} compares the prediction recovery error curves for the WeightedSHAP (described in red) with the MCI (described in blue) and the Shapley value (described in green). WeightedSHAP shows always lower prediction recovery errors than the MCI and the Shapley value. Given that WeightedSHAP minimizes the AUP, which is the sum of prediction recovery error $|\hat{f}(x) - \mathbb{E}[\hat{f}(X) \mid X_S=x_S]|$, WeightedSHAP does not necessarily have a smaller prediction recovery error for every number of features added. As for the MSE, Figure~\ref{fig:inclusion_auc_regression_boosting} shows that WeightedSHAP has a significantly smaller MSE than baseline methods with fewer features. In particular, on the \texttt{airfoil} dataset, WeightedSHAP achieves $0.53$ MSE with 10 features, but the Shapley value never achieves this value because of the suboptimality of the attribution order.

We also evaluate the prediction recovery error and AUC for the four classification datasets. Similar to the regression cases, Figures~\ref{fig:prediction_error_classification_boosting} and~\ref{fig:inclusion_auc_classification_boosting} show that WeightedSHAP effectively assigns larger values for more influential features and recovers the original prediction $\hat{f}(x)$ significantly faster than the MCI and the Shapley value. Specifically, on the \texttt{fraud} dataset, WeightedSHAP achieves $0.995$ AUC with 14 features, but the Shapley value always has the lower AUC value. Our findings are consistently observed with a different model for $\hat{f}$ or other datasets. Additional experimental results with different evaluation metrics and a qualitative assessment of WeightedSHAP are provided in Appendix.

\subsection{Illustrative examples from MNIST}
\label{app:illustrative_examples_mnist}

We present a qualitative assessment of WeightedSHAP and examine how its top influential features differ from those from SHAP using the MNIST dataset. We train a convolutional neural network model using the same setting suggested in \citet{jethani2021fastshap}. It achieves 98.6 \% accuracy on the test dataset. We select illustrative images with a significant difference in AUPs between WeightedSHAP and SHAP.

Figure~\ref{fig:MNIST_samples} compares the top 10\% influential attributions for WeightedSHAP and the Shapley value. On the top images, while SHAP fails to capture the last stroke of digit nine, which is a crucially important stroke to differentiate from the digit zero, WeightedSHAP clearly captures the strokes. On the bottom images, SHAP produces unintuitive negative attributions, providing noisy explanations. In contrast, WeightedSHAP provides noiseless and intuitive explanations.

\begin{figure}[t]
    \centering
    \includegraphics[width=0.425\textwidth]{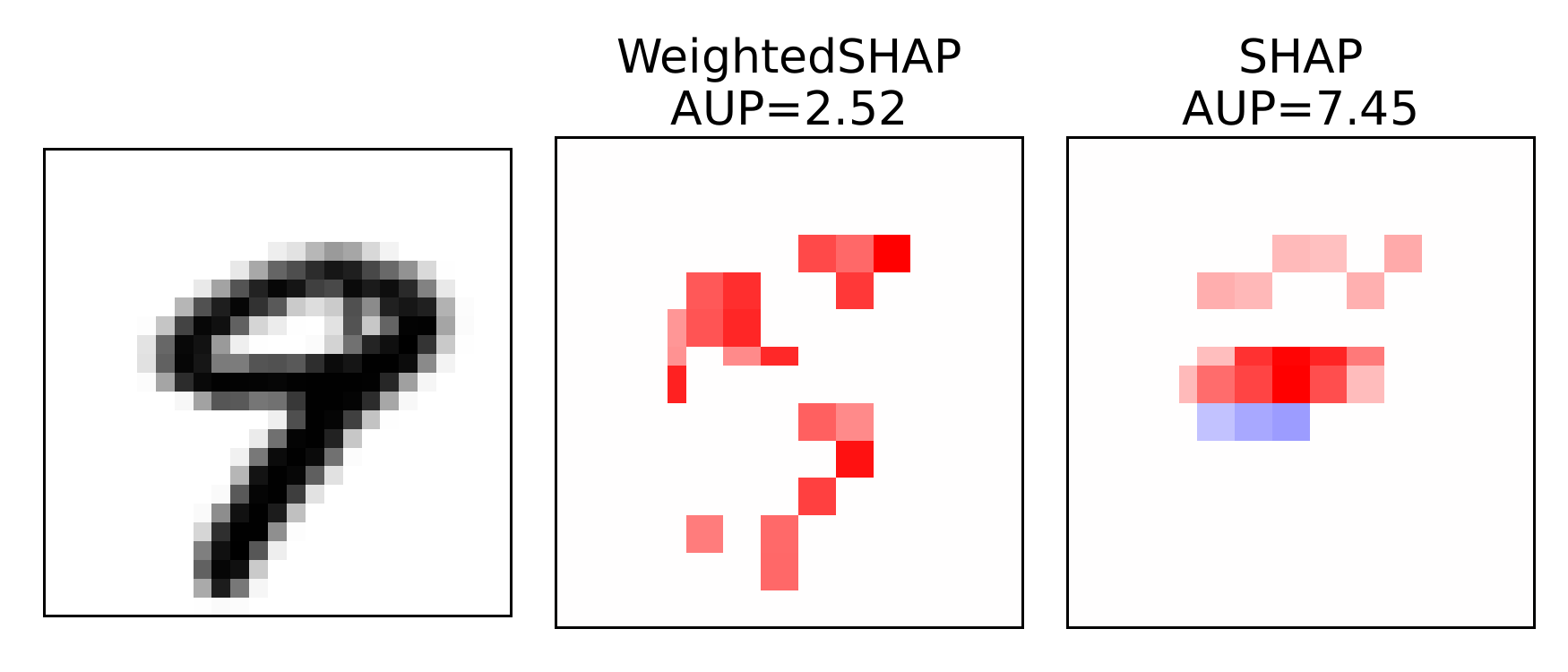}
    \includegraphics[width=0.45\textwidth]{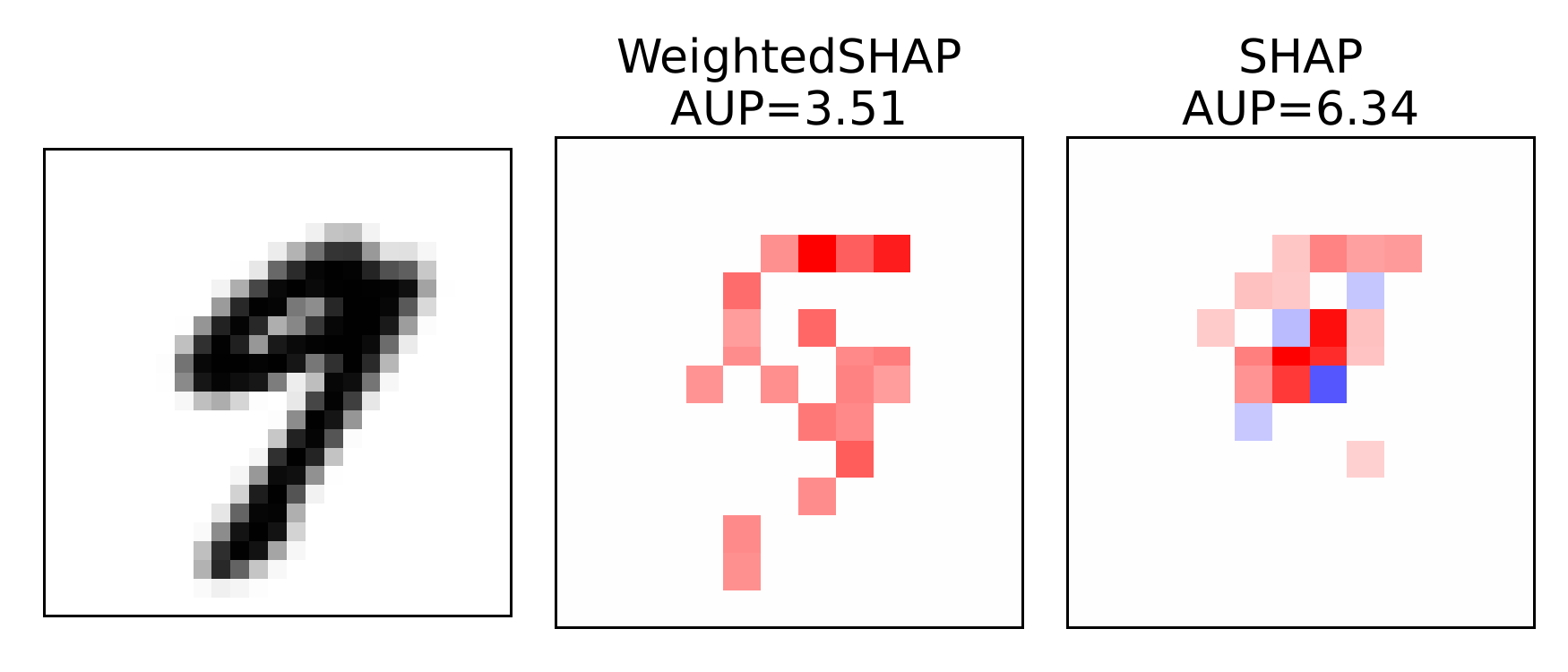}\\
    \includegraphics[width=0.425\textwidth]{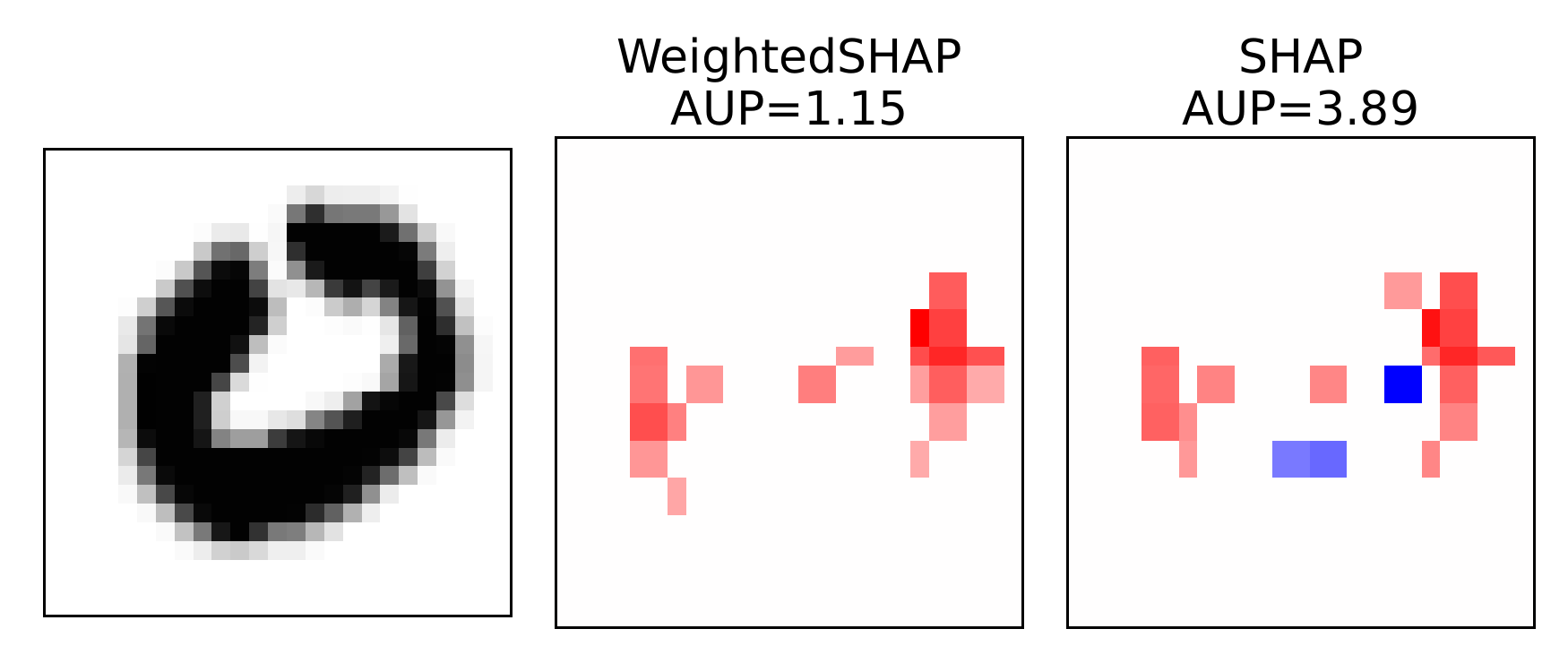}
    \includegraphics[width=0.45\textwidth]{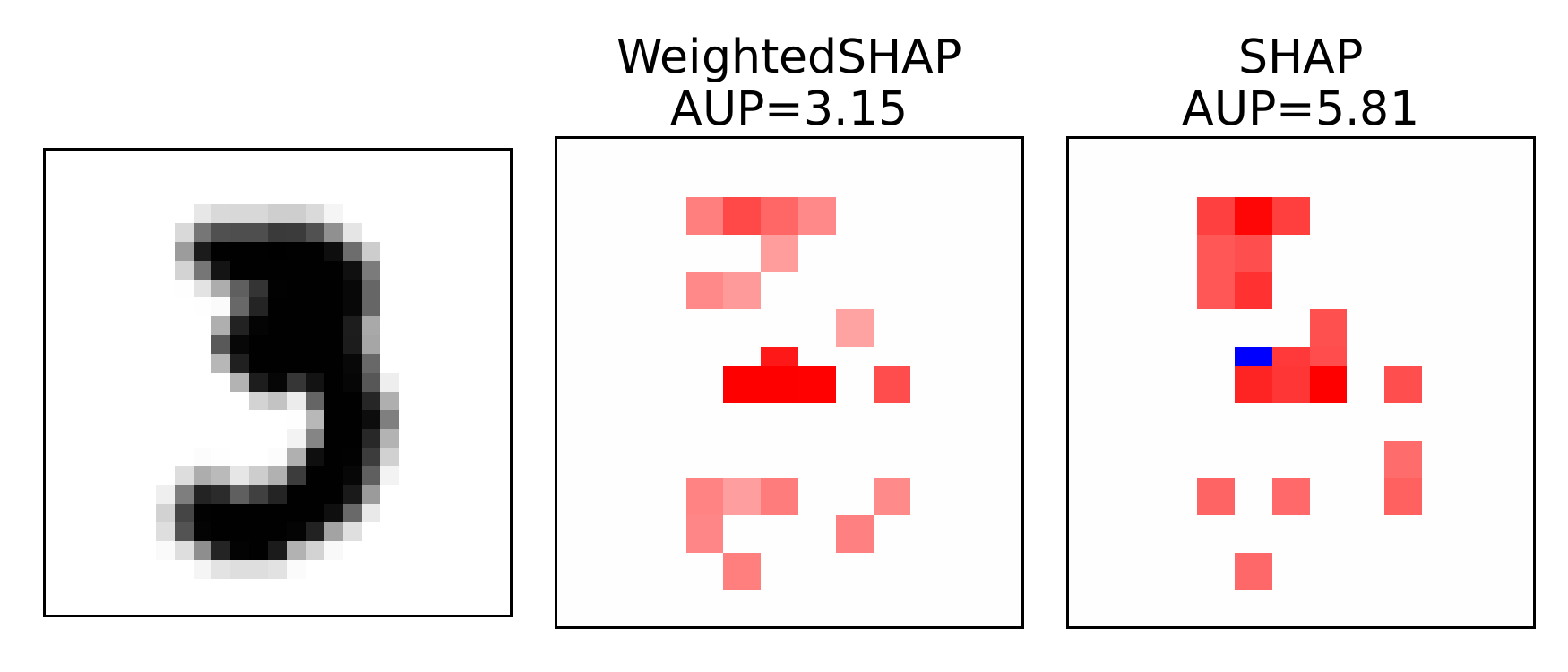}
    \caption{Illustrative examples of WeightedSHAP and SHAP attributions on MNIST images. We present the top 10\% of influential features. Red ({\it resp.} blue) color indicates the corresponding feature positively ({\it resp.} negatively) affects the model prediction. (Top) WeightedSHAP clearly captures the last stroke of nine while SHAP fails to capture it. (Bottom) While SHAP has noisy negative feature attributions described by blue pixels, WeightedSHAP provides noiseless and intuitive explanations.}
    \label{fig:MNIST_samples}
\end{figure}

\section{Conclusion}
\label{sec:conclusion}
In this paper, we provide an analysis of the widely used SHAP attribution method. We discover that even in simple natural settings, SHAP can incorrectly identify important features. Mathematically, a key limitation of SHAP is in that it assigns uniform weights to all marginal contributions. We propose WeightedSHAP which generalizes the Shapley value by relaxing the efficiency axiom. WeightedSHAP learns to pay more attention to the marginal contributions that have more signal on a prediction, assigning larger attributions for more influential features. There are several limitations of WeightedSHAP that motivate interesting future works. Here we use the AUP metric to optimize the weights because AUP is commonly used in practice. However, there is no agreed-upon metric for evaluating feature attribution methods. Different users may care about different notions of attribution. Developing variants of marginal contribution weighting optimized for different applications is an important direction of future research. We believe that the core contribution of this paper---that the uniform weighting used by SHAP can be suboptimal---still provides useful insights for these investigations. Here we focus our experiments on directly comparing WeightedSHAP with SHAP because our goal is to characterize the limitations of SHAP. There is a large body of works comparing SHAP with other attribution methods that are complementary to our work \citep{yeh2019fidelity, jethani2021fastshap}.    

\section*{Acknowledgment}
The authors would like to thank all anonymous reviewers for their constructive comments. We also would like to thank Ruishan Liu for the helpful discussion.

\appendix
\section*{Appendix}

In appendix, we provide proofs in Section~\ref{app:Proof}, a detailed discussion on the Shapley value and the semivalue in Section~\ref{app:shap_and_semi}, implementation details in Section~\ref{app:imple_details}, and additional experimental results in Section~\ref{app:add_results}. Our main findings presented in the manuscript are consistently observed with different datasets and prediction models: the influential features identified by WeightedSHAP better recapitulate the model's predictions compared to the features identified by the Shapley value. A Python-based implementation code is attached as a part of the supplementary material.

\section{Proofs}
\label{app:Proof}
In this section, we provide a proof of Theorem~\ref{thm:marginal_contribution}. We first prove a lemma that shows a closed-form expression of the marginal contribution when a covariance matrix has a exchangeable correlation structure, \textit{i.e.}, $\Sigma_{(\rho,d)} =  (1-\rho) I_{d} + \rho \mathds{1}_{d} \mathds{1}_{d} ^T$. To begin with, for $d \in \mathbb{N}$ and $S \in [d]$, we set a $d$-dimensional vector $e_S \in\{0,1\}^{d}$ whose $i$-th element is one if $i \in S$, otherwise zero. We set $e_i := e_{\{i\}}$ for $i \in [d]$.

\begin{lemma}[A closed-form expression of the Marginal contribution when a covariance matrix has an exchangeable correlation structure.]
For $d \in \mathbb{N}$, suppose $\hat{f}(x) = \hat{\beta}_0 + x^T \hat{\beta}$ for some $(\hat{\beta}_0, \hat{\beta}) \in \mathbb{R} \times \mathbb{R}^{d}$ and $X \sim \mathcal{N} (\mathbf{0}_d, \Sigma_{(\rho,d)})$ for $\rho \in [0,1)$. Then for $i, j \in [d]$, the marginal contribution of the $i$-th feature $x_i$ with respect to $j-1$ samples is given as
\begin{align*}
    \Delta_{j}(x_i) = x^T h_d(i, j) \hat{\beta}, 
\end{align*}
where
\begin{align*}
    &h_d(i,j) \\
    &:= e_i e_i^T + \frac{\rho}{1 + \rho (j-1)}  \frac{d-j}{d-1} e_i (\mathds{1}_d - e_i)^T - \frac{\rho}{1 -\rho + \rho (j-1)} \frac{j-1}{d-1} (\mathds{1}_d - e_i) e_i^T \\
    &+ \left( \frac{\rho}{1 + \rho (j-1)} - \frac{\rho}{1 - \rho + \rho (j-1)} \right) \frac{(j-1)(d-j)}{(d-1)(d-2)}  \left( (\mathds{1}_d-e_i) (\mathds{1}_d-e_i)^T - (I_d - e_i e_i ^T) \right).
\end{align*}
\label{lemma:marginal_contrib}
\end{lemma}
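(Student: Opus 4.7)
The plan is to derive a clean closed form for $v^{\mathrm{(cond)}}(S)$ first, then subtract, then average over $S$, and finally repackage the result as a bilinear form $x^{T} h_{d}(i,j)\hat{\beta}$.

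First I would use the standard Gaussian conditioning identity to write $\mathbb{E}[X_{[d]\setminus S}\mid X_{S}=x_{S}] = \Sigma_{[d]\setminus S, S}\,\Sigma_{S,S}^{-1}x_{S}$. Under the exchangeable structure, $\Sigma_{S,S}=(1-\rho)I_{k}+\rho\mathds{1}_{k}\mathds{1}_{k}^{T}$ with $k=|S|$ and $\Sigma_{[d]\setminus S, S}=\rho\,\mathds{1}_{d-k}\mathds{1}_{k}^{T}$. A one-line Sherman--Morrison computation gives $\Sigma_{S,S}^{-1}=\frac{1}{1-\rho}\bigl(I_{k}-\frac{\rho}{1-\rho+\rho k}\mathds{1}_{k}\mathds{1}_{k}^{T}\bigr)$, after which the conditional mean collapses to the rank-one expression $\frac{\rho}{1-\rho+\rho k}\bigl(\sum_{\ell\in S}x_{\ell}\bigr)\mathds{1}_{d-k}$. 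Plugging this into the linear model and using $\mathbb{E}[\hat{f}(X)]=\hat{\beta}_{0}$, I would obtain
\begin{equation*}
v^{\mathrm{(cond)}}(S)=\sum_{\ell\in S}x_{\ell}\hat{\beta}_{\ell}+\alpha(|S|)\Bigl(\sum_{\ell\in S}x_{\ell}\Bigr)\Bigl(\sum_{\ell\notin S}\hat{\beta}_{\ell}\Bigr),\qquad \alpha(k):=\frac{\rho}{1-\rho+\rho k}=\frac{\rho}{1+\rho(k-1)}.
\end{equation*}

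Next, fixing $i$ and setting $\sigma_{x}:=\sum_{\ell\in S}x_{\ell}$, $\sigma_{\beta}:=\sum_{\ell\notin S\cup\{i\}}\hat{\beta}_{\ell}$, I would directly expand
\begin{equation*}
v^{\mathrm{(cond)}}(S\cup\{i\})-v^{\mathrm{(cond)}}(S)=x_{i}\hat{\beta}_{i}+\alpha(j)x_{i}\sigma_{\beta}-\alpha(j-1)\sigma_{x}\hat{\beta}_{i}+\bigl(\alpha(j)-\alpha(j-1)\bigr)\sigma_{x}\sigma_{\beta},
\end{equation*}
so that averaging over $S\subseteq[d]\setminus\{i\}$ with $|S|=j-1$ reduces to computing three combinatorial expectations. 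For any $\ell\neq i$ we have $\Pr(\ell\in S)=(j-1)/(d-1)$, giving $\mathbb{E}[\sigma_{x}]=\frac{j-1}{d-1}(\mathds{1}_{d}-e_{i})^{T}x$ and $\mathbb{E}[\sigma_{\beta}]=\frac{d-j}{d-1}(\mathds{1}_{d}-e_{i})^{T}\hat{\beta}$. The only nontrivial piece is the cross term $\mathbb{E}[\sigma_{x}\sigma_{\beta}]$; since $\Pr(\ell_{1}\in S,\ell_{2}\notin S)=\frac{(j-1)(d-j)}{(d-1)(d-2)}$ for distinct $\ell_{1},\ell_{2}\neq i$ and vanishes when $\ell_{1}=\ell_{2}$, separating diagonal and off-diagonal pairs yields
\begin{equation*}
\mathbb{E}[\sigma_{x}\sigma_{\beta}]=\frac{(j-1)(d-j)}{(d-1)(d-2)}\,x^{T}\bigl[(\mathds{1}_{d}-e_{i})(\mathds{1}_{d}-e_{i})^{T}-(I_{d}-e_{i}e_{i}^{T})\bigr]\hat{\beta}.
\end{equation*}

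Finally, I would assemble the four contributions: $x_{i}\hat{\beta}_{i}=x^{T}e_{i}e_{i}^{T}\hat{\beta}$, $\alpha(j)x_{i}\mathbb{E}[\sigma_{\beta}]=\alpha(j)\frac{d-j}{d-1}x^{T}e_{i}(\mathds{1}_{d}-e_{i})^{T}\hat{\beta}$, $-\alpha(j-1)\mathbb{E}[\sigma_{x}]\hat{\beta}_{i}=-\alpha(j-1)\frac{j-1}{d-1}x^{T}(\mathds{1}_{d}-e_{i})e_{i}^{T}\hat{\beta}$, and $(\alpha(j)-\alpha(j-1))\mathbb{E}[\sigma_{x}\sigma_{\beta}]$. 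Substituting $\alpha(j)=\rho/(1+\rho(j-1))$ and $\alpha(j-1)=\rho/(1-\rho+\rho(j-1))$ and factoring the common $x^{T}(\cdot)\hat{\beta}$ sandwich reproduces the stated matrix $h_{d}(i,j)$ verbatim. The main obstacle is purely bookkeeping: ensuring the identifications $\sum_{\ell\neq i}x_{\ell}\hat{\beta}_{\ell}=x^{T}(I_{d}-e_{i}e_{i}^{T})\hat{\beta}$ and $(\sum_{\ell\neq i}x_{\ell})(\sum_{\ell\neq i}\hat{\beta}_{\ell})=x^{T}(\mathds{1}_{d}-e_{i})(\mathds{1}_{d}-e_{i})^{T}\hat{\beta}$ are handled carefully, and that the boundary cases $j=1$ (where $\sigma_{x}\equiv 0$) and $j=d$ (where $\sigma_{\beta}\equiv 0$) are consistent with the $(j-1)(d-j)$ prefactor vanishing. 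Everything else is a mechanical rearrangement.
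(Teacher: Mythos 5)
Your proposal is correct and follows essentially the same route as the paper's proof: Gaussian conditioning with the Sherman--Morrison inverse of $\Sigma_{S,S}$ to get the rank-one form of $v^{\mathrm{(cond)}}(S)$, expansion of the one-feature difference, and then averaging over subsets via the counting ratios $\tfrac{j-1}{d-1}$, $\tfrac{d-j}{d-1}$, and $\tfrac{(j-1)(d-j)}{(d-1)(d-2)}$. The only difference is notational --- you take expectations of the scalar sums $\sigma_x$, $\sigma_\beta$, $\sigma_x\sigma_\beta$ under the random subset, whereas the paper averages the rank-one matrices $e_S e_{S^c}^T$ entrywise; the combinatorics are identical.
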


\begin{proof}[Proof of Lemma~\ref{lemma:marginal_contrib}]
Without loss of generality, we set $\hat{\beta}_0 = 0$ as a constant intercept does not affect the marginal contribution. We denote $\Sigma_{S, S} := (1-\rho) I_{|S|} + \rho \mathds{1}_{|S|} \mathds{1}_{|S|} ^T$. Then, for any $S$, we have 
\begin{align*}
    v^{\mathrm{(cond)}} (S) &= x_S ^T \hat{\beta}_S + (\Sigma_{D \backslash S,S} \Sigma_{S,S} ^{-1} x_S) ^T \hat{\beta}_{D \backslash S} \\ 
    &=x_S ^T \left( \hat{\beta}_S + \left(\frac{\rho}{1-\rho} - \frac{|S| \rho^2}{(1-\rho)(1+\rho(|S|-1)) } \right)  \mathds{1}_{|S|} \mathds{1}_{d-|S|}  ^T \hat{\beta}_{D \backslash S} \right) \\ 
    &=x_S ^T  \hat{\beta}_S + \frac{\rho}{1-\rho + \rho |S|} x_S ^T \mathds{1}_{|S|} \mathds{1}_{d-|S|}  ^T \hat{\beta}_{D \backslash S}\\
    &=x_S ^T  \hat{\beta}_S + \frac{\rho}{1-\rho + \rho |S|} x ^T e_{S} e_{S^c} ^T \hat{\beta}.
\end{align*}
Here, the second equality is due to $\Sigma_{S, S} ^{-1} = (1-\rho)^{-1}  I_{|S|} - \frac{\rho}{(1-\rho)(1+\rho(|S|-1))} \mathds{1}_{|S|} \mathds{1}_{|S|} ^T$. Hence, we have
\begin{align*}
    v^{\mathrm{(cond)}}(S\cup \{i\}) - v^{\mathrm{(cond)}}(S) = x_i \hat{\beta}_i  - \frac{\rho}{1 -\rho + \rho |S|} x ^T e_{S} e_{S^c} ^T  \hat{\beta} + \frac{\rho}{1 + \rho |S|} x ^T e_{S\cup \{i\} } e_{(S\cup \{i\})^c} ^T\hat{\beta}.
\end{align*}

[Step 1] Since $e_{S} e_{S^c} ^T =  e_{S} (\mathds{1}_d - e_{S} )^T$.
\begin{align*}
    \frac{1}{\binom{d-1}{j-1}}\sum_{S \subseteq [d] \backslash\{i\}, |S|=j-1 } e_{S} e_{S^c} ^T &= \frac{j-1}{d-1} (\mathds{1}_d-e_i) \mathds{1}_d^T - \frac{1}{\binom{d-1}{j-1}}\sum_{S \subseteq [d] \backslash\{i\}, |S|=j-1 } e_{S} e_{S} ^T \\
    &= \frac{j-1}{d-1} (\mathds{1}_d-e_i) \mathds{1}_d^T - P(i,j),
\end{align*}
where, for $k,l \in [d]$, $P(i,j)_{kl}$ is given by
\begin{align*}
    P(i,j)_{kl} = \begin{cases}
    \frac{\binom{d-2}{j-2}}{\binom{d-1}{j-1}} =\frac{j-1}{d-1} & \quad \mathrm{if} \quad k = l, i \notin \{k,l\}\\
    \frac{\binom{d-3}{j-3}}{\binom{d-1}{j-1}} = \frac{(j-1)(j-2)}{(d-1)(d-2)} & \quad \mathrm{if} \quad k \neq l, i \notin \{k,l\}\\
    0 & \quad \mathrm{if} \quad i \in \{k,l\}\\
    \end{cases}.
\end{align*}
That is, $P(i,j) = \frac{(j-1)(j-2)}{(d-1)(d-2)} (\mathds{1}_d-e_i) (\mathds{1}_d-e_i)^T + \frac{(j-1)(d-j)}{(d-1)(d-2)} (I_d - e_i e_i ^T)$ and
\begin{align*}
    &\frac{1}{\binom{d-1}{j-1}}\sum_{S \subseteq [d] \backslash\{i\}, |S|=j-1 } e_{S} e_{S^c} ^T \\
    &= \frac{j-1}{d-1} (\mathds{1}_d-e_i) \mathds{1}_d^T - \frac{(j-1)(j-2)}{(d-1)(d-2)} (\mathds{1}_d-e_i) (\mathds{1}_d-e_i)^T - \frac{(j-1)(d-j)}{(d-1)(d-2)} (I_d - e_i e_i ^T).
\end{align*}

[Step 2]
Since $e_{S \cup \{ i\} } e_{(S\cup \{ i\})^c} ^T =  (e_{S} +e_i) (\mathds{1}_d - e_{S}-e_i )^T = e_{S}  (\mathds{1}_d - e_{S} )^T - e_S e_i ^T +e_i (\mathds{1}_d - e_{S} )^T - e_i e_i ^T$,
\begin{align*}
    &\frac{1}{\binom{d-1}{j-1}}\sum_{S \subseteq [d] \backslash\{i\}, |S|=j-1 } e_{S \cup \{ i\} } e_{(S\cup \{ i\})^c} ^T \\
    &= \frac{j-1}{d-1} (\mathds{1}_d-e_i) \mathds{1}_d^T - P(i,j) - \frac{j-1}{d-1} (\mathds{1}_d - e_i) e_i^T + \frac{d-j}{d-1} e_i (\mathds{1}_d - e_i)^T\\
    &=\frac{1}{\binom{d-1}{j-1}}\sum_{S \subseteq [d] \backslash\{i\}, |S|=j-1 } e_{S} e_{S^c} ^T - \frac{j-1}{d-1} (\mathds{1}_d - e_i) e_i^T + \frac{d-j}{d-1} e_i (\mathds{1}_d - e_i)^T.
\end{align*}

[Step 3] Therefore, $\Delta_{j}(x_i) = x^T h_d(i, j) \hat{\beta}$ where
\begin{align*}
    &h_d(i,j) \\
    &= \frac{\rho}{1 + \rho (j-1)} \left( \frac{1}{\binom{d-1}{j-1}}\sum_{S \subseteq [d] \backslash\{i\}, |S|=j-1 } e_{S} e_{S^c} ^T - \frac{j-1}{d-1} (\mathds{1}_d - e_i) e_i^T + \frac{d-j}{d-1} e_i (\mathds{1}_d - e_i)^T \right) \\ 
    & +e_i e_i^T - \frac{\rho}{1 - \rho + \rho (j-1)} \left( \frac{1}{\binom{d-1}{j-1}}\sum_{S \subseteq [d] \backslash\{i\}, |S|=j-1 } e_{S} e_{S^c} ^T \right) \\ 
    &= e_i e_i^T + \left( \frac{\rho}{1 + \rho (j-1)} - \frac{\rho}{1 - \rho + \rho (j-1)} \right) \left( \frac{1}{\binom{d-1}{j-1}}\sum_{S \subseteq [d] \backslash\{i\}, |S|=j-1 } e_{S} e_{S^c} ^T \right) \\
    &+ \frac{\rho}{1 + \rho (j-1)} \left( - \frac{j-1}{d-1} (\mathds{1}_d - e_i) e_i^T + \frac{d-j}{d-1} e_i (\mathds{1}_d - e_i)^T \right)  \\
    &= e_i e_i^T + \frac{\rho}{1 + \rho (j-1)}  \frac{d-j}{d-1} e_i (\mathds{1}_d - e_i)^T - \frac{\rho}{1 -\rho + \rho (j-1)} \frac{j-1}{d-1} (\mathds{1}_d - e_i) e_i^T \\
    &+ \left( \frac{\rho}{1 + \rho (j-1)} - \frac{\rho}{1 - \rho + \rho (j-1)} \right) \frac{(j-1)(d-j)}{(d-1)(d-2)}  \left( (\mathds{1}_d-e_i) (\mathds{1}_d-e_i)^T - (I_d - e_i e_i ^T) \right).
\end{align*}
\end{proof}

Now we prove Theorem~\ref{thm:marginal_contribution}.
\begin{proof}[Proof of Theorem~\ref{thm:marginal_contribution}]
Without loss of generality, $i \in [d_1]$. That is, a feature of interest is in the first cluster. The key idea of the proof is to only consider the effect of the same cluster. That is, for all possible subsets $S \subseteq [d] \backslash\{i\}$ such that $|S|=j-1$, we can decompose $S = S^{(1)} \cup  S^{(2)}$ such that $S^{(1)} \cap  S^{(2)} =\emptyset$, $S^{(1)} \cap [d_1] = S^{(1)}$, and $S^{(2)} \cap [d_1] =\emptyset$. In addition, from the Vandermonde's identity, we have
\begin{align*}
    \sum_{k = \min\{d_1-j-1-d, 0\} } ^{\max\{d_1-1, j-1\}} \binom{d_1-1}{k} \binom{d-d_1}{j-1-k} = \binom{d-1}{j-1}.
\end{align*}
Hence, the marginal contribution is expressed as
\begin{align*}
    \Delta_j(x_i) &= \frac{1}{\binom{d-1}{j-1}}\sum_{S \subseteq [d] \backslash\{i\}, |S|=j-1 } v ^{\mathrm{(cond)}}(S\cup \{i\}) - v ^{\mathrm{(cond)}}(S) \\
    &= \frac{1}{\binom{d-1}{j-1}} \sum_{k = \min\{d_1-j-1-d, 0\} } ^{\max\{d_1-1, j-1\}} \sum_{S_1 \subseteq [d_1] \backslash\{i\}, |S| = k } v ^{\mathrm{(cond)}}(S_1 \cup \{i\}) - v ^{\mathrm{(cond)}}(S_1) \\
    &= \frac{1}{\binom{d-1}{j-1}} \sum_{k = \min\{d_1-j-1-d, 0\} } ^{\max\{d_1-1, j-1\}} \binom{d-d_1}{j-1-k} x_{[d_1]}^T h_{d_1}(i,k+1) \hat{\beta}_{[d_1]}\\
    &= x_{[d_1]}^T \left( \frac{1}{\binom{d-1}{j-1}} \sum_{k = \min\{d_1-j-1-d, 0\} } ^{\max\{d_1-1, j-1\}} \binom{d-d_1}{j-1-k}  h_{d_1}(i,k+1) \right) \hat{\beta}_{[d_1]}.
\end{align*}
Here, the second equality is because features in $S^{(2)}$ is independent of $S^{(1)}$ and the last equality is due to Lemma~\ref{lemma:marginal_contrib}. The last term can be expressed as $x^T H(i,j) \hat{\beta}$ by setting a block diagonal matrix
\begin{align*}
    H(i,j) = \mathrm{diag} \left( \left( \frac{1}{\binom{d-1}{j-1}} \sum_{k = \min\{d_1-j-1-d, 0\} } ^{\max\{d_1-1, j-1\}} \binom{d-d_1}{j-1-k}  h_{d_1}(i,k+1) \right), 0_{(d-d_1) \times (d-d_1) } \right).
\end{align*}
When $i$-th feature is in $m$-th cluster, then a similar method gives
\begin{align*}
    H(i,j) = \mathrm{diag} \Bigg( & 0_{(d_1+\cdots+d_{m-1}) \times (d_1+\cdots+d_{m-1}) },  \\
    &\left( \frac{1}{\binom{d-1}{j-1}} \sum_{k = \min\{d_m-j-1-d, 0\} } ^{\max\{d_m-1, j-1\}} \binom{d-d_m}{j-1-k}  h_{d_m}(i,k+1) \right), \\
    &0_{(d-d_1-\cdots-d_m) \times (d-d_1-\cdots-d_m) } \Bigg).
\end{align*}
It concludes a proof.
\end{proof}

\section{The Shapley value and the semivalue}
\label{app:shap_and_semi}
In this section, we elaborate on the Shapley value and the semivalue with a focus on their mathematical properties. To this end, we denote an attribution that is based on a coalition function $v$ by $\phi(\cdot;v)$\footnote{In the manuscript we omitted $v^{\mathrm{(cond)}}$ for notational convenience.}. 

\paragraph{Shapley axioms} We first introduce the four Shapley axioms: Linearity, Null player, Symmetry, and Efficiency.
\begin{itemize}
    \item Linearity: for coalition functions $v_1, v_2 \in \{u \mid u:2^{[d]}\to \mathbb{R}\}$  and $\alpha_1, \alpha_2 \in \mathbb{R}$, $\phi(x_i, \alpha_1 v_1 + \alpha_2 v_2 ) =\alpha_1 \phi(x_i, v_1 )+\alpha_2\phi(x_i, v_2 )$.
    \item Null player: if $v(S\cup \{i\})=v(S)+c$ for every $S\subseteq [d]\backslash\{i\}$ and some $c \in \mathbb{R}$ and $v:2^{[d]}\to \mathbb{R}$, then $\phi(x_i; v)=c$.
    \item Symmetry: for every $v:2^{[d]}\to \mathbb{R}$ and every permutation $\pi$ on $[d]$, $\phi(x_i; \pi^* v) = \pi^* \phi(x_i; v)$ where $\pi^* v$ is defined as $(\pi^* v) (S) := v(\pi(S))$ for every $S \subseteq [d]$.
    \item Efficiency: for every $v:2^{[d]}\to \mathbb{R}$, $\sum_{i\in[d]} \phi(x_i;v) = v([d])$.
\end{itemize}
\citet{shapley1953} showed that the Shapley value is the unique function that satisfies the four axioms, and it has been studied in the literature \citep{owen2014sobol, lundberg2017unified}. 

\paragraph{Semivalue} The relaxation of the Shapley axioms has been a central topic in cooperative game theory \citep{monderer2002variations, weber1988probabilistic}. Our WeightedSHAP is based on the semivalue \citep{dubey1977probabilistic}, which relaxes the Efficiency axiom. We formally define the semivalue in the following definition.
\begin{definition}[semivalue]
We say a function $\phi$ is a semivalue if $\phi$ satisfies Linearity, Null player, and Symmetry axioms.
\end{definition}
In the machine learning literature, \citet{kwon2021beta} used the semivalue to quantify the importance of individual data points. In addition, they showed the theoretical properties of the semivalue in the language of the data valuation problem. In the following, we provide the counterpart theorems on the attribution problem settings. 

\begin{proposition}[Theorem 2 of \citet{kwon2021beta}]
An attribution $\phi$ is a semivalue, if and only if, the exists a weight vector $\mathbf{w}=(w_1, \dots, w_d)^T$ such that $\sum_{j=1} ^d \binom{d-1}{j-1} w_j = d$ and the attribution $\phi$ can expressed as follows.
\begin{align}
    \phi(x_i;v, \mathbf{w}) := \frac{1}{d}\sum_{j=1} ^d \binom{d-1}{j-1} w_j \Delta_j (x_i; v).
    \label{eqn:beta_shapley}
\end{align}
\label{prop:semivalue}
\end{proposition}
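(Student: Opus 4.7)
The plan is to prove both implications separately, handling the easy direction first. This is the classical semivalue characterization (cf.\ Dubey--Neyman--Weber 1981) reformulated for the attribution setting, so the strategy mirrors the one used by \citet{kwon2021beta} in the data valuation context.

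For the backward direction, suppose $\phi$ is defined by the stated formula. Linearity is immediate because $\Delta_j(x_i;v)$ is linear in $v$. Symmetry follows because $\Delta_j(x_i;v)$ averages uniformly over all subsets $S\subseteq[d]\setminus\{i\}$ of size $j-1$, a construction invariant under index permutations acting simultaneously on subsets and on $v$. For the Null player axiom, if $v(S\cup\{i\})-v(S)=c$ for all $S\subseteq[d]\setminus\{i\}$, then every $\Delta_j(x_i;v)$ equals $c$, so $\phi(x_i;v,\mathbf{w})=\tfrac{c}{d}\sum_{j=1}^d\binom{d-1}{j-1}w_j=c$ by the hypothesis $\sum_{j=1}^d\binom{d-1}{j-1}w_j=d$.

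For the forward direction, the plan is to use Linearity to write $\phi$ as a linear combination of evaluations of $v$, use Symmetry to collapse the coefficients, and apply Null player twice: first in its trivial case ($c=0$) to derive a structural identity, then in its general form to pin down the normalization. By Linearity, $\phi(x_i;v)=\sum_{S\subseteq[d]}a_{i,S}\,v(S)$ for some scalars $a_{i,S}$. Symmetry forces $a_{i,S}$ to depend only on $|S|$ and on whether $i\in S$; write $\alpha_k$ when $i\in S$ with $|S|=k$, and $\beta_k$ when $i\notin S$ with $|S|=k$. Applying Null player with $c=0$ to any $v$ that is independent of coordinate $i$ (i.e.\ $v(T\cup\{i\})=v(T)$ for all $T\subseteq[d]\setminus\{i\}$) yields $\sum_{T\subseteq[d]\setminus\{i\}}(\alpha_{|T|+1}+\beta_{|T|})\,v(T)=0$ for every such $v$, forcing $\beta_k=-\alpha_{k+1}$ for $k\in\{0,\dots,d-1\}$. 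Substituting this relation and reindexing by $j=|T|+1$ gives
\begin{equation*}
\phi(x_i;v)\;=\;\sum_{j=1}^d\alpha_j\sum_{\substack{T\subseteq[d]\setminus\{i\}\\|T|=j-1}}\bigl[v(T\cup\{i\})-v(T)\bigr]\;=\;\sum_{j=1}^d\alpha_j\binom{d-1}{j-1}\Delta_j(x_i;v).
\end{equation*}
Defining $w_j:=d\,\alpha_j$ matches the claimed form. Finally, applying Null player with arbitrary $c$ to any $v$ whose marginal contributions for $i$ are all equal to $c$ gives $c=\tfrac{c}{d}\sum_{j=1}^d\binom{d-1}{j-1}w_j$, and since this must hold for every $c\in\mathbb{R}$, we obtain the normalization $\sum_{j=1}^d\binom{d-1}{j-1}w_j=d$.

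The main obstacle is the bookkeeping in the middle step: one must verify that Symmetry really collapses the $2^d$ coefficients $a_{i,S}$ into just two sequences indexed by cardinality, and then rewrite the resulting double sum as a weighted sum of the $\Delta_j$ with exactly the combinatorial factor $\binom{d-1}{j-1}$. Everything else reduces to routine verification and careful application of the three axioms.
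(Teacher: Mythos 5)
Your proposal is correct, but it is worth noting that the paper does not actually prove this proposition: it is imported verbatim as Theorem~2 of \citet{kwon2021beta} (itself a reformulation of the classical Dubey--Neyman--Weber characterization of semivalues), so the paper's ``proof'' is by citation. Your argument supplies the standard self-contained route, and it goes through: the backward direction is routine verification, and the forward direction correctly exploits the fact that the paper's Null player axiom is the strengthened version with an arbitrary constant $c$ (rather than only $c=0$), which is exactly what pins down the normalization $\sum_{j=1}^d \binom{d-1}{j-1} w_j = d$; the $c=0$ case alone would only give the semivalue form up to the relation $\beta_k=-\alpha_{k+1}$ without fixing the total weight. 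Two small points deserve an explicit sentence in a polished write-up. First, the representation $\phi(x_i;v)=\sum_{S}a_{i,S}\,v(S)$ requires that Linearity hold over the full $2^d$-dimensional space $\{u:2^{[d]}\to\mathbb{R}\}$, which the paper's axiom does grant; and the collapse of $a_{i,S}$ to the two sequences $\alpha_k,\beta_k$ uses only permutations fixing $i$, together with the fact that any two subsets of equal size and equal $i$-membership are related by such a permutation. Second, both applications of Null player need a witness game: for the structural identity, the indicator-type games $v=\delta_T+\delta_{T\cup\{i\}}$ (independent of coordinate $i$) suffice to isolate each coefficient $\alpha_{|T|+1}+\beta_{|T|}$; for the normalization, $v(S)=c\,\mathds{1}(i\in S)$ has all marginal contributions equal to $c$. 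With those witnesses named, your proof is complete and matches what the cited reference establishes.
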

By setting $\tilde{w}_j = \binom{d-1}{j-1} w_j/d$, it can be represented as a weighted mean of the marginal contribution in Equation~\ref{eqn:semivalue}, \textit{i.e.}, $\phi_{\mathbf{\tilde{w}}} = \phi(x_i;v, \mathbf{w})$.

\begin{proposition}[Proposition 3 of \citet{kwon2021beta}]
Let $\phi_1$ and $\phi_2$ be two attribution methods such that for any $v$, the sum of feature attributions are same, \textit{i.e.},
\begin{align*}
    \sum_{i=1} ^d \phi_1(x_i; v) =\sum_{i=1} ^d \phi_2(x_i; v). 
\end{align*}
Then, the two attribution methods are identical, \textit{i.e.}, $\phi_1 = \phi_2$.
\end{proposition}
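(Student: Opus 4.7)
The plan is to use the semivalue representation theorem (Proposition~\ref{prop:semivalue}) to reduce the equality of two attribution methods to the equality of their weight vectors, and then to probe that identity with a one-parameter family of test coalition functions that isolate the weights one at a time. Concretely, by Proposition~\ref{prop:semivalue} each $\phi_k$ admits a representation
\[
\phi_k(x_i; v) = \frac{1}{d}\sum_{j=1}^d \binom{d-1}{j-1} w_j^{(k)} \, \Delta_j(x_i; v), \qquad k=1,2,
\]
for some weight vectors $\mathbf{w}^{(k)}=(w_1^{(k)},\dots,w_d^{(k)})$. Since a semivalue is determined by its weight vector, it suffices to show $w_j^{(1)} = w_j^{(2)}$ for every $j \in [d]$.

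To extract these equations, I would plug a simple family of indicator coalition functions into the hypothesis. Take $v_k(S) := \mathds{1}[|S| = k]$ for $k = 0, 1, \dots, d$. Because $v_k(S\cup\{i\}) - v_k(S)$ depends only on $|S|$, a short computation gives $\Delta_j(x_i; v_k) = \mathds{1}[j=k] - \mathds{1}[j=k+1]$, hence
\[
\sum_{i=1}^d \Delta_j(x_i; v_k) \;=\; d\bigl(\mathds{1}[j=k] - \mathds{1}[j=k+1]\bigr).
\]
Substituting into the hypothesis $\sum_i \phi_1(x_i;v_k) = \sum_i \phi_2(x_i;v_k)$ and writing $\delta_j := w_j^{(1)} - w_j^{(2)}$, the identity collapses to
\[
\binom{d-1}{k-1}\,\delta_k \;=\; \binom{d-1}{k}\,\delta_{k+1} \qquad (k=1,\dots,d-1),
\]
with the boundary equation from $k=0$ giving $\delta_1 = 0$. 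An immediate induction on $k$ then forces $\delta_j = 0$ for every $j$, so $\mathbf{w}^{(1)} = \mathbf{w}^{(2)}$ and consequently $\phi_1 = \phi_2$ on every coalition function.

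The hard part of this proof is not conceptual but bookkeeping: one must verify that the probing functions $v_k$ really do yield a triangular (hence invertible) system in the $\delta_j$, which in turn requires being careful about the degenerate boundary cases at $k=0$ and $k=d$ (where terms with $j-1<0$ or $j>d$ disappear) and about the normalization $\sum_j \binom{d-1}{j-1}w_j^{(k)}=d$ that both weight vectors inherit. A secondary subtlety worth flagging is that the result tacitly presumes $\phi_1,\phi_2$ are semivalues, i.e.\ that they satisfy Linearity, Null player, and Symmetry; this is precisely what unlocks the representation used in the first step, and without it one could add a non-linear symmetric perturbation that preserves the total sum but changes individual attributions.
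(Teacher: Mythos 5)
The paper does not actually prove this statement; it is quoted verbatim from \citet{kwon2021beta} (their Proposition~3) with the proof deferred to that reference, so there is no in-paper argument to compare against. Your proof is correct and is essentially the standard argument for this fact: reduce to the weight vectors via the semivalue representation of Proposition~\ref{prop:semivalue}, then test the total-sum identity against coalition functions supported on a single coalition size. Your computation $\Delta_j(x_i; v_k)=\mathds{1}[j=k]-\mathds{1}[j=k+1]$ is right, and the resulting bidiagonal system $\binom{d-1}{k-1}\delta_k=\binom{d-1}{k}\delta_{k+1}$ does force $\delta\equiv 0$. One small repair: if coalition functions are required to satisfy $v(\emptyset)=0$ (as with $v^{\mathrm{(cond)}}$ in this paper), then $v_0=\mathds{1}[|S|=0]$ is inadmissible and your anchor $\delta_1=0$ is unavailable; but the $k=d$ equation gives $\delta_d=0$, and the same recursion run downward yields $\delta_j=0$ for all $j$, so nothing is lost. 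Your closing remark is also well taken and worth keeping: as literally stated the proposition is false for arbitrary ``attribution methods'' (permuting the coordinates of $\phi_1$ preserves the sum but not the values), so the hypothesis that $\phi_1,\phi_2$ are semivalues --- which is what licenses the representation in your first step --- is essential and is implicit in the paper's statement.
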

That is, if there are two attributions with the same total sum of attributions across all coalition function $v$, then they are identical. Although the semivalue is not unique, but it is \textit{almost} unique in that it is the only semivalue with a particular sum of attributions. 

We deploy the concept of the semivalue into the attribution problem, which makes the main difference from \citet{kwon2021beta}. In contrast to the data valuation problem, where the marginal contribution based on a small coalition size is preferred because it is more effective to capture the label errors, we observe that it is not necessarily true in the attribution problem. As we discussed in the manuscript, the marginal contribution based on a large coalition size $\Delta_d$ can be preferred due to the signal, but a weighted mean can be preferred to reduce an estimation error.

\section{Implementation details}
\label{app:imple_details}
In this section, we provide implementation details used in Example in Section~\ref{sec:proposed} and numerical experiments in Section~\ref{sec:experiment}. We present implementation algorithms in Section~\ref{app:details_algorithm}, datasets in Section~\ref{app:details_datasets}, and experimental settings in Section~\ref{app:details_experimental}. Our Python-based implementations are available at \url{https://github.com/ykwon0407/WeightedSHAP}.

\subsection{Implementation algorithm}
\label{app:details_algorithm}

Given a finite set $\mathcal{W}$, an easy-to-compute utility function $\mathcal{T}$ and the marginal contribution estimates $\Delta_j(x_i)$, the optimal weight $\mathbf{w}^*$ can be achieved by iteratively evaluating the utility $\mathcal{T}$ for each attribution method $\phi_{\mathbf{w}}$ with $\mathbf{w} \in \mathcal{W}$. This procedure is described in Algorithm~\ref{alg:weightedSHAP}. 

\begin{algorithm}[h]
\caption{Computation of WeightedSHAP}
\begin{algorithmic}
\Require A finite set of weights $\mathcal{W}$. A utility function $\mathcal{T}$. A set of estimates for the marginal contributions $\Delta_j (x_i)$ obtained from Algorithm~\ref{alg:marginal_contributions}.
\Procedure{}{}
\State Initialize a constant $C_{\mathcal{T}}=-\infty$.
\For{$\mathbf{w} \in \mathcal{W}$}
\State Compute $\phi_{\mathbf{w}} (x_i) = \sum_{j=1} ^d \mathbf{w}_j \Delta_j(x_i) $ for all $i \in [d]$.
\State Rank features $(x_{(1)}, \dots, x_{(d)})$ based on the absolute value of their attribution, \textit{i.e.}, $|\phi_{\mathbf{w}} (x_{(1)})| \geq \dots \geq |\phi_{\mathbf{w}} (x_{(d)})|$.
\State Evaluate $\mathcal{T}(\phi_{\mathbf{w}};x, \hat{f})$. 
\If{$C_{\mathcal{T}} \leq \mathcal{T}(\phi_{\mathbf{w}};x, \hat{f})$}
\State $C_{\mathcal{T}} \leftarrow \mathcal{T}(\phi_{\mathbf{w}};x, \hat{f})$ and $\mathbf{w}^* (\mathcal{T}, \mathcal{W}) \leftarrow \mathbf{w}$.
\EndIf
\EndFor
\State $\phi_{\mathrm{WeightedSHAP}}(\mathcal{T}, \mathcal{W}):= \phi_{\mathbf{w}^*(\mathcal{T}, \mathcal{W})}$
\EndProcedure
\end{algorithmic}
\label{alg:weightedSHAP}
\end{algorithm}

\begin{algorithm}[h]
\caption{Estimation of the marginal contributions $\Delta_j(x_i)$}
\begin{algorithmic}
\Require A conditional coalition estimate $\hat{v}^{\mathrm{(cond)}}$. A terminating threshold $\rho$.
\Procedure{}{}
\State Initialize $\hat{\rho}=2\rho$, $B=1$, $\Delta_j ^{(0)} (x_i) = 0$ for all $i,j \in [d]$.
\While{$\hat{\rho} \geq \rho$}
\For{$i \in [d]$}
\State $S \leftarrow \emptyset$.
\State Generate a random order of $[d]\backslash\{i\}$ and denote it by $\eta$.
\For{$j \in [d]$}
\State $\Delta_j ^{(B)} (x_i) \leftarrow \frac{B-1}{B} \Delta_j ^{(B-1)} (x_i) +  \frac{1}{B} ( \hat{v}^{\mathrm{(cond)}}(S\cup\{i\})- \hat{v}^{\mathrm{(cond)}} (S))$.
\State $S \leftarrow S \cup \eta(j)$.
\EndFor
\EndFor
\State Compute the Gelman-Rubin statistics for $\{\Delta_j ^{(b)} (x_i)\}_{b=1} ^{B}$ and take its maximum value among $i,j \in [d]$.
\State $B \leftarrow B + 1$.
\EndWhile
\EndProcedure
\end{algorithmic}
\label{alg:marginal_contributions}
\end{algorithm}

Throughout our experiments, $\mathcal{T}$ is the negative value of the AUP defined in \ref{eqn:prediction_recovery} and $\mathcal{W}$ is a set of 13 different weights, namely, $\mathcal{W} = \{\Delta_1, \Delta_d\} \cup \{\phi_{\mathbf{w}^{\mathrm{Beta}}(\alpha, \beta)} \mid (\alpha, \beta) \in \{ (16,1), (8,1), (4,1), (2,1), (1,1),$ $(1,2), (1,4), (1,8), (1,16), (1,32) \}  \}.$ Here, $\mathbf{w}^{\mathrm{Beta}}(\alpha, \beta) \in [0,1]^{d}$ is defined as
\begin{align*}
    (\mathbf{w}^{\mathrm{Beta}}(\alpha, \beta))_j := \binom{d-1}{j-1} \frac{\mathrm{Beta}(j+\beta-1, d-j+\alpha)}{\mathrm{Beta}(\alpha, \beta)},
\end{align*}
where $\mathrm{Beta}(\alpha, \beta)=\Gamma(\alpha)\Gamma(\beta)/\Gamma(\alpha, \beta)$ is the Beta function and $\Gamma(\cdot)$ is the Gamma function. Inspired by \citet{kwon2021beta}, the key motivation of this function is that (i) this type of a functional form is known to satisfy the condition in Proposition~\ref{prop:semivalue} and (ii) it has a closed-form expression, so it is easy-to-compute. Note that $\mathbf{w}^{\mathrm{Beta}}(1,1)=d^{-1} \mathds{1}_d $ generates the Shapley value, and our algorithm guarantees the superior performance of WeightedSHAP to the Shapley value as $\mathbf{w}^{\mathrm{Beta}}(1,1) \in \mathcal{W}$. Since the Beta weight vector $\mathbf{w}^{\mathrm{Beta}}(\alpha, \beta)$ distributes weights throughout all the marginal contributions $\Delta_1, \dots, \Delta_d$, we can think of it as a regularized version of $\mathds{1}(j=1)$ and $\mathds{1}(j=d)$, which generates $\Delta_1$ and $\Delta_d$, respectively. 

The key input of Algorithm~\ref{alg:weightedSHAP} is the marginal contribution estimates. To obtain this estimate, we first need to estimate a conditional coalition function $v^{\mathrm{(cond)}}$. Following the literature \citep{frye2020shapley, jethani2021fastshap, jethani2021have}, we obtain this by training a surrogate model\footnote{It is known that this surrogate model unbiasedly estimates a conditional expectation of a prediction value given a subset of features under mild conditions \citep{frye2020shapley, covert2020understanding}.} that takes as input a subset of input features and outputs a conditional expectation of a prediction value given the same subset. As for the surrogate model, we use a multilayer perceptron model with the two hidden layers, and each layer has 128 neurons with the ELU nonlinear activation function \citep{clevert2015fast}. For the classification (\textit{resp}. regression) tasks, we consider the Kullback-Libeler divergence (\textit{resp}. MSE loss) as a loss function as suggested in \citep{jethani2021fastshap}. We use the held-out dataset to learn a surrogate model, which corresponds to 10\% of the original dataset. Throughout our experiments, we mostly follow the hyperparameters used in the previous work \citet{jethani2021fastshap}. For instance, we use the same Adam optimizer \citep{kingma2014adam} with the initial learning rate of $10^{-3}$, the epochs of 100, and the mini-batch size of 64. 

With the estimator $\hat{v}^{\mathrm{(cond)}}$, the marginal contributions are estimated as in Algorithm~\ref{alg:marginal_contributions}. Using the fact that the marginal contribution in Equation~\eqref{eqn:marginal_contribution} is defined as a simple average over all the possible subsets $S$ with the same coalition size, we use a sampling-based algorithm \citep{ghorbani2019data, kwon2021beta}. It is well known that a sampling-based algorithm guarantees the convergence to the true marginal contribution value when sampling procedures are repeated. In our experiments, we use a finite number of samplings based on the Gelman-Rubin stopping criteria \citep[Equation (4)]{vats2021revisiting}. We regard the repeated sampling procedures as 10 Markov chains and compute the Gelman-Rubin statistic for the marginal contribution $\Delta_j (x_i)$ for all $i,j \in[d]$. We take their maximum for every iteration and stop the sampling procedure if this maximum is smaller than a prefixed terminating threshold. We use a terminating threshold $\rho=1.005$ which is much smaller than a typical terminating threshold $1.1$ \citep{gelman1995bayesian}.

\subsection{Datasets}
\label{app:details_datasets}
In Section~\ref{sec:experiment} and~\ref{app:additional_results_datasets}, the four real-world datasets are used for the regression tasks. The one synthetic dataset (\texttt{gaussian}) and the seven real-world datasets are used for the classification tasks. All the real-world datasets are downloaded from the UCI repository \citep{Dua2019} or the OpenML platform\footnote{Website: \url{https://www.openml.org/}}. 
As for the synthetic classification dataset (\texttt{gaussian}), we consider the following distribution. For $d=30$ and $\rho=0.25$,
\begin{align*}
    X \sim \mathcal{N}(\mathbf{0}_d, \Sigma_{(\rho,d)}), \quad Y = \mathrm{Bern}(p(X)),
\end{align*}
where $p(X) := \exp(X^T \beta^*)/(1 + \exp(X^T \beta^*))$ for $\beta^*=(1, 0.98, \dots, 0.82, 0, \dots, 0)^T$. The first 10 features are associated with an output and the last 20 features are not directly associated. 

As for the real datasets, from a given raw dataset, we only use the continuous features and exclude an observation if some information is missing. Table~\ref{tab:summary_real_datasets} summarizes the basic information after preprocessing the raw datasets. From this dataset, we randomly take $10,000$ samples and consider it as the entire dataset. We normalize every feature to have zero mean and unit variance. In addition, we generate spurious features that are associated with the original features, but not directly associated with an output by its construction. We repeat the following procedure until the number of features becomes three times the original number of features: given an input matrix $X \in \mathbb{R}^{n \times p}$, we generate a new column $X_{\mathrm{new}} \in \mathbb{R}^{n \times 1}$ by $X_{\mathrm{new}} = \frac{\rho}{1+\rho(p-1)} X\mathds{1}_p + \sqrt{1- \frac{\rho^2 p }{1+\rho(p-1)}} \varepsilon$ where $\varepsilon \sim \mathcal{N}(0,1)$ is randomly drawn from a standard Gaussian distribution and $\rho$ is the average of off-diagonal terms in the correlation matrix of the original input matrix. This procedure is to append more features while keeping the average correlation between two features same. We observe several preprocessing steps we deploy do not affect our main result that WeightedSHAP consistently assigns large values for more influential features. However, they are deployed to increase the stability in training a prediction model and to reflect practical data analysis situations (e.g., features are highly correlated, but not necessarily every feature is associated with an output).

\subsection{Experimental settings}
\label{app:details_experimental}

\paragraph{Experiments in Section~\ref{sec:motivational_examples}} As we have closed-form expressions under the Gaussian assumption by Theorem~\ref{thm:marginal_contribution}, we do not use any estimation algorithm here. As for the Figure~\ref{fig:motivation_gaussian_two_features}, explicit forms for the Shapley value and the optimal order have been used. As for the Figure~\ref{fig:motivation_gaussian_more_than_two_features}, we first generate $10,000$ training data points as follows. For $d=100$ and $\rho \in \{0.2, 0.6\}$, 
\begin{align*}
    X \sim \mathcal{N}(\mathbf{0}_d, \Sigma_{(\rho,d)}), \quad Y = X^T \beta^* + 2\times \varepsilon,
\end{align*}
where $\varepsilon \sim \mathcal{N}(0, 1)$ is a random Gaussian error and $\beta^*=(1, 0.99, \dots, 0.81, 0,\dots, 0)^T$. That is, the first $20$ features are associated with an output, but the last $80$ features are not directly associated with the output. We train a linear model based on the $10,000$ samples and evaluate feature attributions for the held-out 100 samples. All the results in Figure~\ref{fig:motivation_gaussian_more_than_two_features} is based on this held-out test dataset.

\paragraph{Experiment in Section~\ref{sec:proposed}} In contrast to the experiment in Section~\ref{sec:motivational_examples}, we estimate the marginal contributions following Algorithm~\ref{alg:marginal_contributions}. Other experiment settings are same as in Section~\ref{sec:motivational_examples}.

\begin{table}[t]
    \centering
    \caption{A summary of real-world datasets used in experiments.}
    \begin{tabular}{lcccccccccccc}
    \toprule
    Dataset & Sample size  & Input dimension & Source    \\ 
    \midrule
    \textbf{Regression}\\
    \texttt{abalone} & 4177  & 10 & UCI Repository \\
    \texttt{boston} & 506  & 13 & \citet{harrison1978hedonic} \\
    \texttt{airfoil} & 1503  & 5 & UCI Repository \\
    \texttt{whitewine} & 4898  & 11 & UCI Repository \\
    \midrule
    \textbf{Classification}\\
    \texttt{cpu} & 8192 & 22 & \url{https://www.openml.org/d/761} \\
    \texttt{fraud} & 284807 & 31 & \citet{dal2015calibrating} \\
    \texttt{phoneme} & 5404 & 6 & \url{https://www.openml.org/d/1489}  \\
    \texttt{wind} & 6574 & 15 &  \url{https://www.openml.org/d/847}  \\
    \texttt{adult} & 32561 & 11 & UCI Repository \\
    \texttt{2dplanes} & 40768 & 11 & \url{https://www.openml.org/d/727} \\
    \texttt{click} & 1997410 & 12 &  \url{https://www.openml.org/d/1218} \\
    \bottomrule
    \end{tabular}
    \label{tab:summary_real_datasets}
\end{table}

\paragraph{Experiments in Section~\ref{sec:experiment}} We first split the original dataset into four parts: a training dataset, a validation dataset, a dataset to obtain a surrogate model, and a test dataset. The ratio between them is 70\%, 10\%, 10\%, and 10\%, but we take $n_{\mathrm{test}}:= \max(0.1 \times N, 100)$ samples for a test dataset. Here $N$ denotes the number of the original sample size. The training and validation datasets are used for a prediction model $\hat{f}$, and a dataset to obtain a surrogate model is used for $\hat{v}^{\mathrm{(cond)}}$. After that we compute WeightedSHAP for $n_{\mathrm{test}}$ samples. All the results (\textit{e.g.}, prediction recovery error, MSE, and AUC) are based on these $n_{\mathrm{test}}$ test samples.

For a boosting model, we use the \texttt{lightgbm} algorithm \citep{ke2017lightgbm} with the learning rate of $0.005$ and $15$ final leaves and apply the early stopping with the $25$ patience epoch. For the classification (\textit{resp.} regression) tasks, we use the cross entropy (\textit{resp.} MSE) loss function. As for the estimation of WeightedSHAP, see Section~\ref{app:details_algorithm}. In Section~\ref{app:additional_results_linear_model}, we simply use the least squares estimator for a linear prediction model.

\section{Additional experimental results}
\label{app:add_results}
In this section, we provide additional experimental results. Our main findings presented in the manuscript are consistently observed on different models and datasets. We first present estimation error analysis on a different dataset in Section~\ref{app:relative_difference_02}, additional experimental results when a prediction model is linear in Section~\ref{app:additional_results_linear_model}, and additional experimental results on different classification datasets in Section~\ref{app:additional_results_datasets}.

\subsection{Estimation error analysis when $\rho=0.2$}
\label{app:relative_difference_02}

\begin{figure}[h]
    \centering
    \includegraphics[width=0.3\textwidth]{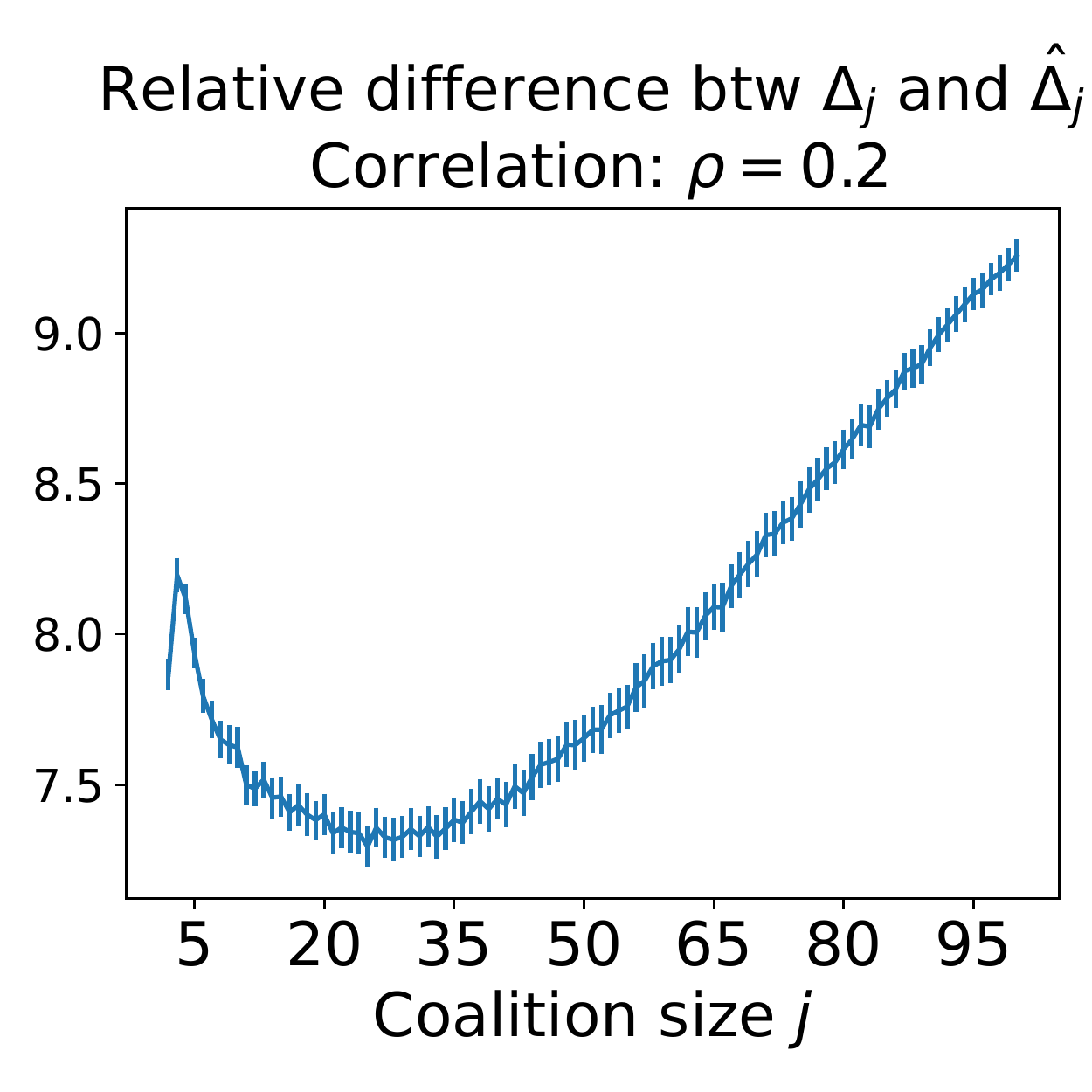}
    \caption{Illustrations of the relative difference between the true marginal contribution $\Delta_j$ and its estimate $\hat{\Delta}_j$ as a function of the coalition size $j \in [100]$. We consider the same setting used in Figure~\ref{fig:motivation_gaussian_more_than_two_features} with $\rho=0.2$. The $\Delta_{100}$ is shown to have the largest relative difference.}
    \label{fig:estimation_error_of_the_marginal_contributions_02}
\end{figure}

We provide an additional estimation error result with a different dataset. In contrast to the Figure~\ref{fig:estimation_error_of_the_marginal_contributions} shown in the manuscript, we generate data points from the Gaussian distribution $\mathcal{N}(\mathbf{0}_{100}, \Sigma_{(0.2, 100)})$, \textit{i.e.}, $\rho=0.2$. 

Figure~\ref{fig:estimation_error_of_the_marginal_contributions_02} shows the relative difference between the true marginal contribution $\Delta_j$ and its estimate $\hat{\Delta}_j$ as a function of the coalition size $j \in \{1, \dots, 100\}$. Similar to Figure~\ref{fig:estimation_error_of_the_marginal_contributions}, the relative difference of $\Delta_{100}$ has the largest estimation error. Given that $\Delta_{100}$ is the most informative marginal contribution in Figure~\ref{fig:motivation_gaussian_more_than_two_features}, it again suggests the use of a weighted mean of the marginal contributions. 

\subsection{Additional experimental results when a prediction model is linear}
\label{app:additional_results_linear_model}
We conduct additional analyses when a prediction model $\hat{f}(x)$ is linear. We use the same experimental setting as in Section~\ref{sec:experiment}, but only a prediction model is changed from a boosting model to a linear model. 

Figures~\ref{fig:prediction_error_regression_linear} and~\ref{fig:inclusion_auc_regression_linear} (\textit{resp.} Figures~\ref{fig:prediction_error_classification_linear} and~\ref{fig:inclusion_auc_classification_linear}) compare WeightedSHAP with the Shapley on the regression tasks (\textit{resp.} the classification tasks). In all experiments, the prediction recovery error of WeightedSHAP is significantly smaller than the MCI and the Shapley value. Moreover, the performance metrics, namely MSE and AUC, of WeightedSHAP are better than or at least comparable to the MCI and the Shapely value. In short, our findings are consistently observed with a linear prediction model, showing the influential features identified by WeightedSHAP are better able to recapitulate the model's predictions compared to the features identified by baseline attribution methods.

\begin{figure}[t]
    \centering
    \subfigure[Illustrations of the prediction recovery error curve on the four regression datasets. The lower, the better.]{
    \includegraphics[width=0.22\textwidth]{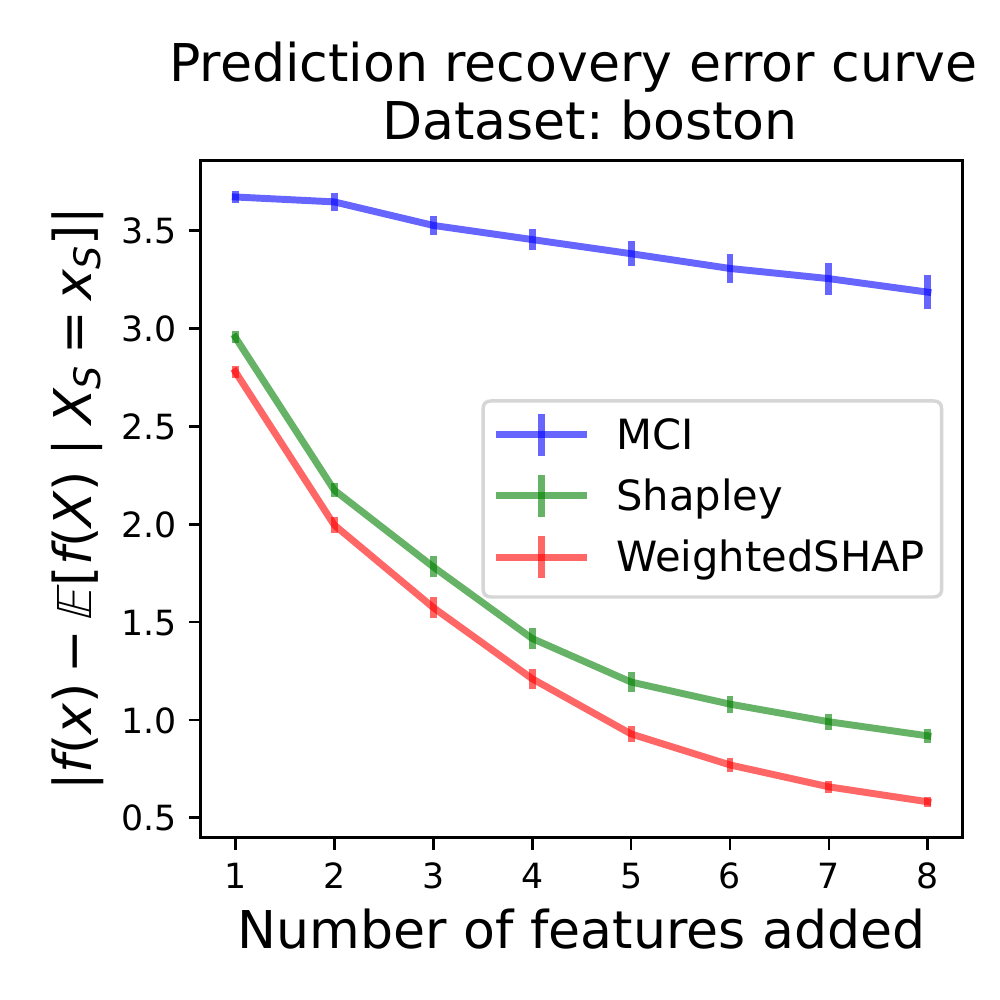}
    \includegraphics[width=0.22\textwidth]{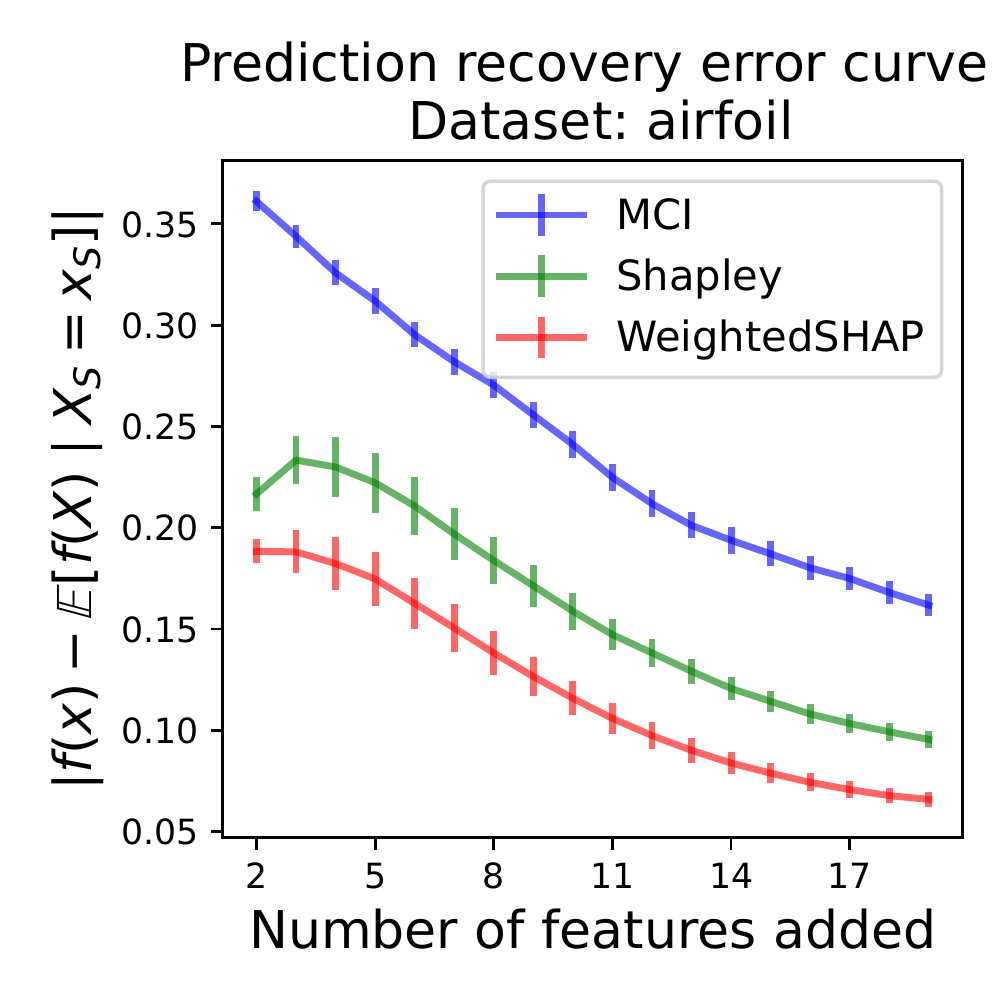}
    \includegraphics[width=0.22\textwidth]{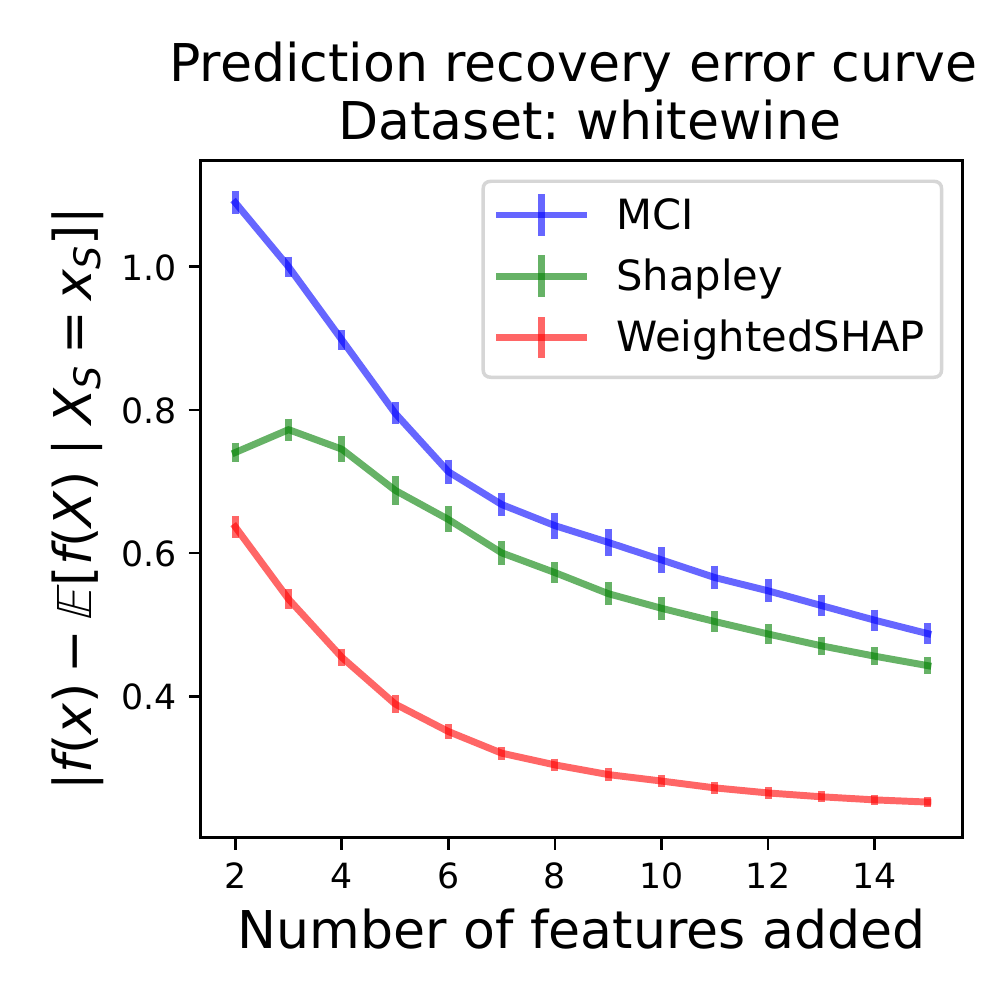}
    \includegraphics[width=0.22\textwidth]{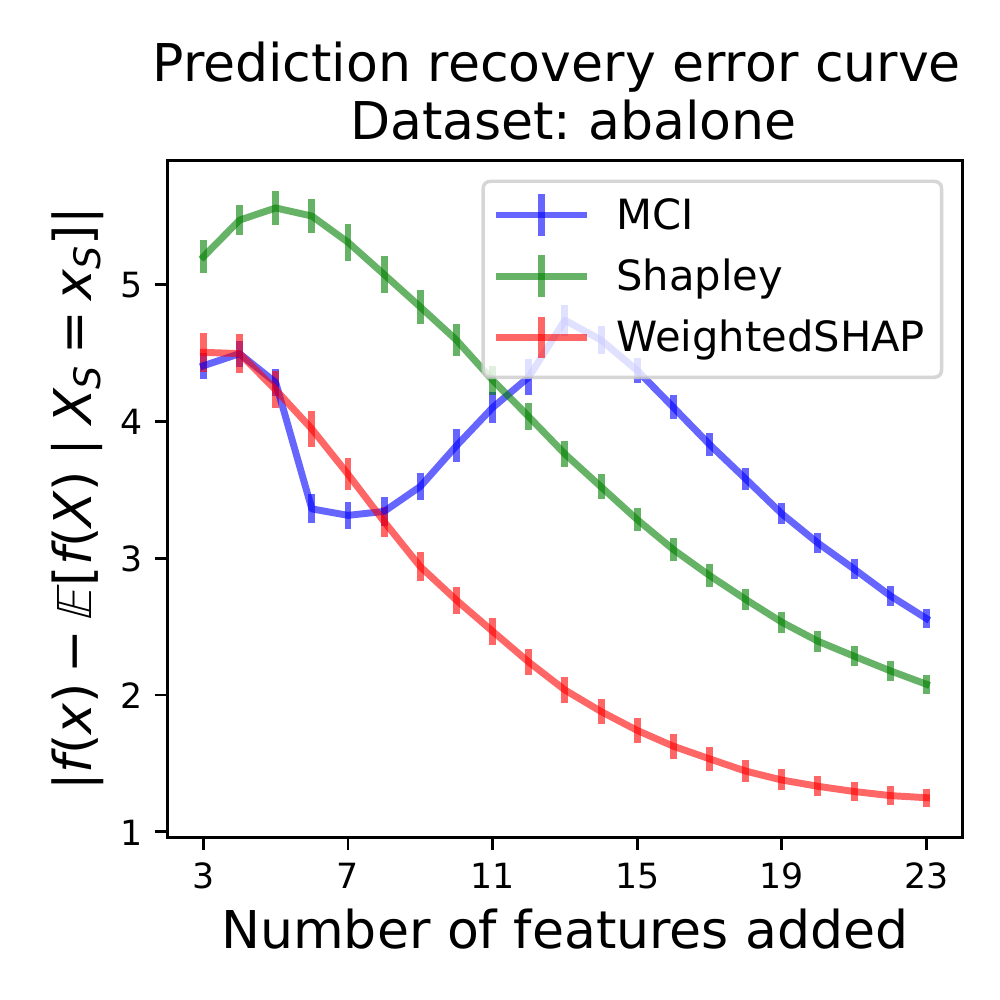}
    \label{fig:prediction_error_regression_linear}
    }
    \subfigure[Illustrations of the Inclusion MSE curve on the four regression datasets. The lower, the better.]{
    \includegraphics[width=0.22\textwidth]{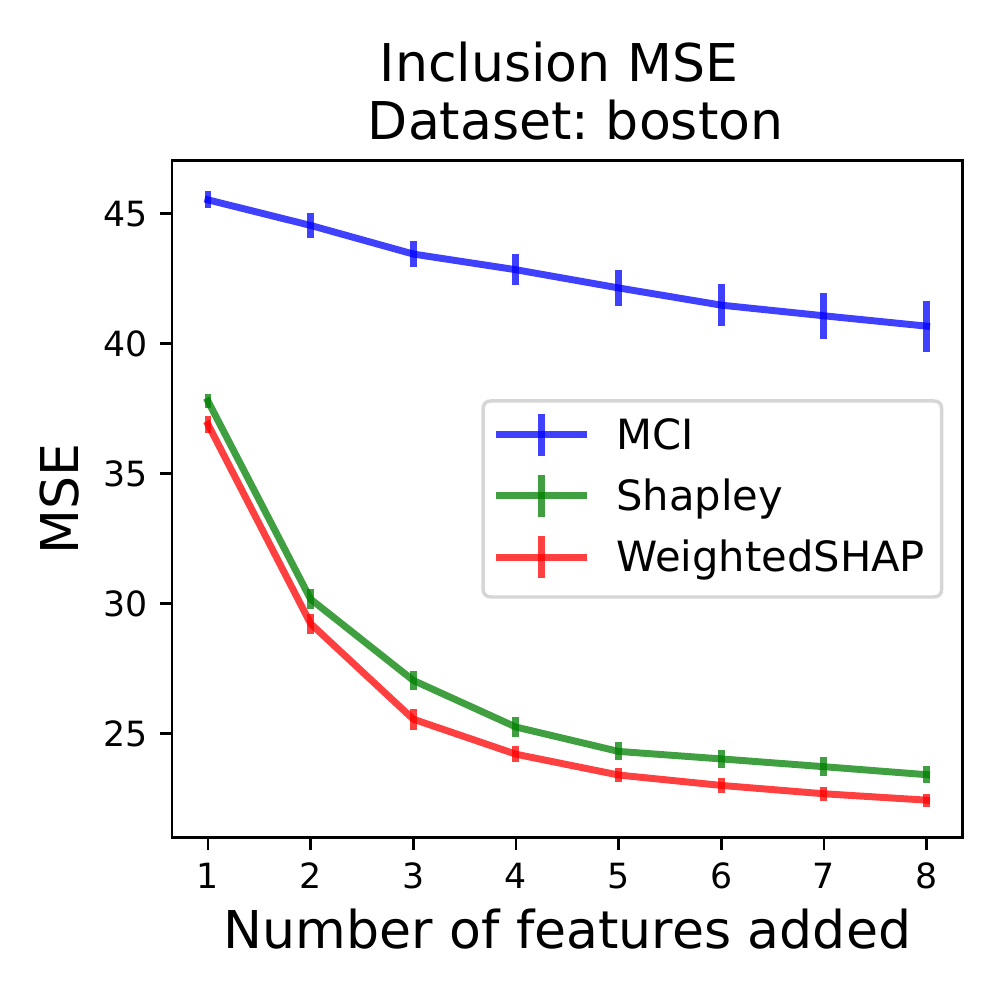}
    \includegraphics[width=0.22\textwidth]{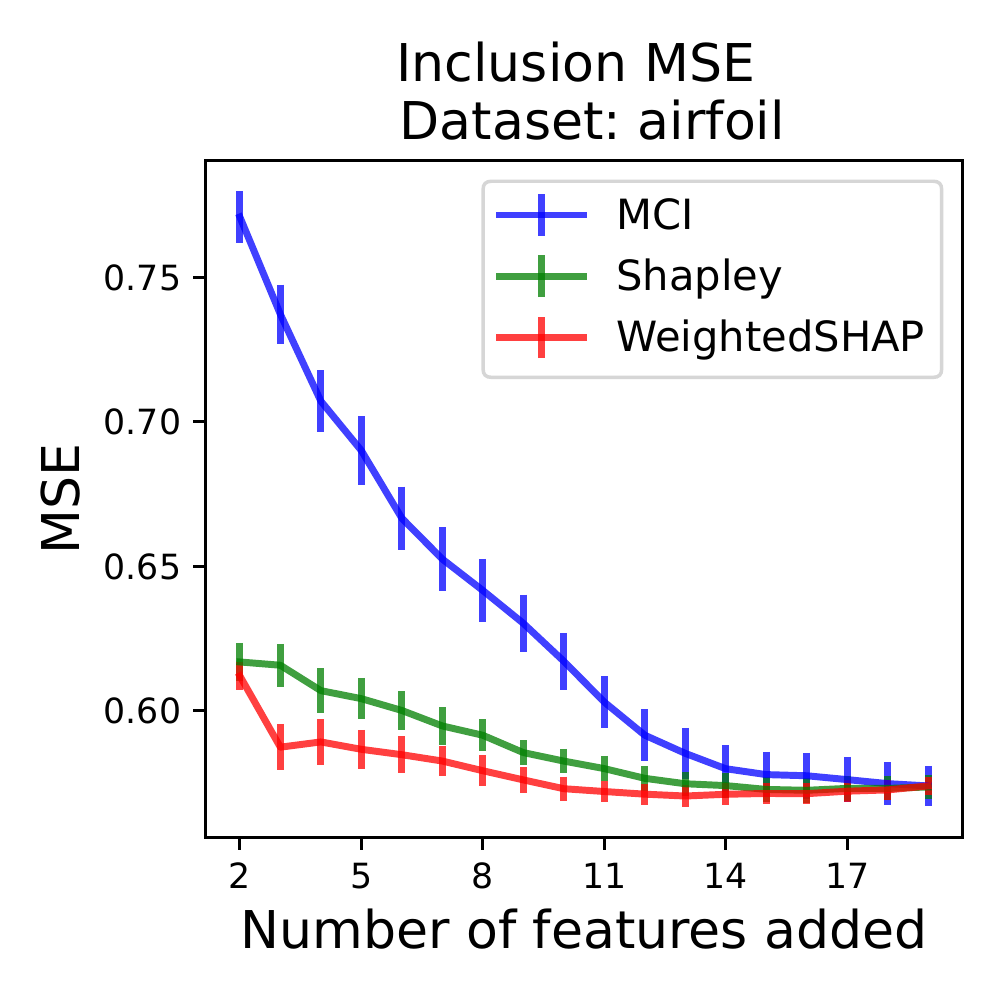}
    \includegraphics[width=0.22\textwidth]{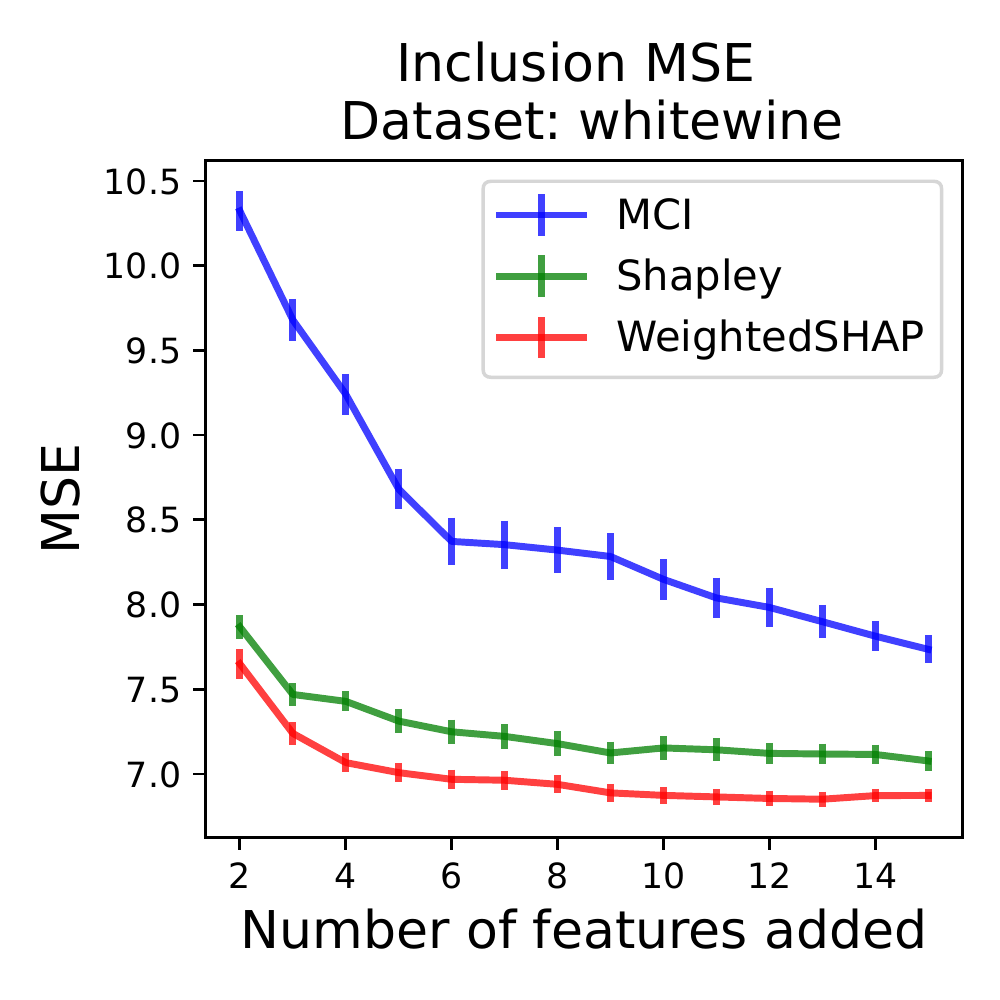}
    \includegraphics[width=0.22\textwidth]{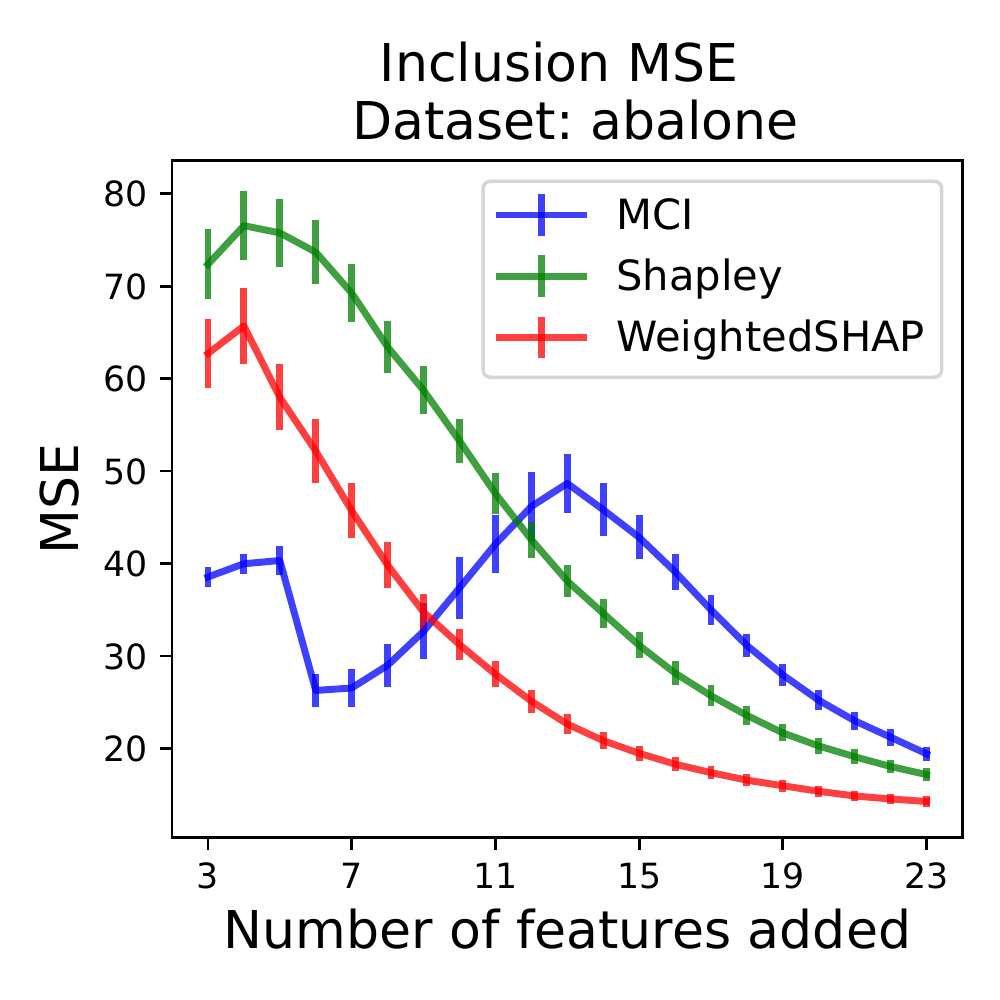}
    \label{fig:inclusion_auc_regression_linear}
    }
    \subfigure[Illustrations of the prediction recovery error curve on the four binary classification datasets. The lower, the better.]{
    \includegraphics[width=0.22\textwidth]{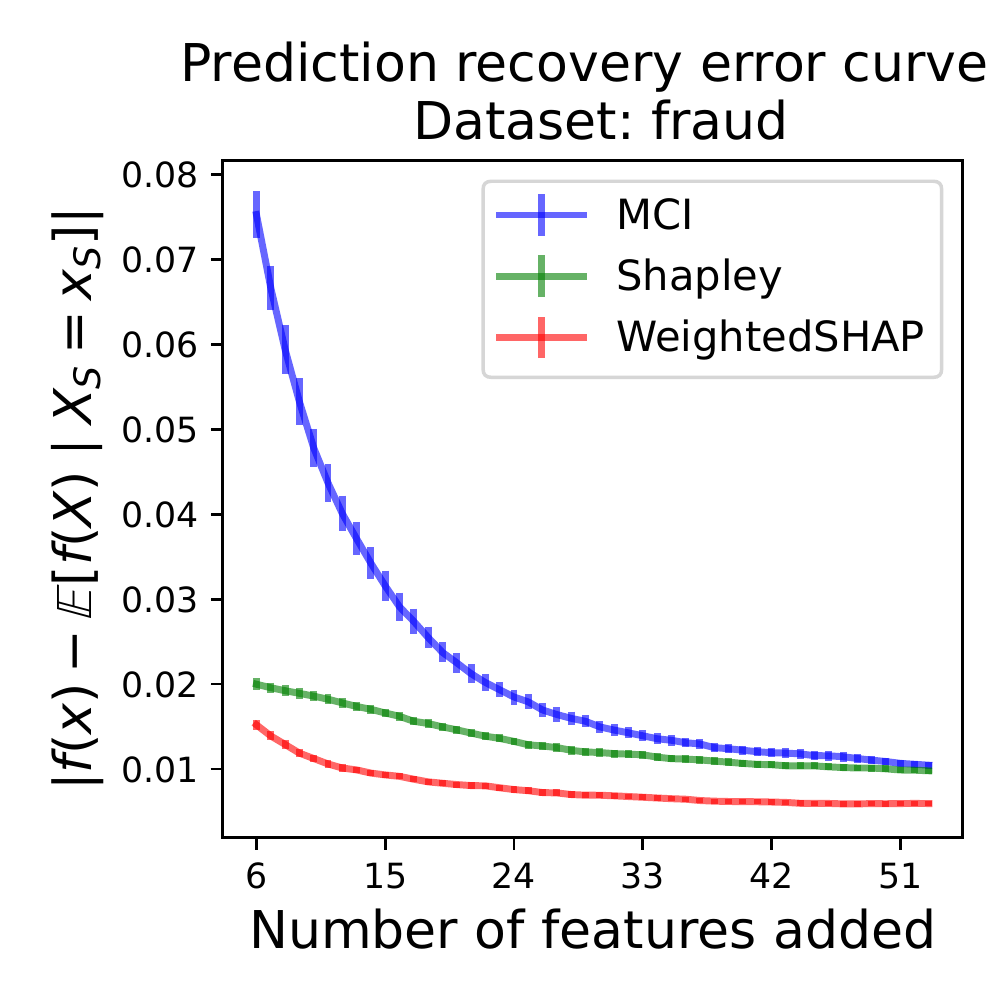}
    \includegraphics[width=0.22\textwidth]{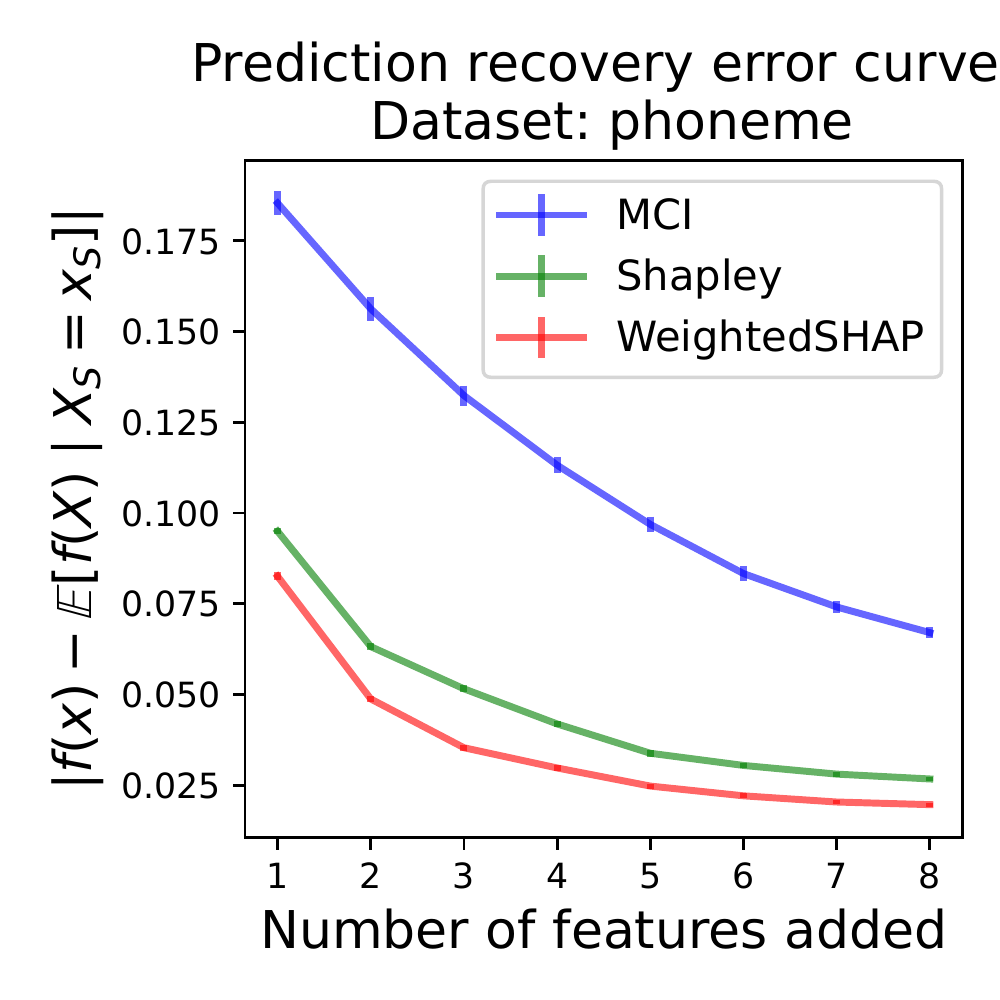}
    \includegraphics[width=0.22\textwidth]{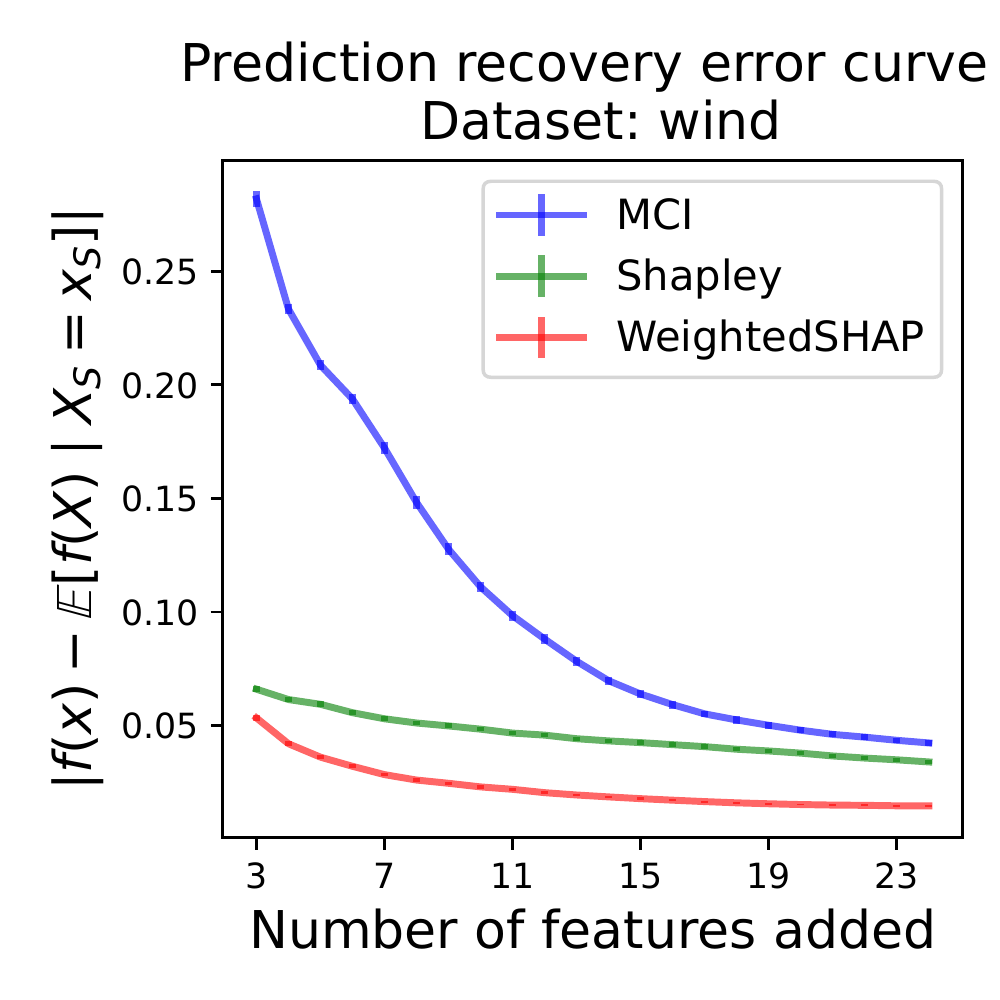}
    \includegraphics[width=0.22\textwidth]{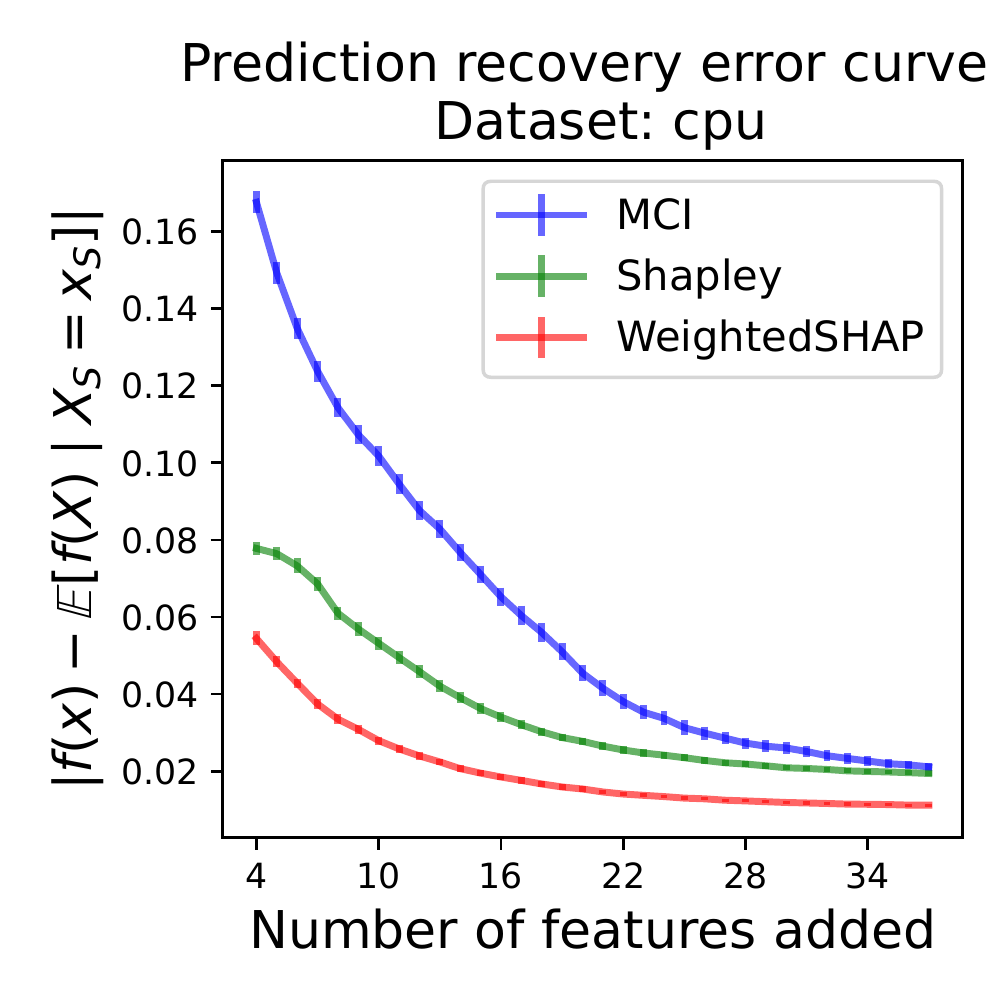}
    \label{fig:prediction_error_classification_linear}
    }
    \subfigure[Illustrations of the Inclusion AUC curve on the four binary classification datasets. The higher, the better.]{
    \includegraphics[width=0.22\textwidth]{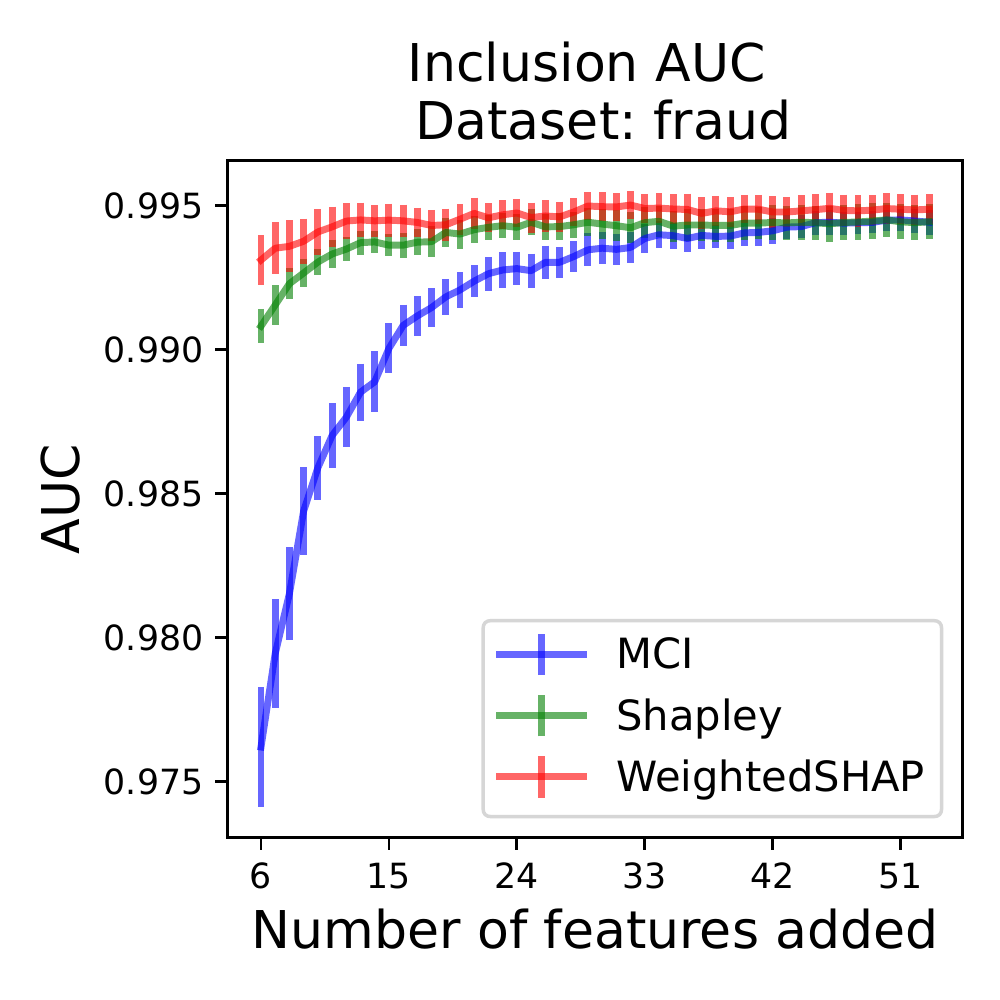}
    \includegraphics[width=0.22\textwidth]{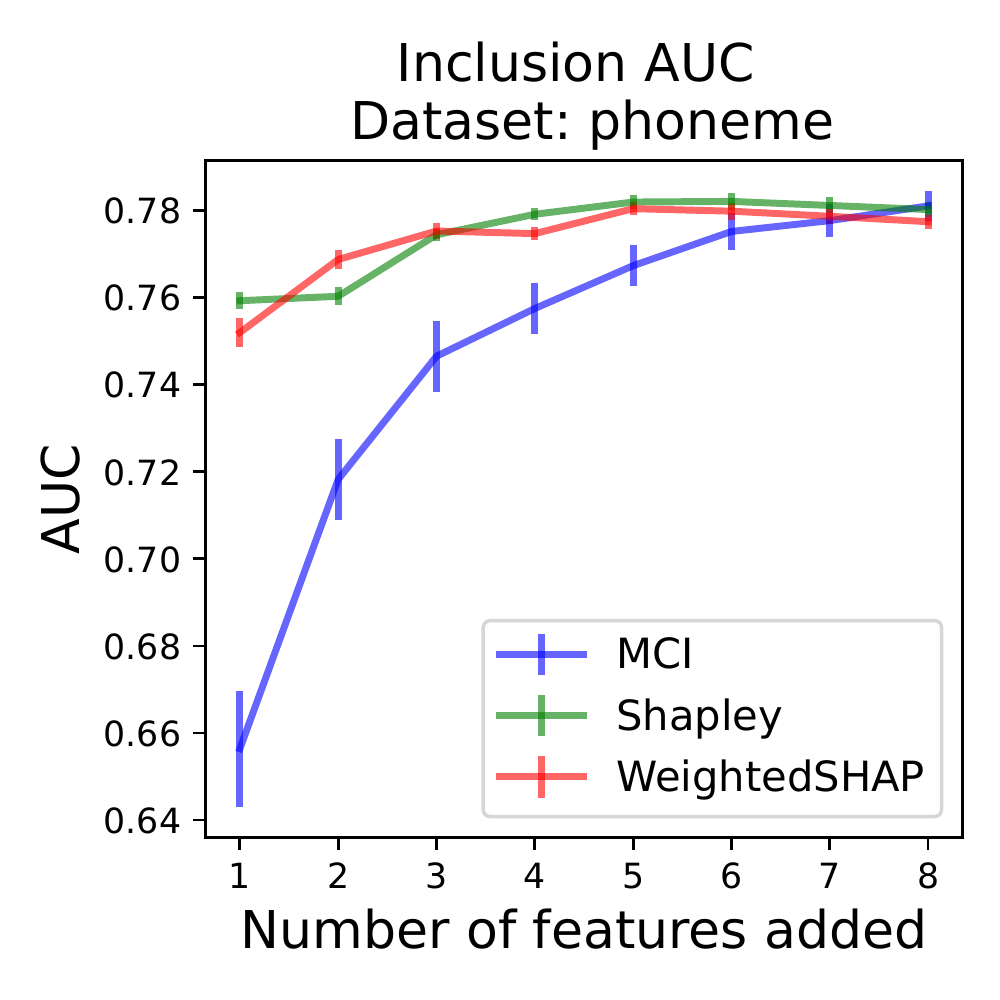}
    \includegraphics[width=0.22\textwidth]{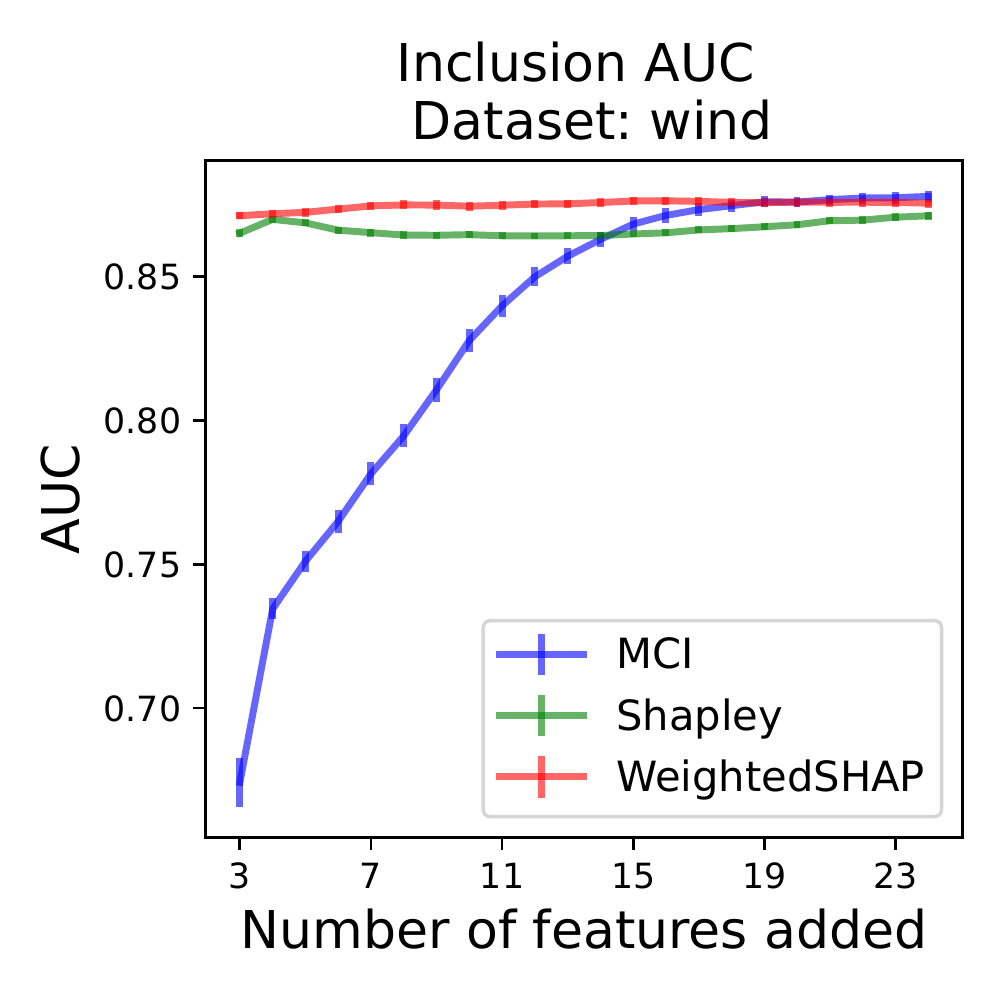}
    \includegraphics[width=0.22\textwidth]{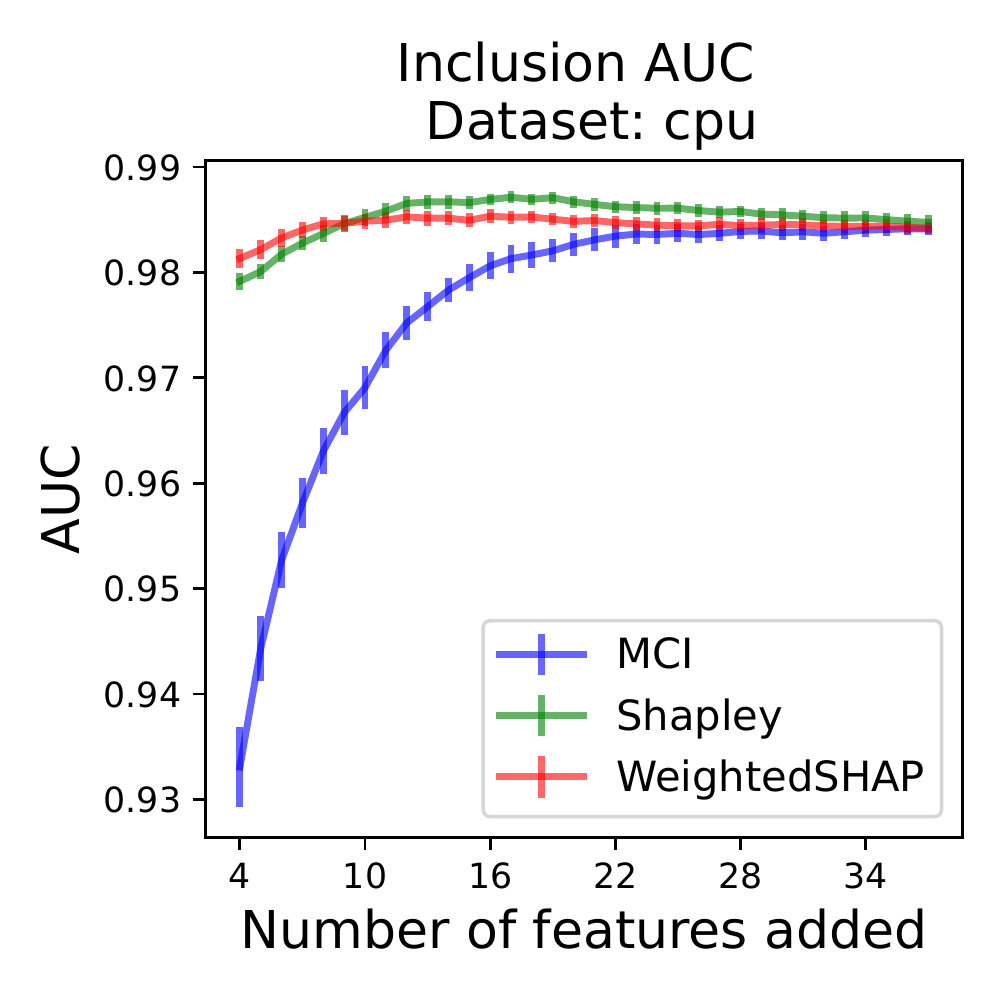}
    \label{fig:inclusion_auc_classification_linear}
    }
    \caption{\textbf{When a linear prediction model is used.} Illustrations of the prediction recovery error curve and the performance curve as a function of the number of features added. We add features from most influential to the least influential. We denote a 95\% confidence interval based on 30 independent runs. Our main findings are consistently observed with a linear prediction model.}
\end{figure}

\subsection{Additional experimental results with different classification datasets}
\label{app:additional_results_datasets}
We conduct additional experiments on different classification datasets. A boosting model is used for a prediction model. In terms of the experimental setting, the only difference from Figures~\ref{fig:prediction_error_classification_boosting} and \ref{fig:inclusion_auc_classification_boosting} is the datasets. Details on the four datasets are provided in Section~\ref{app:details_datasets}. 

Figures~\ref{fig:prediction_error_classification_boosting_additional} and~\ref{fig:inclusion_auc_classification_boosting_additional} show additional experimental results on the four different classification datasets. As in the previous experiments, the prediction recovery error of WeightedSHAP is lower than the Shapley value. As for the inclusion AUC metric, WeightedSHAP is significantly better or at least comparable to the Shapley value. 

\begin{figure}[t]
    \centering
    \subfigure[Illustrations of the prediction recovery error curve on the four binary classification datasets. The lower, the better.]{
    \includegraphics[width=0.22\textwidth]{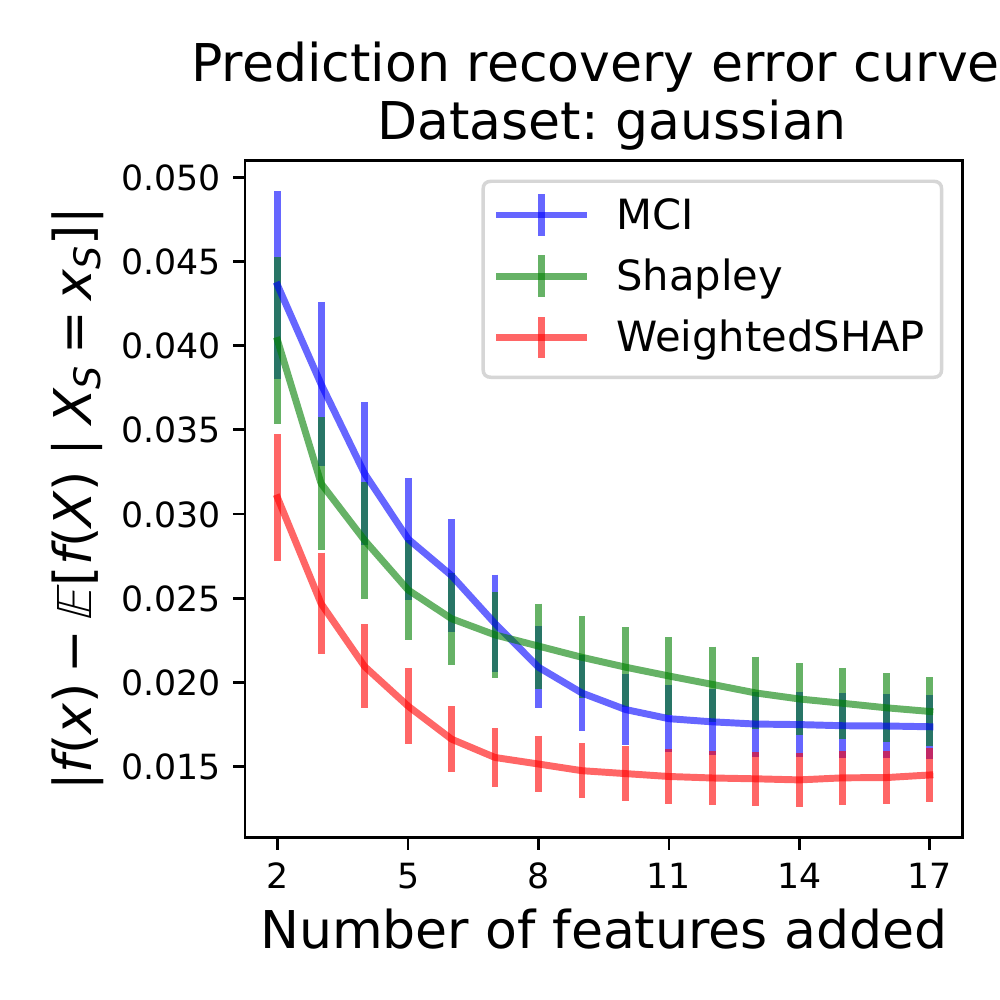}
    \includegraphics[width=0.22\textwidth]{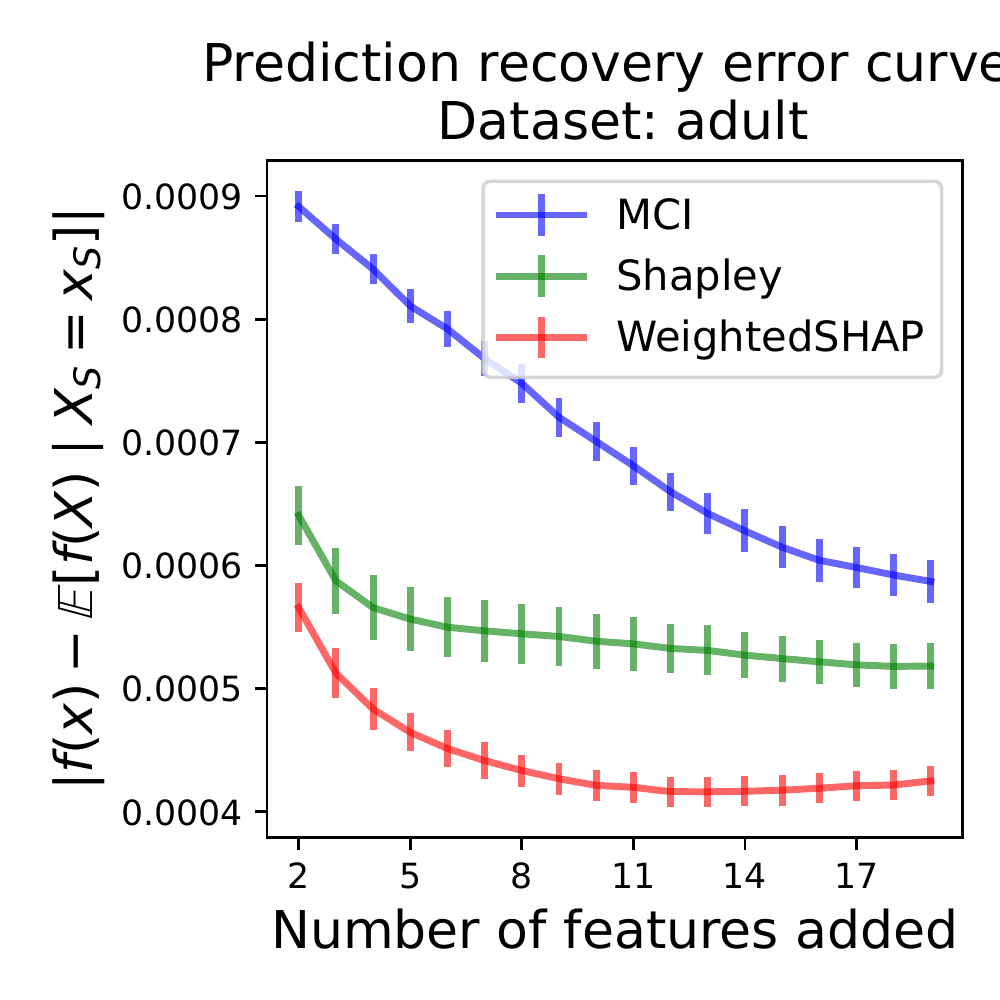}
    \includegraphics[width=0.22\textwidth]{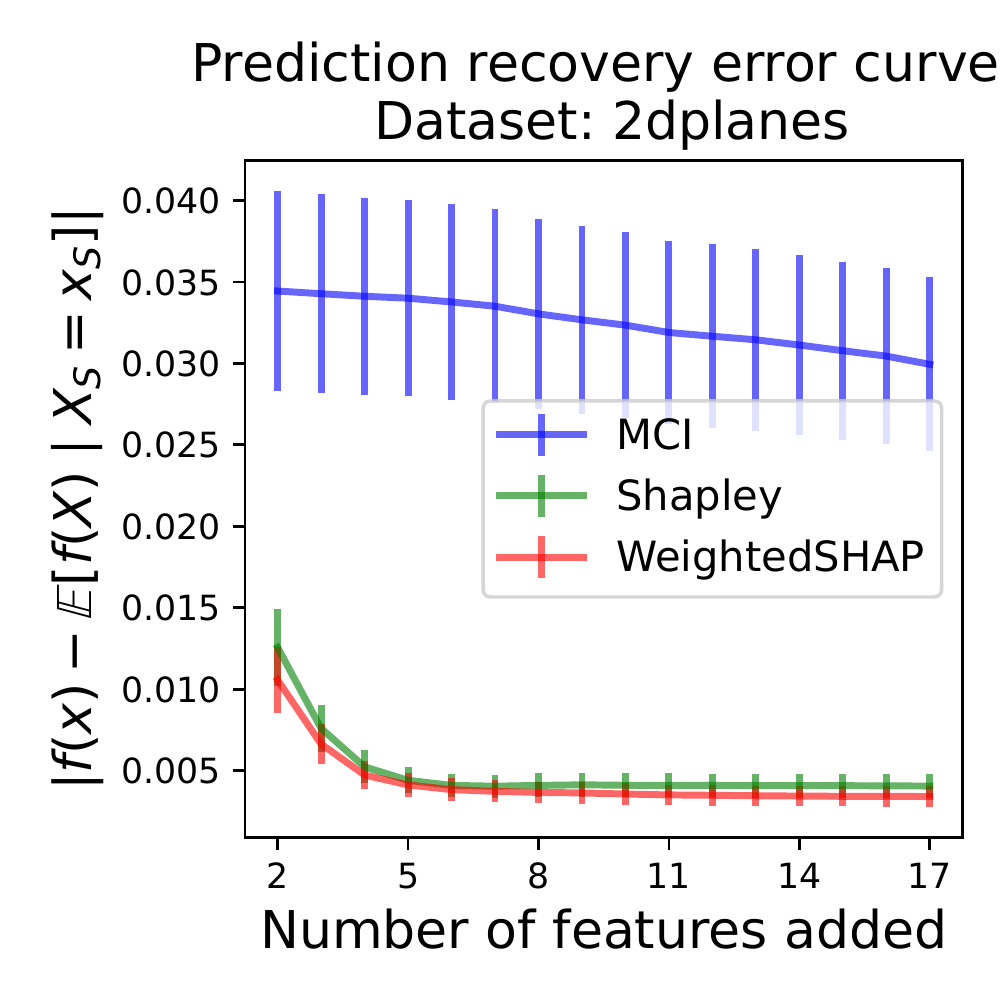}
    \includegraphics[width=0.22\textwidth]{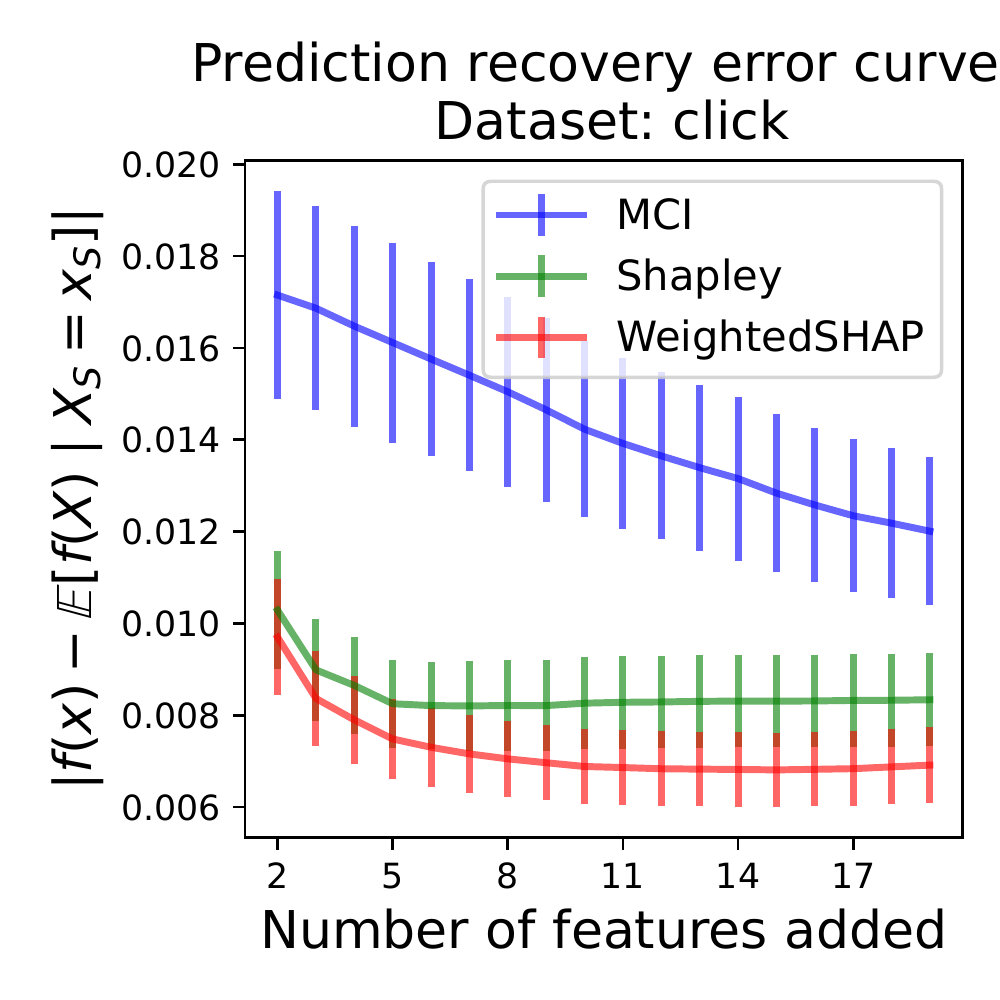}
    \label{fig:prediction_error_classification_boosting_additional}
    }
    \subfigure[Illustrations of the Inclusion AUC curve on the four binary classification datasets. The higher, the better.]{
    \includegraphics[width=0.22\textwidth]{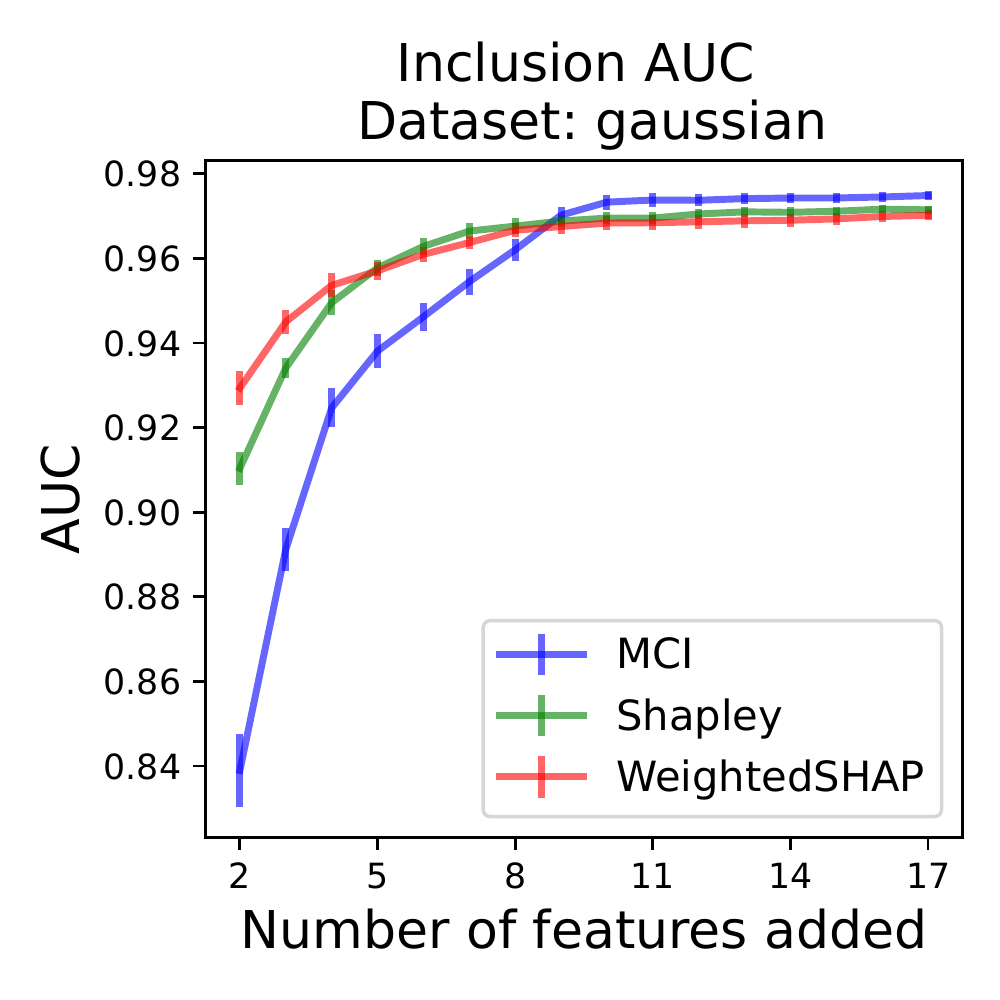}
    \includegraphics[width=0.22\textwidth]{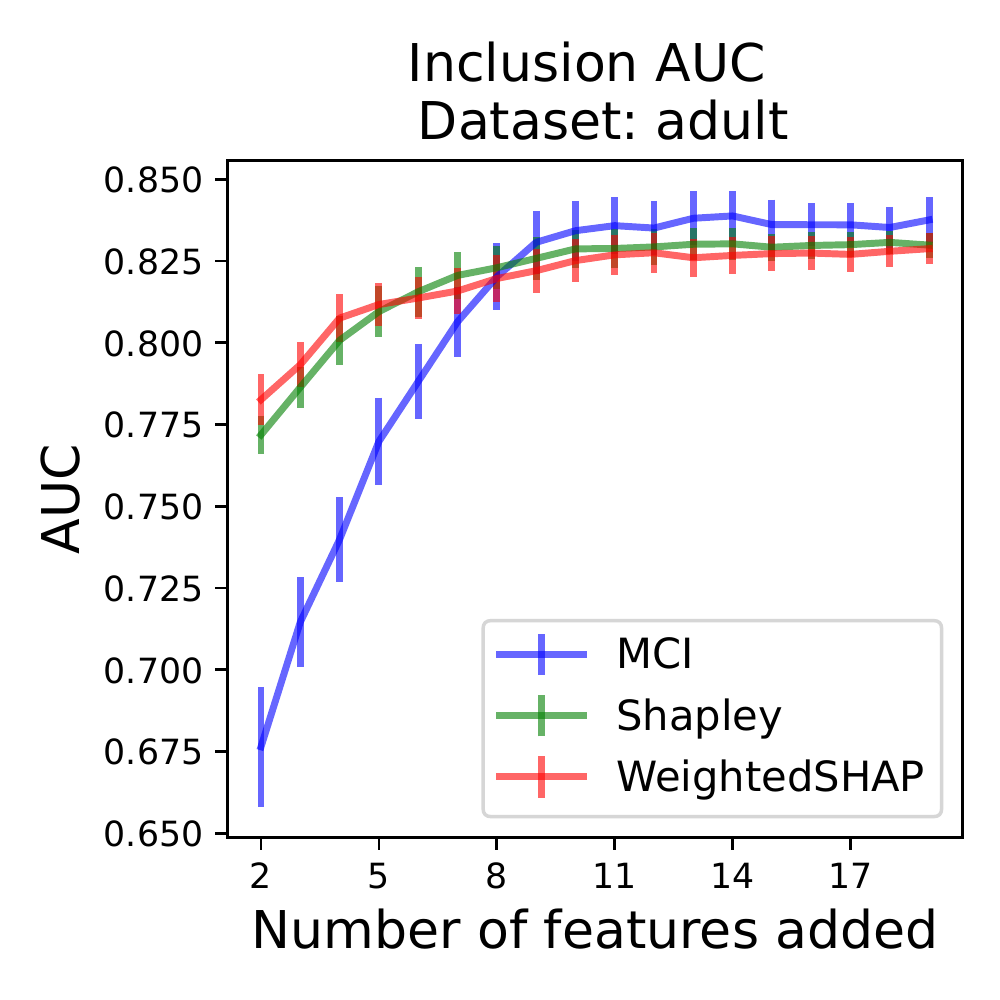}
    \includegraphics[width=0.22\textwidth]{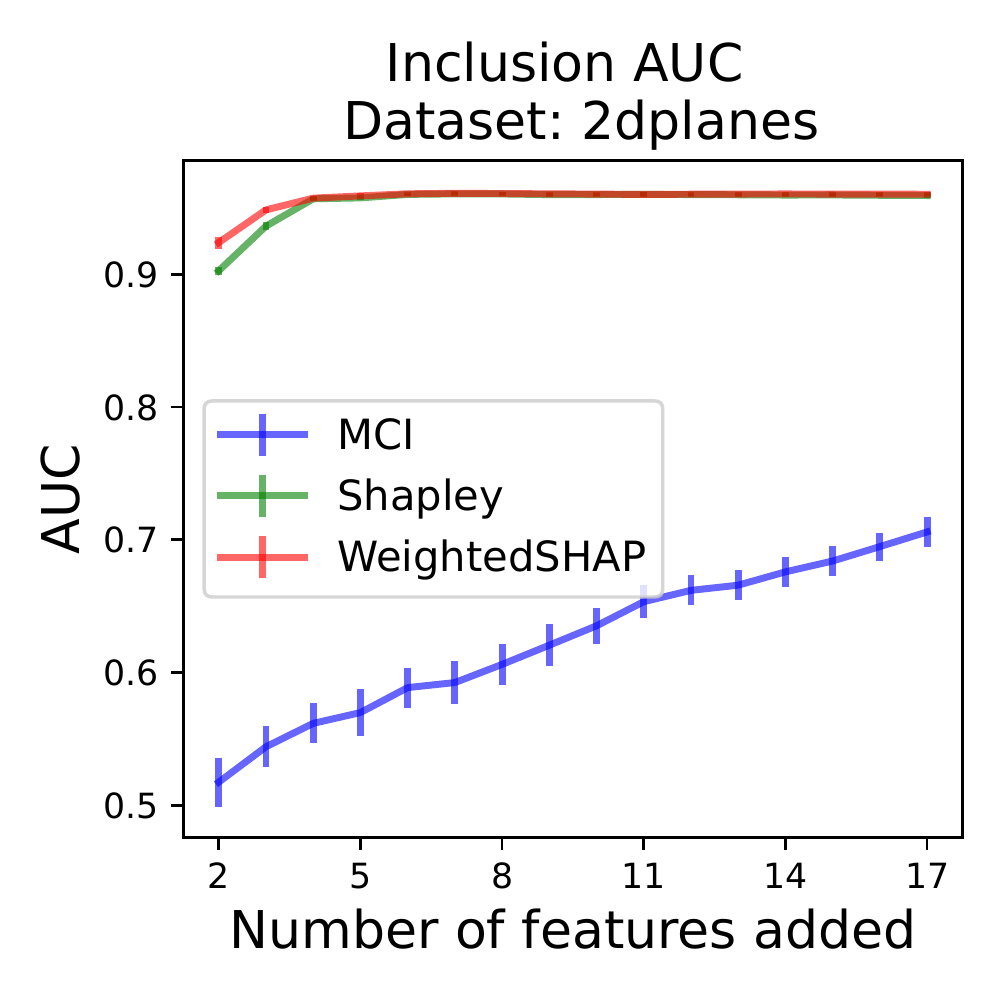}
    \includegraphics[width=0.22\textwidth]{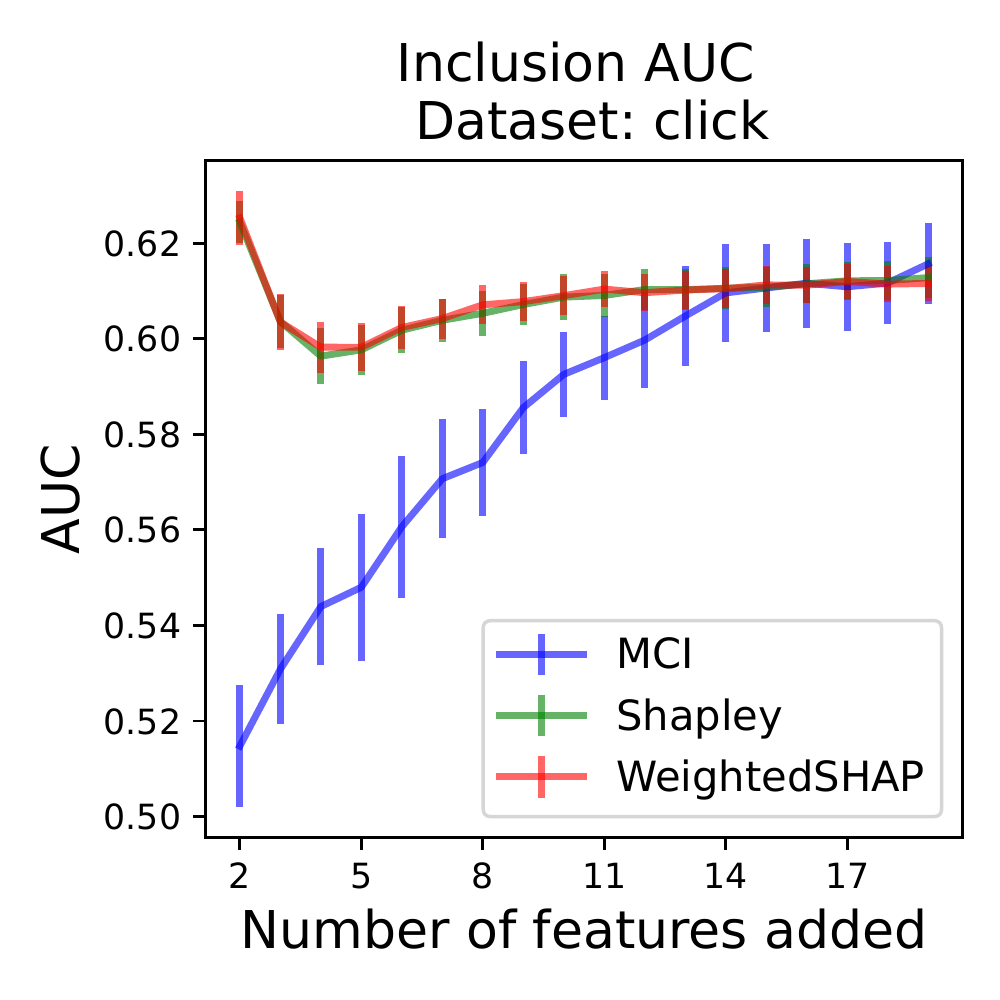}
    \label{fig:inclusion_auc_classification_boosting_additional}
    }
    \caption{\textbf{Additional results on different classification datasets.} Illustrations of the prediction recovery error curve and the performance curve as a function of the number of features added. We add features from most influential to the least influential. We denote a 95\% confidence interval based on 50 independent runs. Our main findings are consistently observed on various classification datasets.}
\end{figure}

\subsection{Additional experimental results with different evaluation metrics}
\label{app:additional_results_evaluations}
We further examine the performance of WeightedSHAP on two different evaluation metrics: the Exclusion performance used in \citet{jethani2021fastshap} and the Inclusion performance using masked features used in \citet{masoomi2021explanations}. As for the Inclusion performance in Figures~\ref{fig:masking_mse_regression_boosting} and ~\ref{fig:masking_auc_classification_boosting}, the main difference from Figures~\ref{fig:inclusion_auc_regression_boosting} and~\ref{fig:inclusion_auc_classification_boosting} is that it considers model predictions based on mean-masked features whenever new features are added, not the conditional expectation. 
For each evaluation metric, we use different utility functions in the weight optimization \eqref{eqn:weightedSHAP}. Specifically, for the Exclusion task, we use \begin{align*}
    \mathcal{T}(\phi) = \sum_{k=1} ^d \left| \hat{f}(x)-\mathbb{E}[\hat{f}(X) \mid X_{\mathcal{J}(k; \phi, x)}=x_{\mathcal{J}(k; \phi, x)} ] \right|,
\end{align*}
where $\mathcal{J}(k; \phi, x) := [d] \backslash \mathcal{I}(k; \phi, x)$ and $\mathcal{I}(k; \phi, x) \subseteq [d]$ be a set of $k$ integers that indicates $k$ most influential features based on their absolute value $|\phi(x_j)|$. That is, it adds the least important feature first. This is equivalent to removing the most important feature from the entire set of features. As for the Inclusion task, we consider the following quantity.
\begin{align*}
    \mathcal{T}(\phi) = \sum_{k=1} ^d \left| \hat{f}(x)- \hat{f}( x_{\mathcal{I}(k; \phi, x)}, \mu_{\mathcal{J}(k; \phi, x)} ) \right|,
\end{align*}
where $\mu$ is a pre-computed mean of the entire features. The main idea is that $\mu$ part is non-informative to a particular prediction, but after replacing the $\mu$ part with influential features, the quantity $\left| \hat{f}(x)- \hat{f}( x_{\mathcal{I}(k; \phi, x)}, \mu_{\mathcal{J}(k; \phi, x)} ) \right|$ is expected to be reduced. 

We note that there is not an agreed-upon objective metric for ML interpretability, and at the same time, practitioners always can choose their downstream objectives depending on their tasks.
Given that the concept of influential features can be dependent on downstream tasks, we believe the attribution should be optimized to downstream objectives. 
For instance, there is no single attribution method that works universally well simultaneously on the Inclusion AUC and Exclusion AUC tasks, and the optimal attribution depends on the evaluation metric. Our WeightedSHAP is anticipated to be flexible in optimizing downstream objectives and finds the most suitable attributions. In contrast, the Shapley value is fixed to every downstream task.

Figures~\ref{fig:exclusion_mse_regression_boosting} and~\ref{fig:masking_mse_regression_boosting} (\textit{resp.} Figures~\ref{fig:exclusion_auc_classification_boosting} and~\ref{fig:masking_auc_classification_boosting}) compare MSE (\textit{resp.} AUC) curves of the three different attribution methods. In most experimental settings, as anticipated, WeightedSHAP substantially outperforms the MCI and the Shapley value. With flexible choices of the utility function, WeightedSHAP shows significantly better performances than standard attribution methods across different downstream objectives.

\begin{figure}[t]
    \centering
    \subfigure[Illustrations of the Exclusion MSE on the four regression datasets. The higher, the better.]{
    \includegraphics[width=0.22\textwidth]{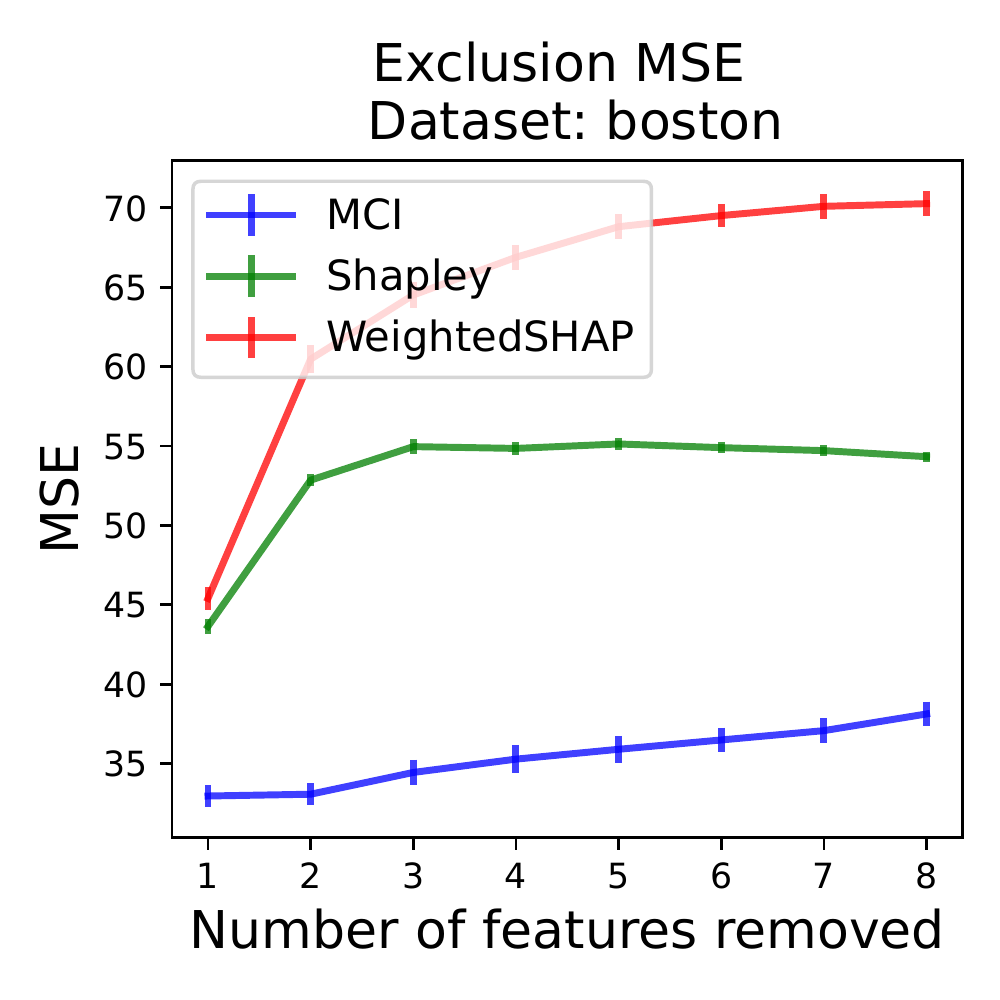}
    \includegraphics[width=0.22\textwidth]{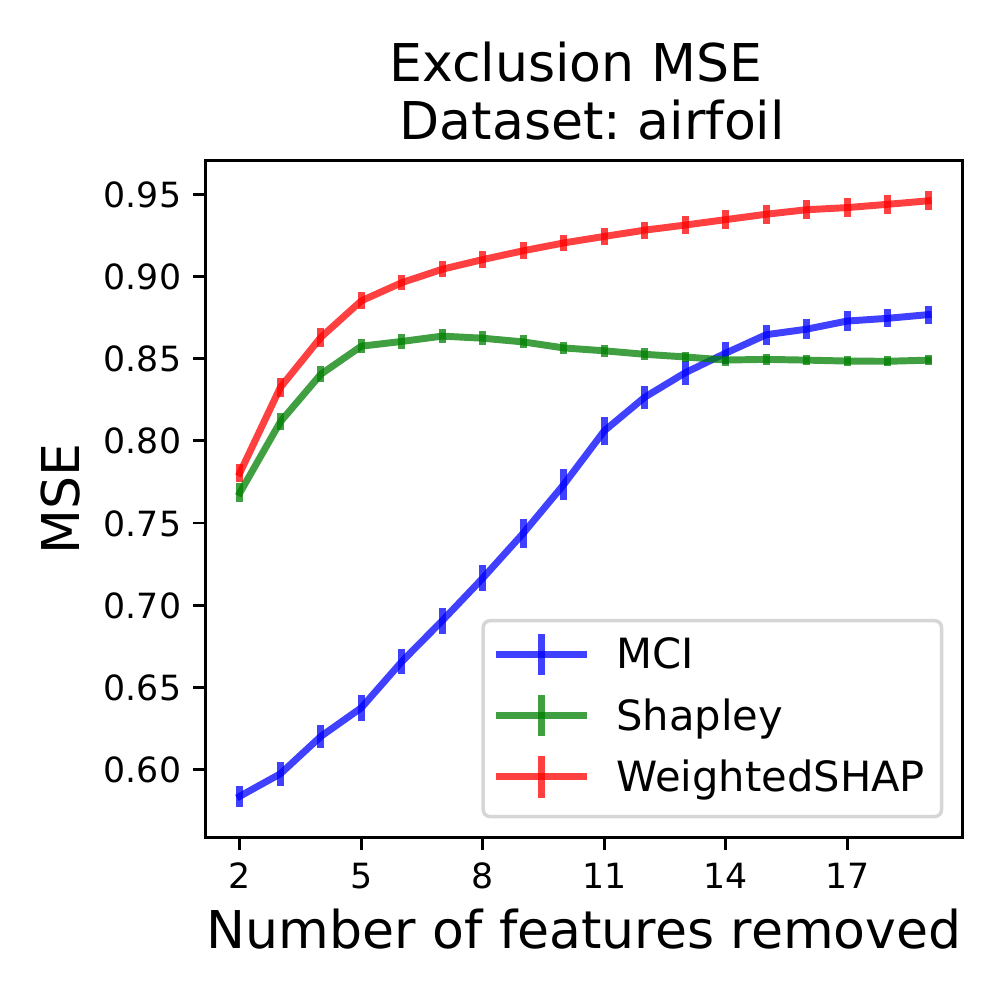}
    \includegraphics[width=0.22\textwidth]{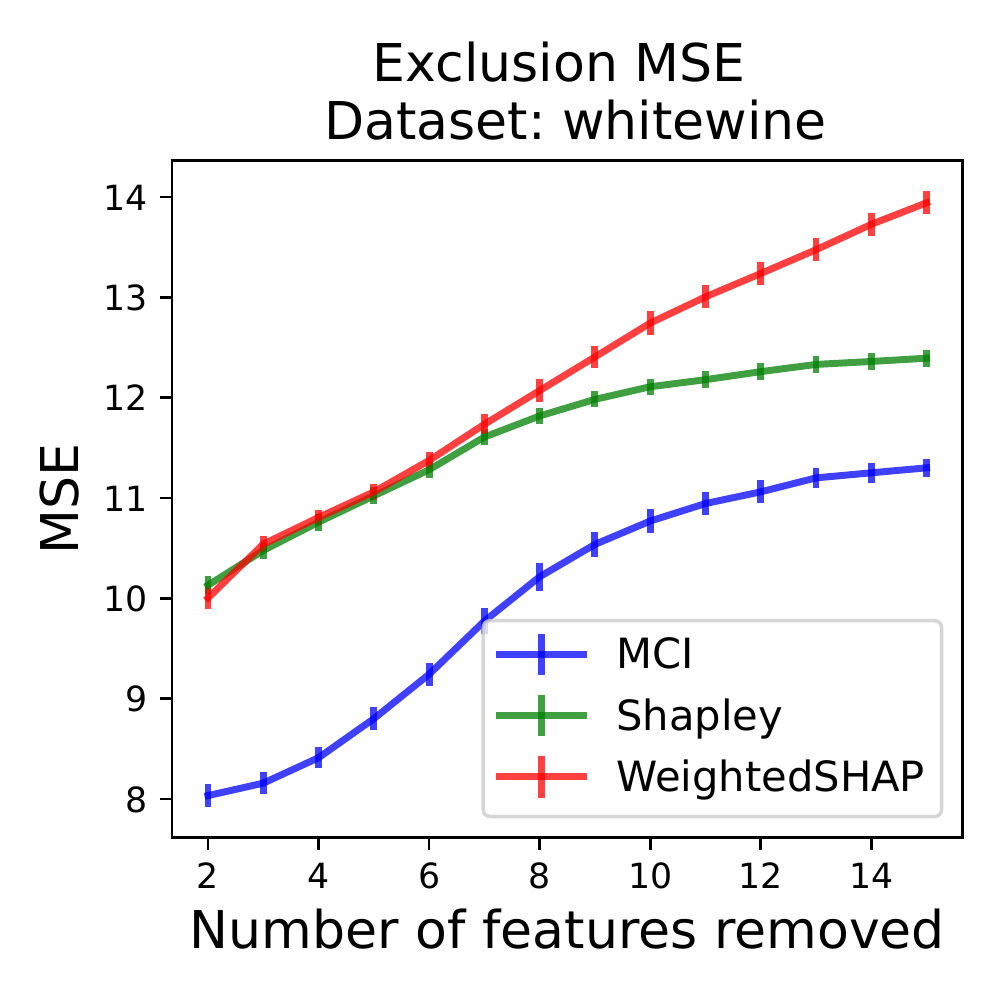}
    \includegraphics[width=0.22\textwidth]{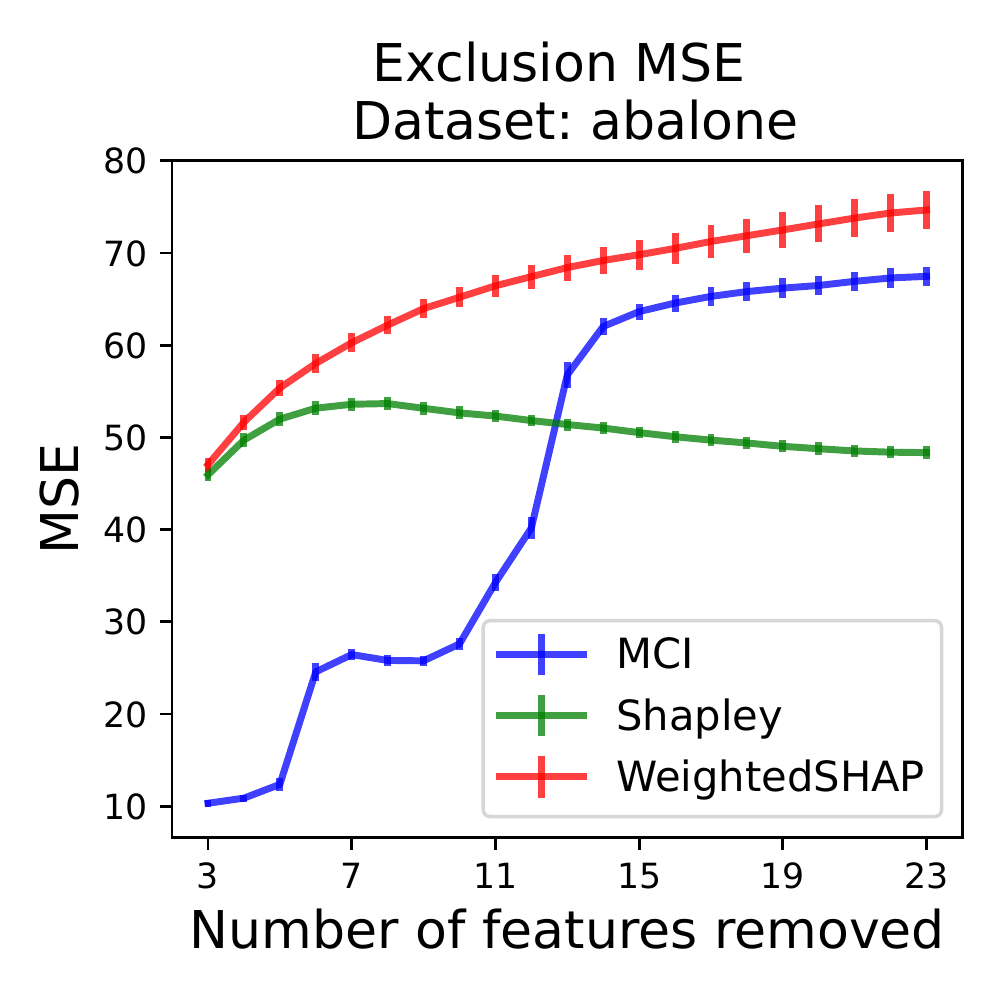}
    \label{fig:exclusion_mse_regression_boosting}
    }
    \subfigure[Illustrations of the Inclusion MSE curve using masked features on the four regression datasets. The lower, the better.]{
    \includegraphics[width=0.22\textwidth]{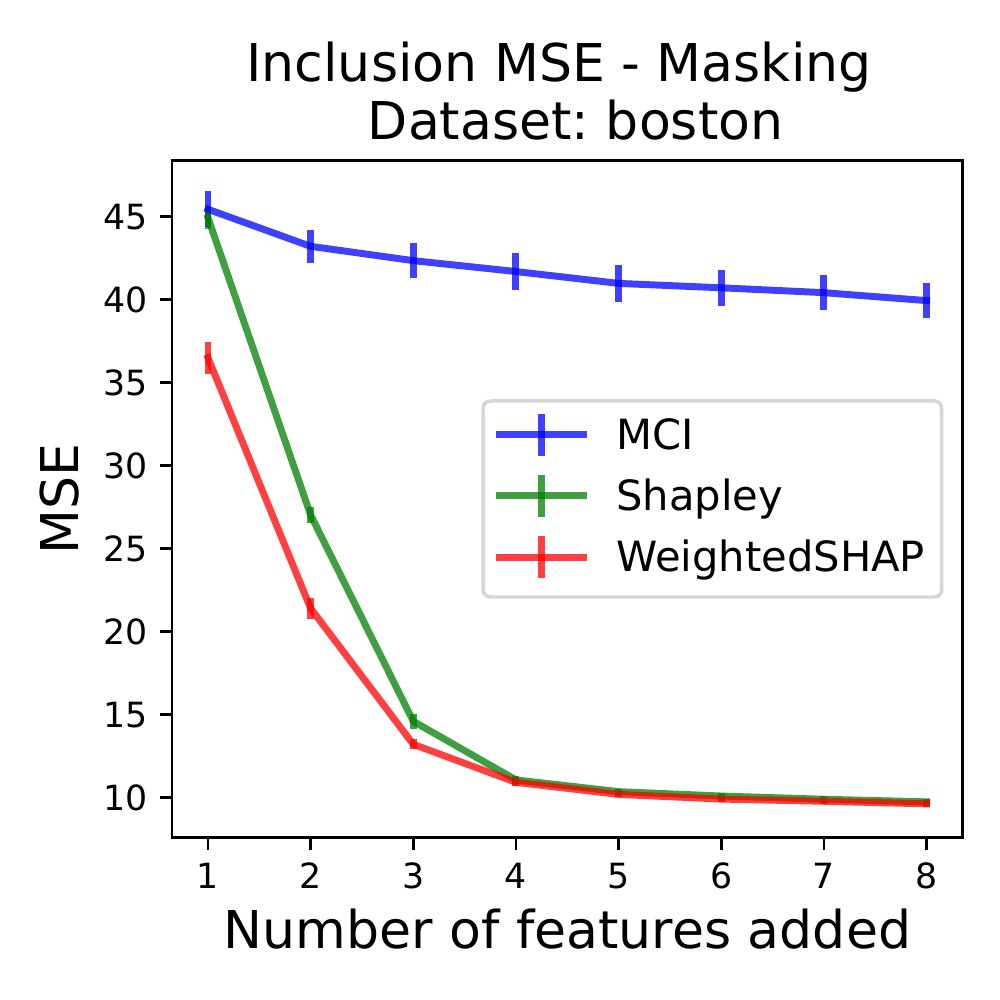}
    \includegraphics[width=0.22\textwidth]{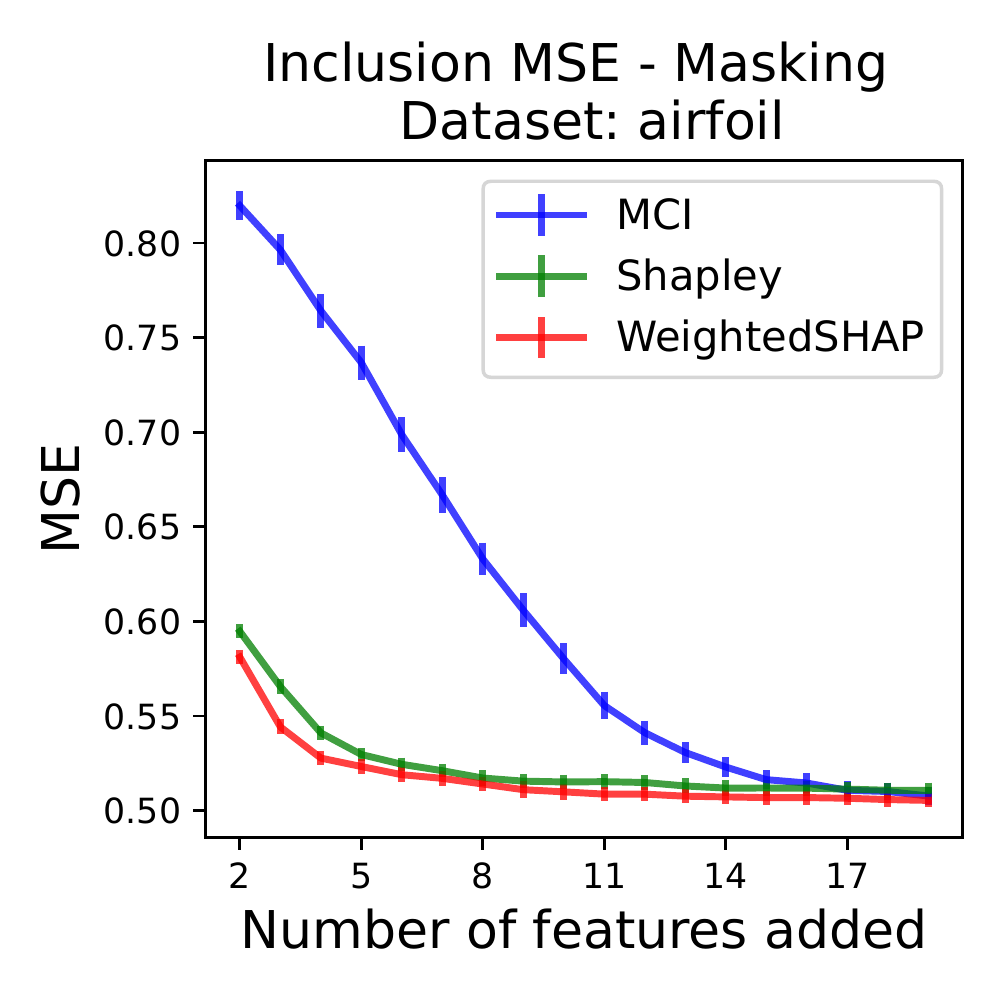}
    \includegraphics[width=0.22\textwidth]{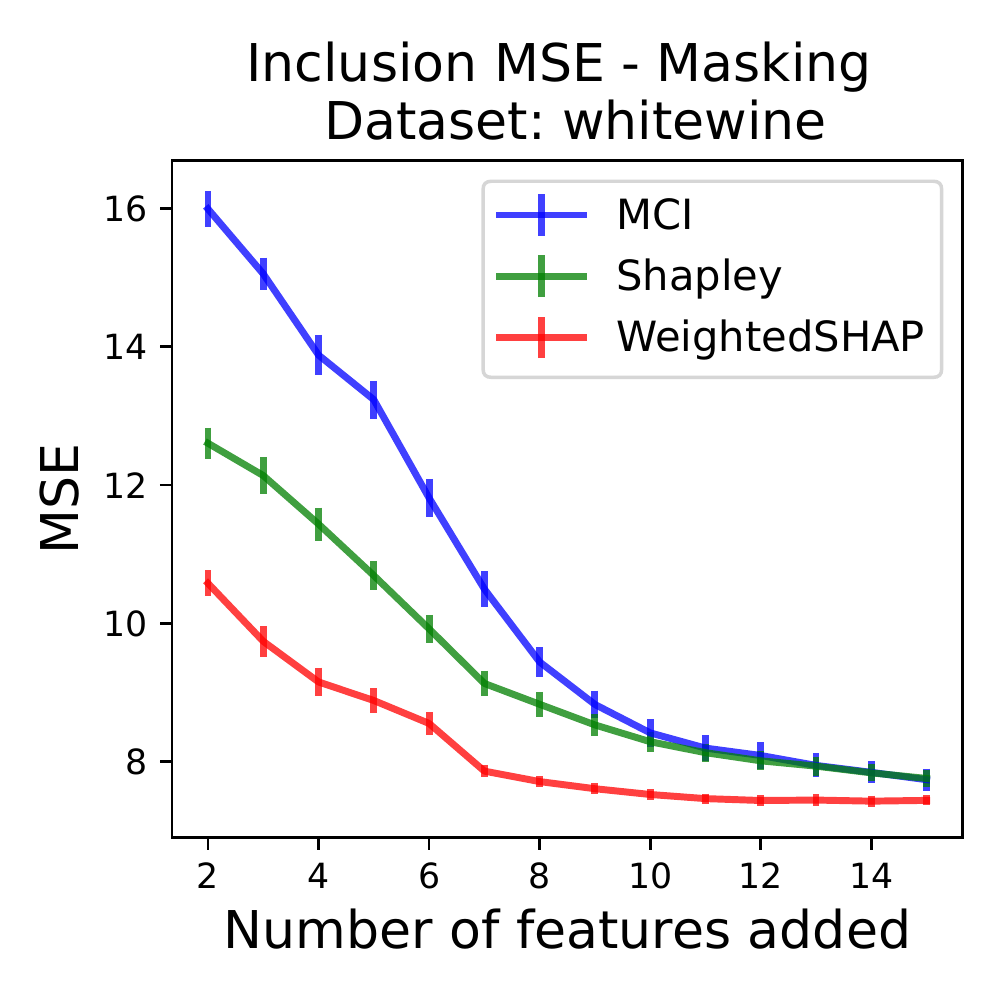}
    \includegraphics[width=0.22\textwidth]{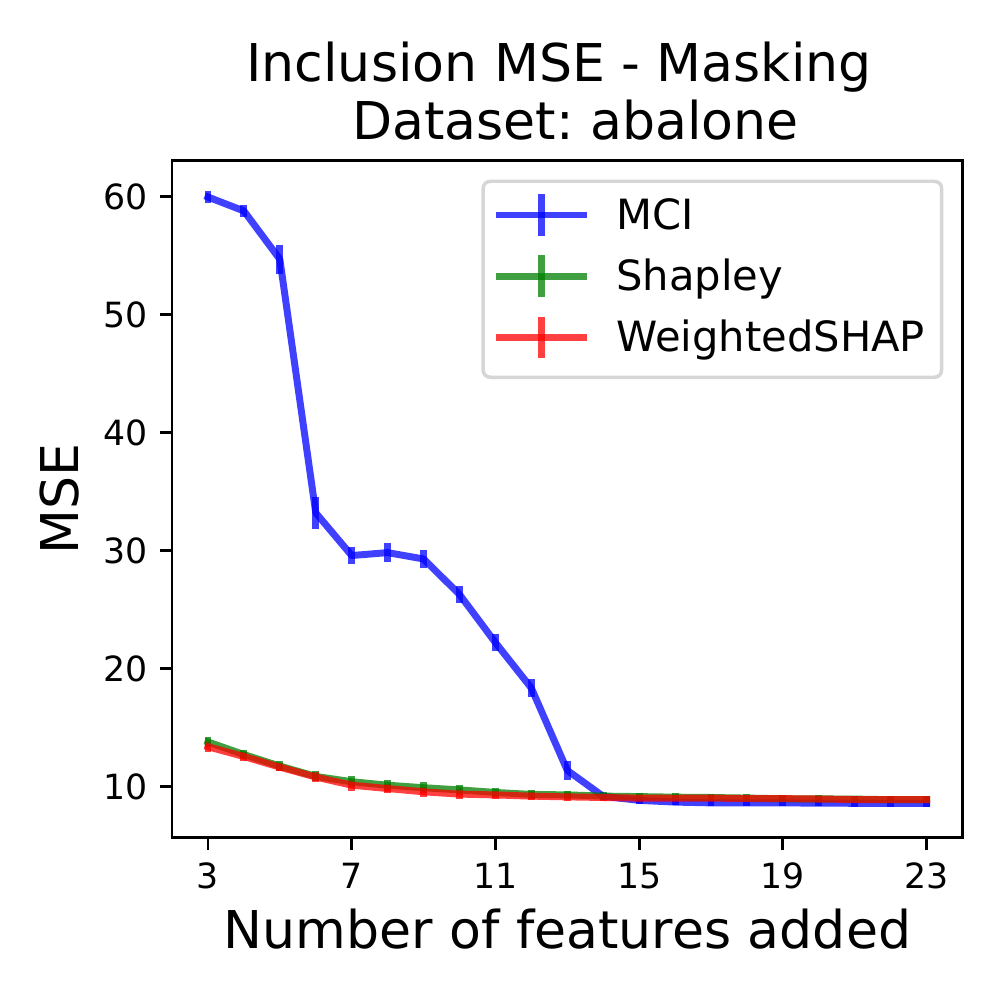}
    \label{fig:masking_mse_regression_boosting}
    }
    \subfigure[Illustrations of the Exclusion AUC curve on the four binary classification datasets. The lower, the better.]{
    \includegraphics[width=0.22\textwidth]{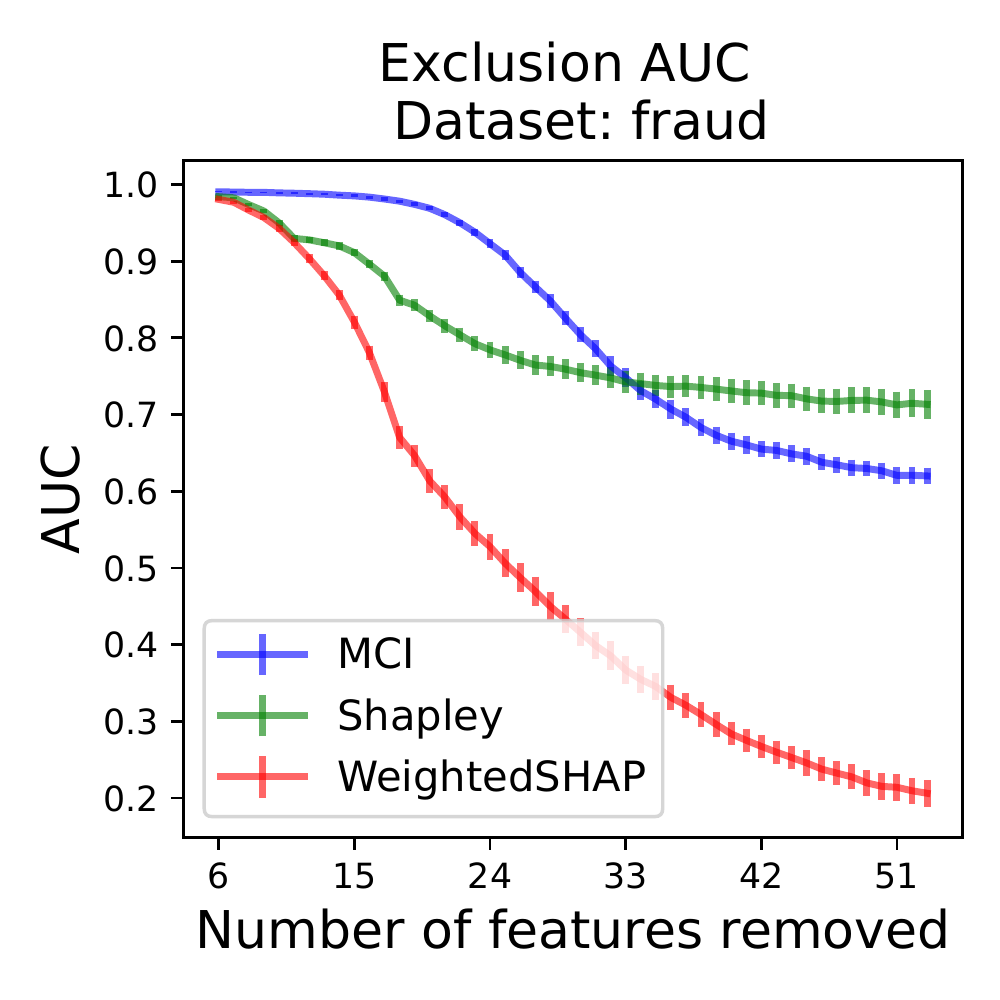}
    \includegraphics[width=0.22\textwidth]{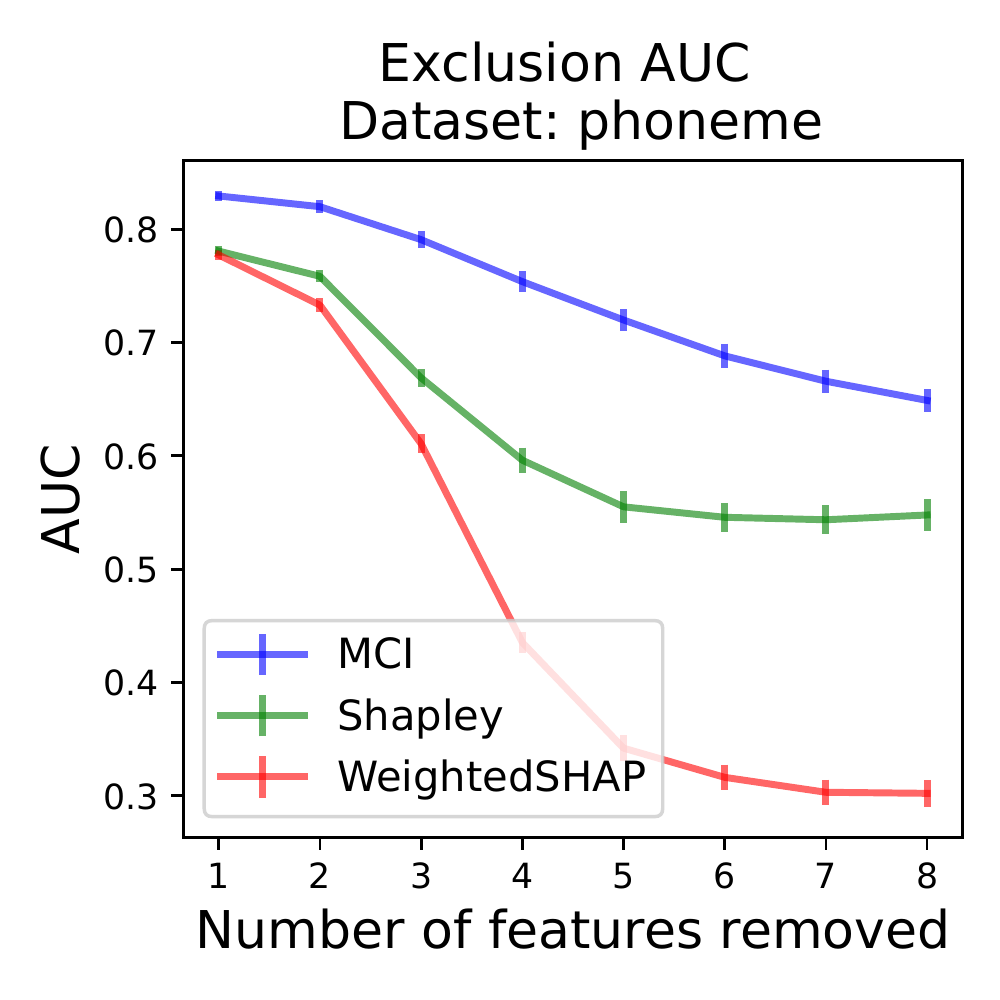}
    \includegraphics[width=0.22\textwidth]{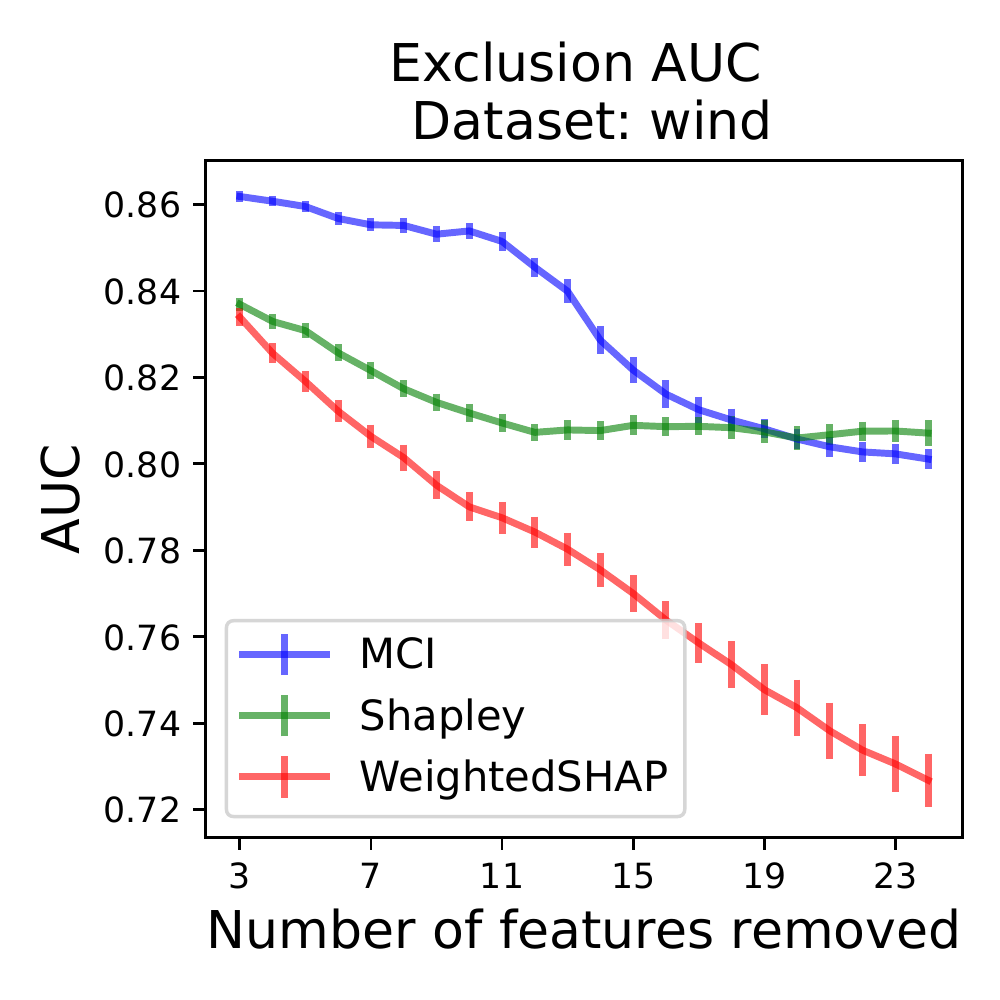}
    \includegraphics[width=0.22\textwidth]{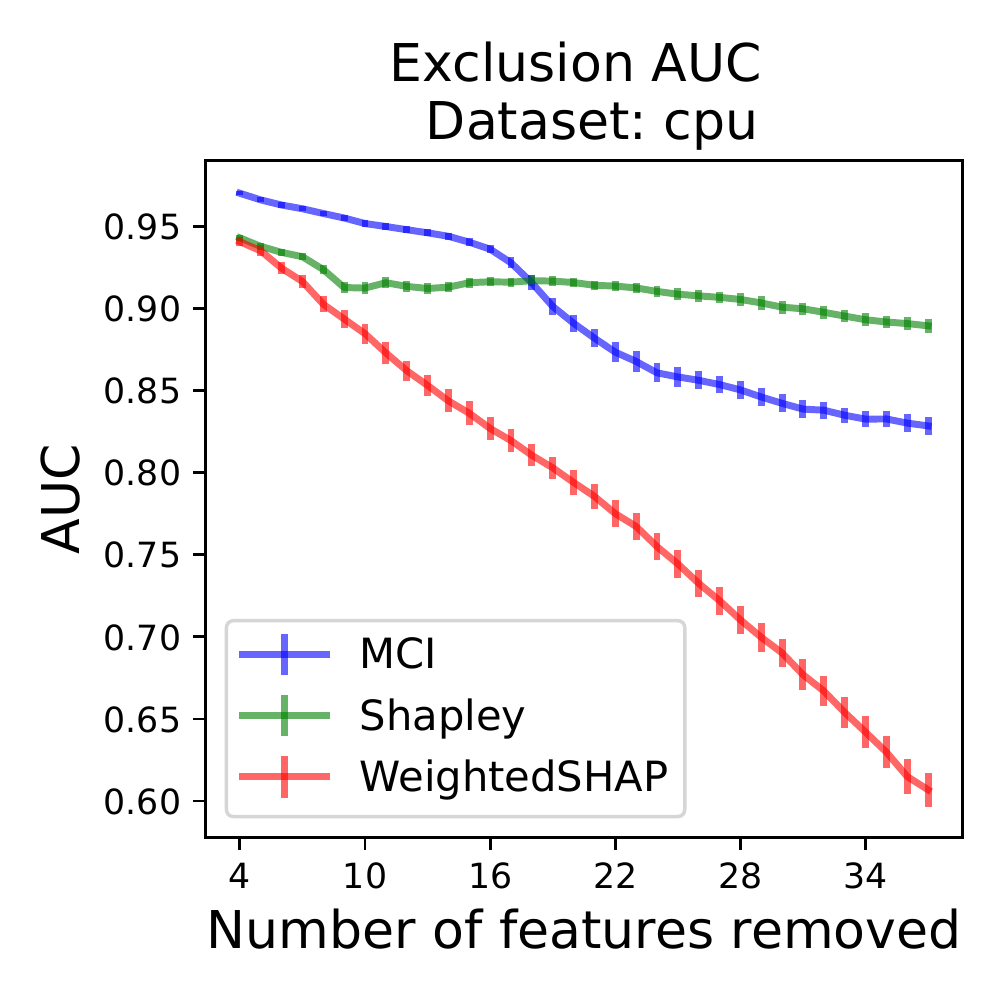}
    \label{fig:exclusion_auc_classification_boosting}
    }
    \subfigure[Illustrations of the Inclusion AUC curve using masked features on the four binary classification datasets. The higher, the better.]{
    \includegraphics[width=0.22\textwidth]{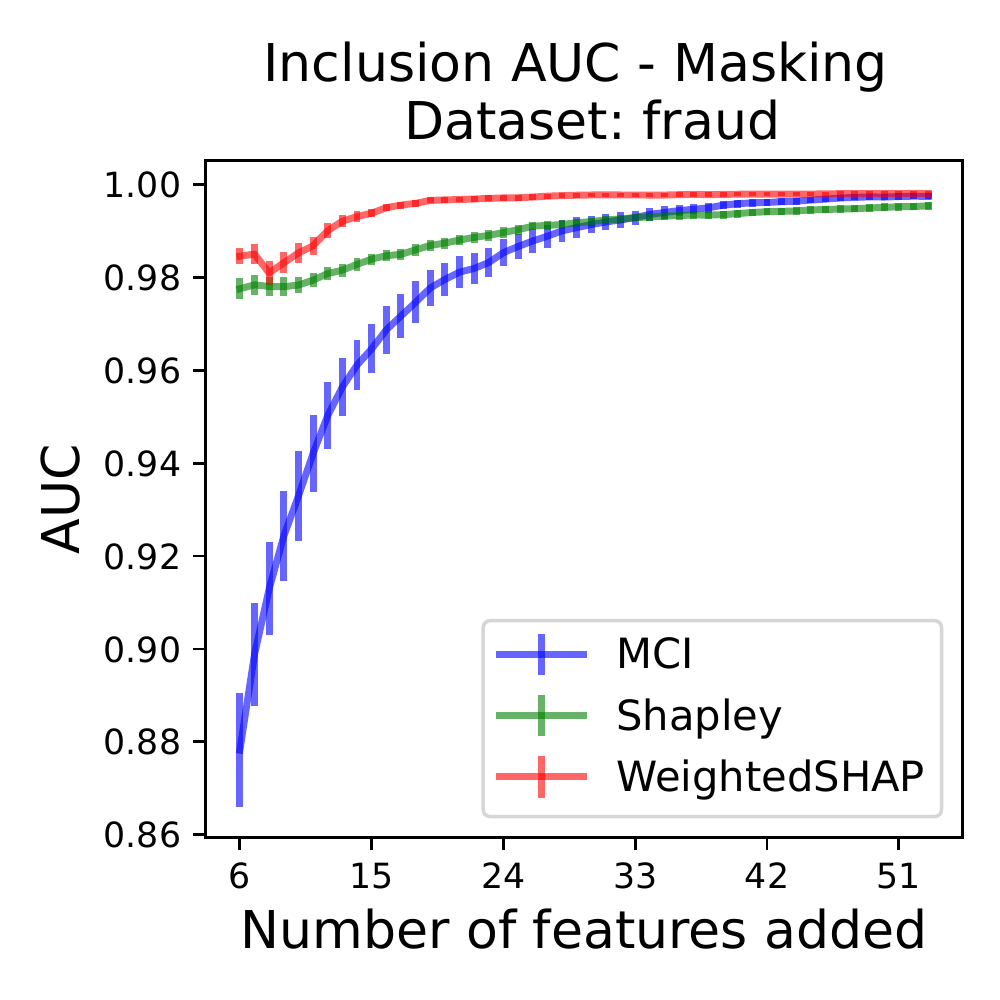}
    \includegraphics[width=0.22\textwidth]{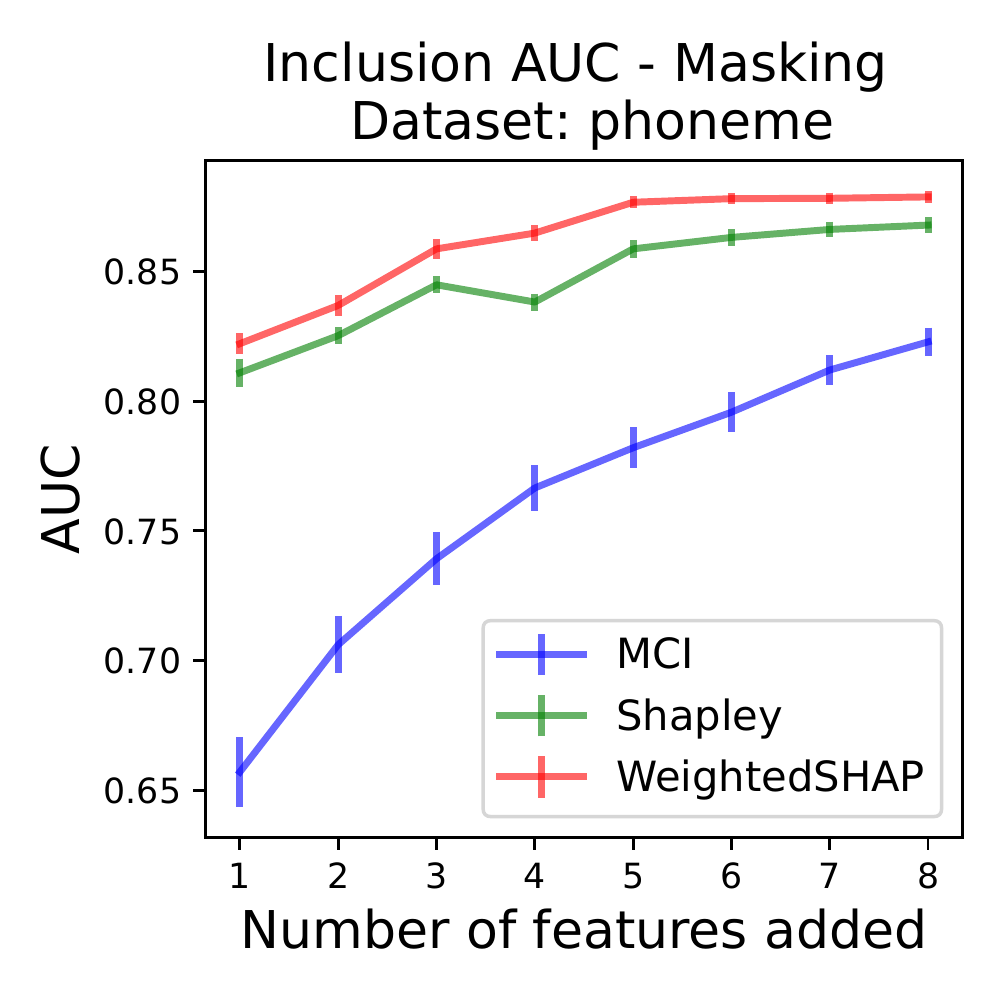}
    \includegraphics[width=0.22\textwidth]{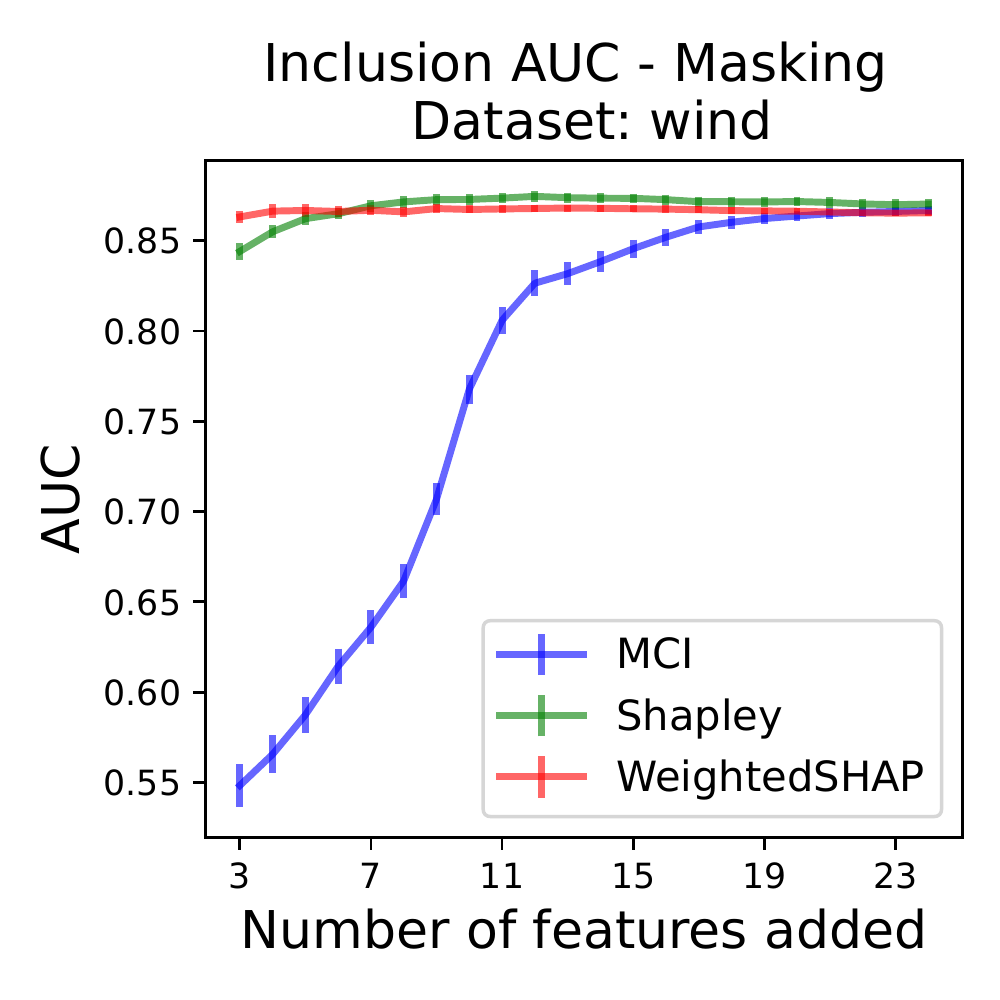}
    \includegraphics[width=0.22\textwidth]{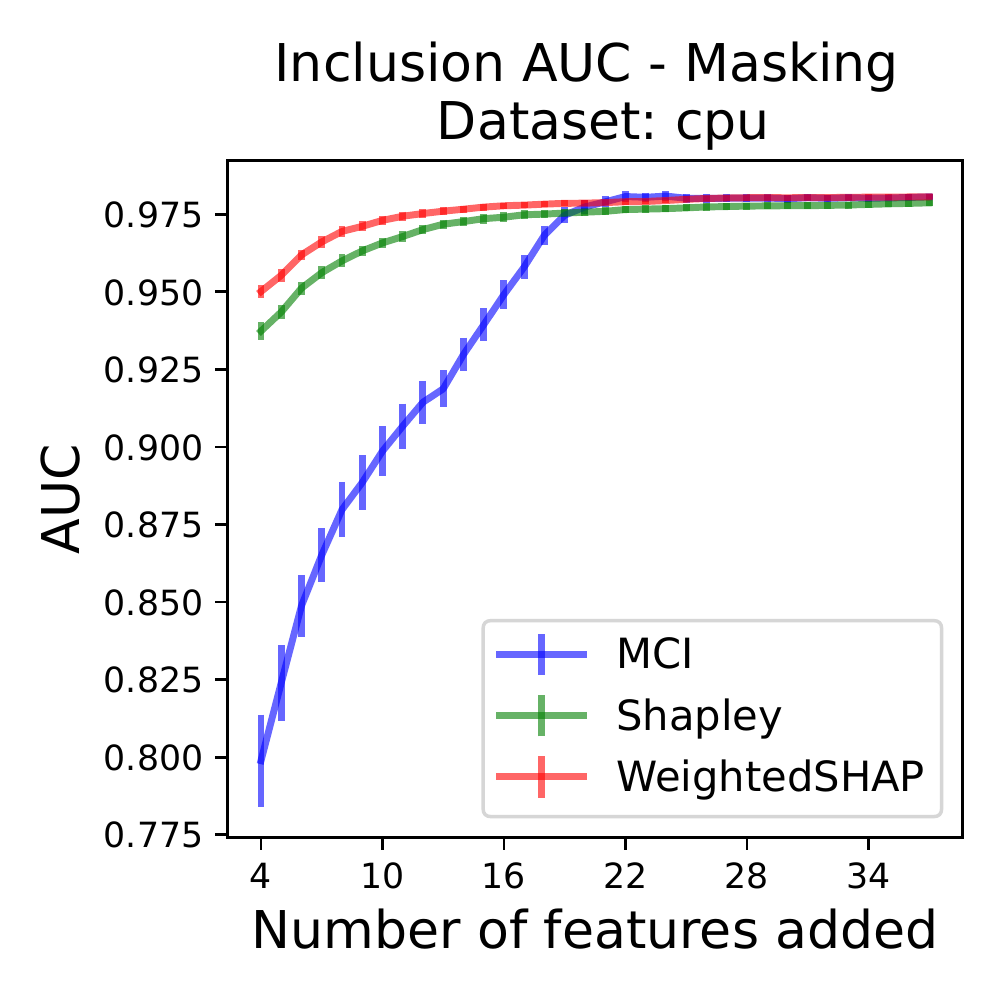}
    \label{fig:masking_auc_classification_boosting}
    }
    \caption{\textbf{WeightedSHAP on different evaluation metrics.} Illustrations of the Exclusion performance curve (\textit{resp.} Inclusion performance curve using masked features) as a function of the number of removed (\textit{resp.} added). We remove (\textit{resp.} add) features from most influential to the least influential. We denote a 95\% confidence interval based on 50 independent runs. WeightedSHAP significantly outperforms the MCI and the Shapley value on different evaluation metrics.}
    \label{fig:different_evaluation_metrics}
\end{figure}

\bibliographystyle{unsrtnat}
\bibliography{ref}

\end{document}